\newtheorem{problem}{Problem}
\newtheorem{definition}{Definition}
\newtheorem{lemma}{Lemma}
\newtheorem{theorem}{Theorem}
\newcommand{\method}{\textsc{Dash}\xspace}
\newcommand{\dpar}{\textsc{DPar2}\xspace}
\newcommand{\rd}{\textsc{RD}\xspace}
\newcommand{\als}{\textsc{PARAFAC2-ALS}\xspace}
\newcommand{\spartan}{\textsc{SPARTan}\xspace}
\newcommand{\spade}{\textsc{SPADE}\xspace}
\newcommand{\footnoteref}[1]{\textsuperscript{\ref{#1}}}
\newcommand{\hide}[1]{}
\newcommand{\mat}[1]{\mathbf{#1}}
\newcommand*{\QEDB}{\hfill\ensuremath{\Box}}%
\newcommand{\T}[1]{\boldsymbol{\mathscr{#1}}}
\newcommand{\subfloat}{\subfigure}
\newcommand\blue[1]{\textcolor{blue}{#1}}
\begin{document}

\title{Fast and Accurate Dual-Way Streaming PARAFAC2 for Irregular Tensors - Algorithm and Application}

\author{Jun-Gi Jang}
\email{elnino4@snu.ac.kr}
\affiliation{%
  \institution{Seoul National University}
  \city{Seoul}  
  \country{Republic of Korea}
}
\author{Jeongyoung Lee}
\email{ljklee@snu.ac.kr}
\affiliation{%
  \institution{Seoul National University}
  \city{Seoul}  
  \country{Republic of Korea}
}
\author{Yong-chan Park}
\email{wjdakf3948@snu.ac.kr}
\affiliation{%
  \institution{Seoul National University}
  \city{Seoul}  
  \country{Republic of Korea}
}
\author{U Kang}
\email{ukang@snu.ac.kr}
\affiliation{%
  \institution{Seoul National University}
  \city{Seoul}  
  \country{Republic of Korea}
}

\renewcommand{\shortauthors}{Jang, et al.}

\begin{abstract}
 How can we efficiently and accurately analyze an irregular tensor in a dual-way streaming setting where the sizes of two dimensions of the tensor increase over time?
What types of anomalies are there in the dual-way streaming setting?
An irregular tensor is a collection of matrices whose column lengths are the same while their row lengths are different.
In a dual-way streaming setting, both new rows of existing matrices and new matrices arrive over time.
PARAFAC2 decomposition is a crucial tool for analyzing irregular tensors.
Although real-time analysis is necessary in the dual-way streaming,
static PARAFAC2 decomposition methods fail to efficiently work in this setting since they perform PARAFAC2 decomposition for accumulated tensors whenever new data arrive.
Existing streaming PARAFAC2 decomposition methods work in a limited setting and fail to handle new rows of matrices efficiently.

In this paper,
we propose \method, an efficient and accurate PARAFAC2 decomposition method working in the dual-way streaming setting.
When new data are given, \method efficiently performs PARAFAC2 decomposition by carefully dividing the terms related to old and new data and avoiding naive computations involved with old data.
Furthermore, applying a forgetting factor makes \method follow recent movements.
Extensive experiments show that \method achieves up to $14.0\times$ faster speed than existing PARAFAC2 decomposition methods for newly arrived data.
We also provide discoveries for detecting anomalies in real-world datasets, including Subprime Mortgage Crisis and COVID-19. 
\end{abstract}

%

\begin{CCSXML}
<ccs2012>
<concept>
<concept_id>10010147.10010257.10010293.10010309</concept_id>
<concept_desc>Computing methodologies~Factorization methods</concept_desc>
<concept_significance>500</concept_significance>
</concept>
<concept>
<concept_id>10002951.10003227.10003351.10003446</concept_id>
<concept_desc>Information systems~Data stream mining</concept_desc>
<concept_significance>500</concept_significance>
</concept>
</ccs2012>
\end{CCSXML}

\ccsdesc[500]{Computing methodologies~Factorization methods}
\ccsdesc[500]{Information systems~Data stream mining}

\keywords{irregular tensor, dual-way streaming setting, PARAFAC2 decomposition, anomaly detection}

\maketitle

\vspace{-2mm}
\section{Introduction}
\label{sec:intro}
How can we efficiently and accurately analyze an irregular tensor in a dual-way streaming setting,
where new rows of existing matrices and new matrices continuously arrive over time?
What types of anomalies are there in the dual-way streaming setting?
Many real-world data can be represented as irregular tensors consisting of matrices where their column sizes are the same, but row sizes are different.
Also, many irregular tensor data are generated in a dual-way streaming setting.
For example, assume that we collect an irregular tensor from a stock market.
The irregular tensor consists of matrices of stocks where rows and columns correspond to listing periods and features (e.g., opening price, trading volume, etc.), respectively.
Feature values of listed stocks are collected continuously, and companies go public.
Then, the sizes of matrices and the number of matrices increase day by day.

There has been much attention on PARAFAC2 decomposition in order to decompose a given irregular tensor into factor matrices, which is used for many applications such as phenotype discovery~\cite{PerrosPWVSTS17,AfsharPPSHS18,ren2020robust,yin2021tedpar}, fault detection~\cite{wise2001application,luo2019sparse}, community detection~\cite{gujral2020spade}, feature analysis~\cite{JangK22}, clustering~\cite{gujral2020c}, and user intent tracking~\cite{sun2016contextual}.
However, many existing PARAFAC2 decomposition methods~\cite{PerrosPWVSTS17,JangK22} have been developed for working in a static setting where the entire tensor is given at once.
Therefore, they fail to efficiently analyze an irregular tensor in a dual-way streaming setting since they need to perform PARAFAC2 decomposition for an accumulated tensor whenever new data arrive.
SPADE~\cite{gujral2020spade} works in a limited streaming setting that handles only new slice matrices;
it does not consider the increase in the size of existing slice matrices.
Furthermore, it focuses only on patterns of the entire duration, and does not
allow us to focus on recent patterns which is necessary in a dual-way streaming setting where new data continuously arrive.

\begin{figure*}
	\subfloat[Overview of \method and its application]{\includegraphics[width=0.99\textwidth]{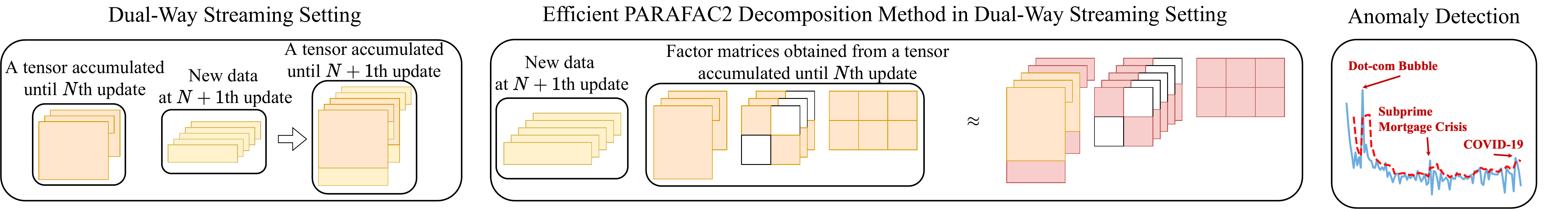}} \\
	\vspace{-3mm}	
	 \subfloat{\includegraphics[width=0.7\textwidth]{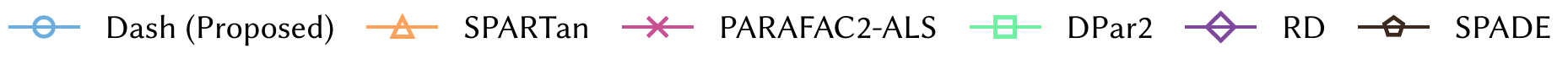}} \\
	\vspace{-3mm}
	 \setcounter{subfigure}{1}		
	\centering	
	 \subfloat[Running time on US Stock dataset]{\includegraphics[width=0.19\textwidth]{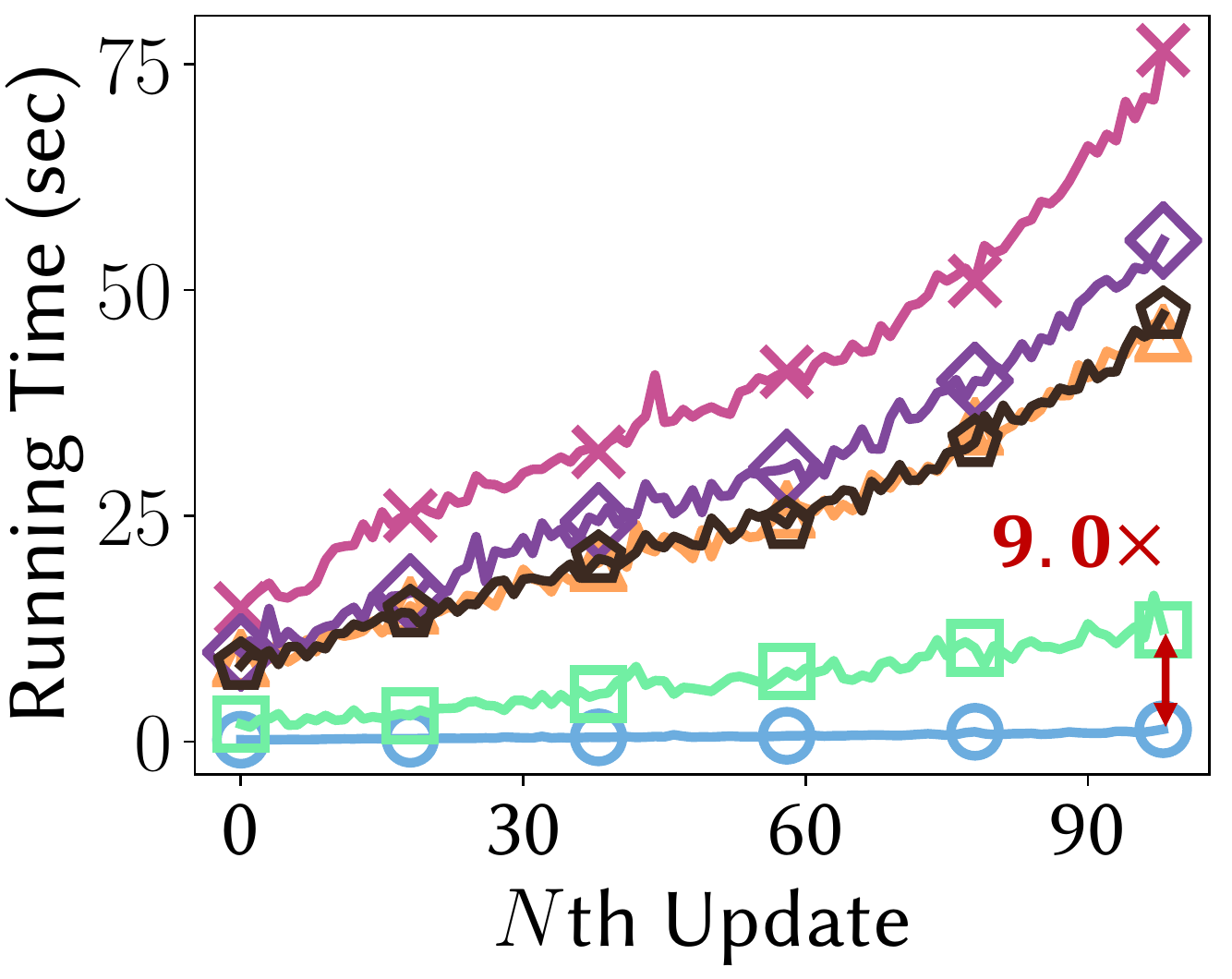}\label{fig:perf_us}}\quad\quad
	 \subfloat[Running time on KR Stock dataset]{\includegraphics[width=0.19\textwidth]{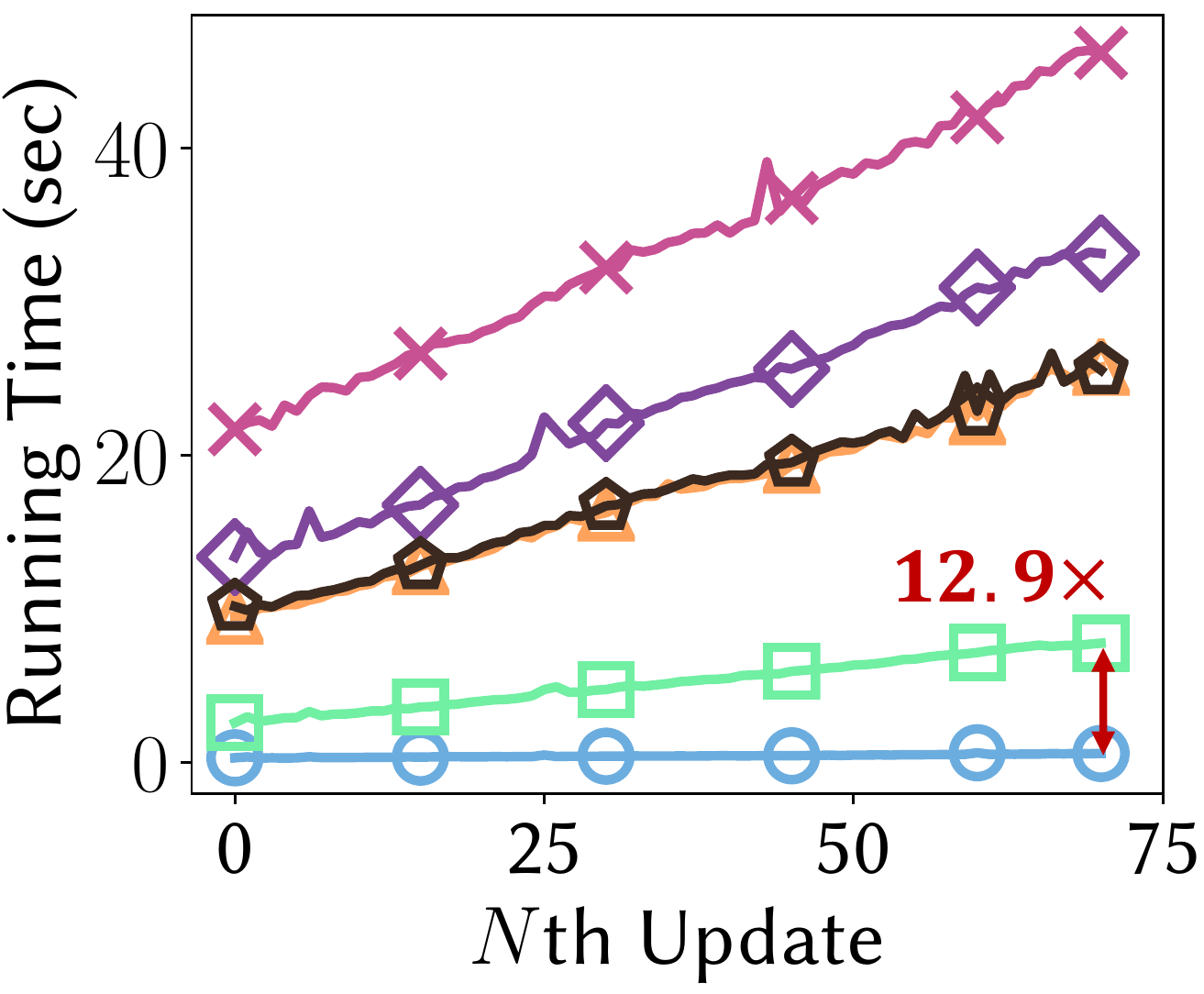}\label{fig:perf_kr}}	 	 	\quad\quad
	 \subfloat[Scalability on US Stock dataset]{\includegraphics[width=0.19\textwidth]{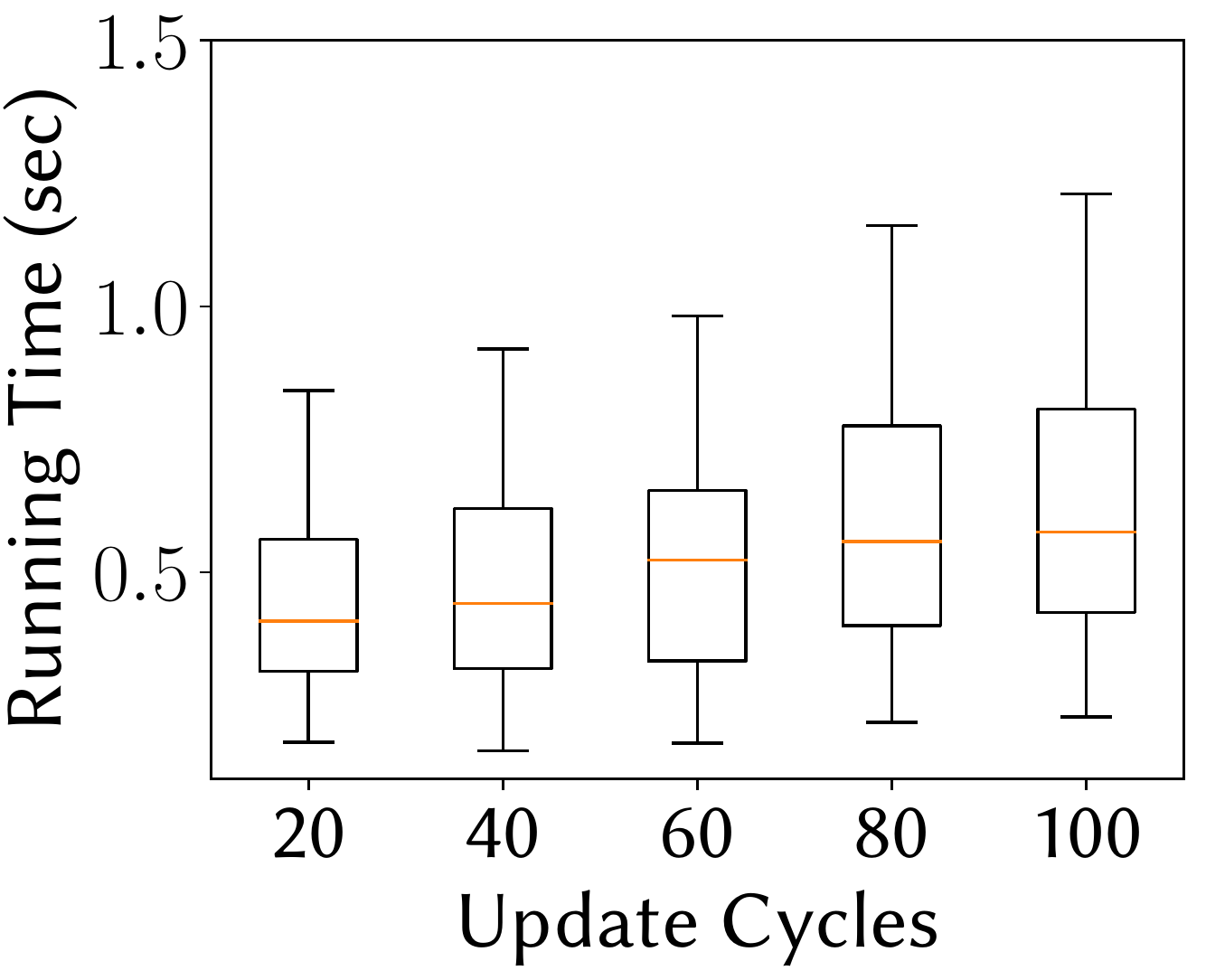}\label{fig:scalability_us}} \quad\quad
	 \subfloat[Scalability on KR Stock dataset]{\includegraphics[width=0.19\textwidth]{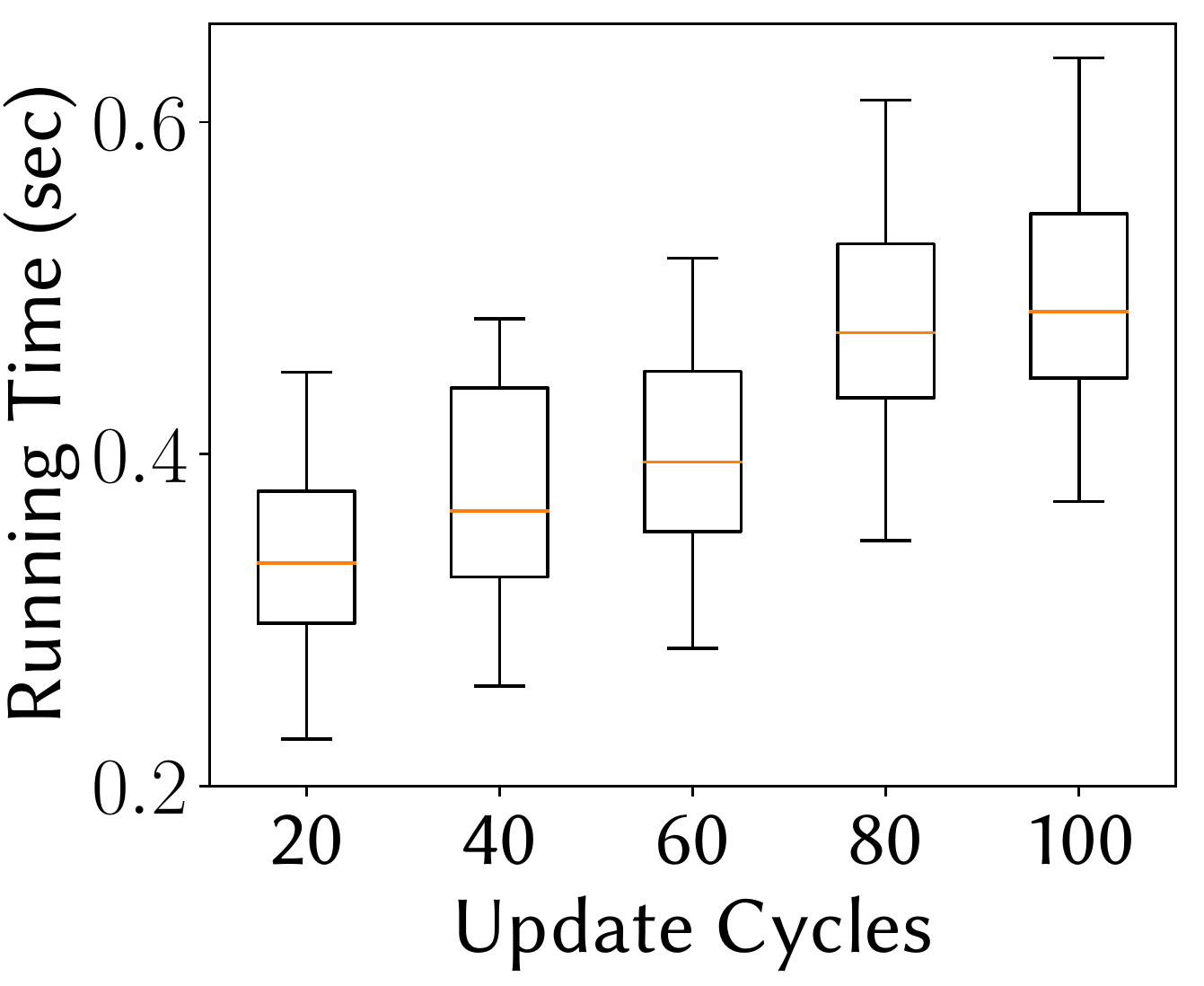}\label{fig:scalability_kr}}	 	 	 	
	 	 \\
	\caption{
	(a) Overview of \method, an efficient PARAFAC2 decomposition method in a dual-way streaming setting where both new rows of existing slice matrices and new slice matrices arrive over time.
	\method allows us to perform real-time anomaly detection in a real-world dataset.
	(b-c) Running time of \method and competitors. For each method, we draw a line connecting points of all updates, and five or six representative points with large markers.
	\method is the fastest method in a dual-way streaming setting with up to $12.9\times$ faster speed than competitors.
	(d-e) Scalability with respect to update cycles.
	We measure running times for several update cycles: $[20, 40, 60, 80, 100]$.
	The length of new rows of existing slice matrices is linearly proportional to an update cycle, and the number of new slice matrices at each update is different depending on an update cycle.
	Running time of \method is linearly proportional to the values of the update cycles.
	}
	\label{fig:contribution}
\end{figure*}

In this paper, we propose \method (\underline{D}ual-w\underline{A}y \underline{S}treaming PARA-FAC2 decomposition met\underline{H}od for irregular tensors), an efficient and accurate method for PARAFAC2 decomposition in a dual-way streaming setting.
In contrast to existing PARAFAC2 decomposition methods,
\method is optimized for the dual-way streaming setting.
\method achieves a high efficiency for dynamically updating factor matrices with an
update rule which requires computational costs linear to the size of a new incoming tensor.
Based on the update rule, \method updates factor matrices by 1) carefully dividing the terms related to old and new data, 2) computing only the terms related to new data, 3) loading old data-related terms, and 4) summing up the terms.
In addition, \method enables to follow recent movements by applying a forgetting factor to our objective function.
We detect various anomalies using \method in a dual-way streaming setting.


Our contributions (see Figure~\ref{fig:contribution}) are summarized as follows:
\begin{itemize*}
	\item \textbf{Problem Formulation.} We formulate a novel problem of handling an irregular tensor in a dual-way streaming setting where both new slice matrices and new rows of existing slice matrices arrive.
	\item \textbf{Method.} We propose \method, an efficient PARAFAC2 decomposition method for an irregular tensor in the dual-way streaming setting.
	\item \textbf{Experiment.} We experimentally show that \method outperforms existing PARAFAC2 methods giving up to $14.0 \times$ faster speed (see Figures~\ref{fig:perf_us},~\ref{fig:perf_kr}, and~\ref{fig:perf}) and achieving lower local errors for a newly arrived tensor (see Table~\ref{tab:local_error}).
	\item \textbf{Discovery.} \method detects economic crisis (i.e., Dot-com Bubble, Subprime Mortgage Crisis, and COVID-19) as anomalies in US Stock dataset (see Figure~\ref{fig:tensorlevel_anomaly}).
\end{itemize*}
%
The code and datasets are available at \\ \textbf{\url{https://github.com/snudatalab/Dash}}.

\section{Preliminaries and Problem Formulation}
\label{sec:prelim}

In this section, we describe preliminaries for an irregular tensor and PARAFAC2 decomposition,
and then introduce our problem.
Table~\ref{tab:notation} presents notations frequently used in this paper.

\subsection{Preliminaries}
\label{subsec:prelim}

\textbf{Notation.}
We denote vectors and matrices as bold lowercase (e.g., $\mat{x}$) and bold uppercase (e.g., $\mat{X}$), respectively.
$\{\mat{X}_k\}_{k=1}^{K}$ represents a collection of $K$ matrices, which is generally used to define an irregular tensor with $K$ frontal slice matrices.

\begin{table} [t]
	\centering
	\caption{Symbol description.}
	\label{tab:notation}
	\resizebox{0.999\columnwidth}{!}{%
		\begin{tabular}{cl}
			\toprule
			\textbf{Symbol} & \textbf{Description} \\
			\midrule
			$\{ \mat{X}_k \}^{K}_{k=1}$ & irregular tensor of slices $\mat{X}_{k}$ for $k=1,...,K$ \\
			$\mat{X}_{k,old}$ & old rows of an existing slice matrix  \\						
			$\mat{X}_{k,new}$ & \begin{tabular}[c]{@{}l@{}} new rows of $k$th existing slice matrix ($k=1,...,K$)\\ or $k$th new incoming slice matrix ($k=K+1,...K+L$)\end{tabular} \\
			$\mat{X}(i,:)$ & {$i$-th row vector of a matrix $\mat{X}$} \\
			{$\mat{X}(:,j)$} & {$j$-th column vector of a matrix $\mat{X}$} \\
			{$\mat{X}(i,j)$} & {$(i,j)$-th element of a matrix $\mat{X}$} \\
			{$\{\mat{U}_k, \mat{S}_{k}\}_{k=1}^{K}$, $\mat{V}$} & {factor matrices of an irregular tensor} \\
			$\| \cdot \|_F$ & Frobenius norm \\	
			$R$ & target rank \\	
			$\odot$ & Khatri-Rao product \\			
			$*$ & element-wise product \\
			$vec(\cdot)$ & vectorization of a matrix \\
			$| \cdot |$ & absolute value \\
			$I_{k,new}$ & \begin{tabular}[c]{@{}l@{}} length of new rows of $k$th existing slice matrix ($k=1,...,K$) or \\ row length of $k$th new incoming slice matrix ($k=K+1,...K+L$)\end{tabular} \\
			$J$ & the column length of slice matrices \\
			$K$ & the number of slice matrices \\
			$L$ & the number of new slice matrices \\
			\bottomrule
		\end{tabular}
	}
\end{table}

\textbf{PARAFAC2 decomposition.}
PARAFAC2 decomposition has been widely used for analyzing real-world data represented as irregular tensors.
It decomposes a given irregular tensor $\{\mat{X}_{k}\}_{k=1}^{K}$ into factor matrices $\mat{U}_k$, $\mat{S}_k$, and $\mat{V}^T$ (i.e., $\mat{X}_k \approx \mat{U}_k\mat{S}_k\mat{V}^T$) where $\mat{X}_k$ is the $k$th slice matrix of an irregular tensor (see Figure~\ref{fig:parafac2}).
To obtain the factor matrices, ALS (Alternating Least Square) method updates one factor matrix at a time while fixing all other matrices, minimizing the following loss function:
\begin{align}
\label{eq:obj_base}
	\T{L}_{PARAFAC2} =  \sum{}_{k=1}^{K} \| \mat{X}_k - \mat{U}_k\mat{S}_k\mat{V}^T \|_F^2
\end{align}
Many works~\cite{PerrosPWVSTS17,cheng2019efficient,JangK22,gujral2020spade} replace $\mat{U}_k$ with $\mat{Q}_k\mat{H}$ to ensure uniqueness of the solution where $\mat{Q}_k$ is column orthogonal and $\mat{H}$ is constant for all slices.
However, our proposed method does not consider such constraint since it does not require the uniqueness property.

\begin{figure}[t]
	\centering	
	\includegraphics[width=0.35\textwidth]{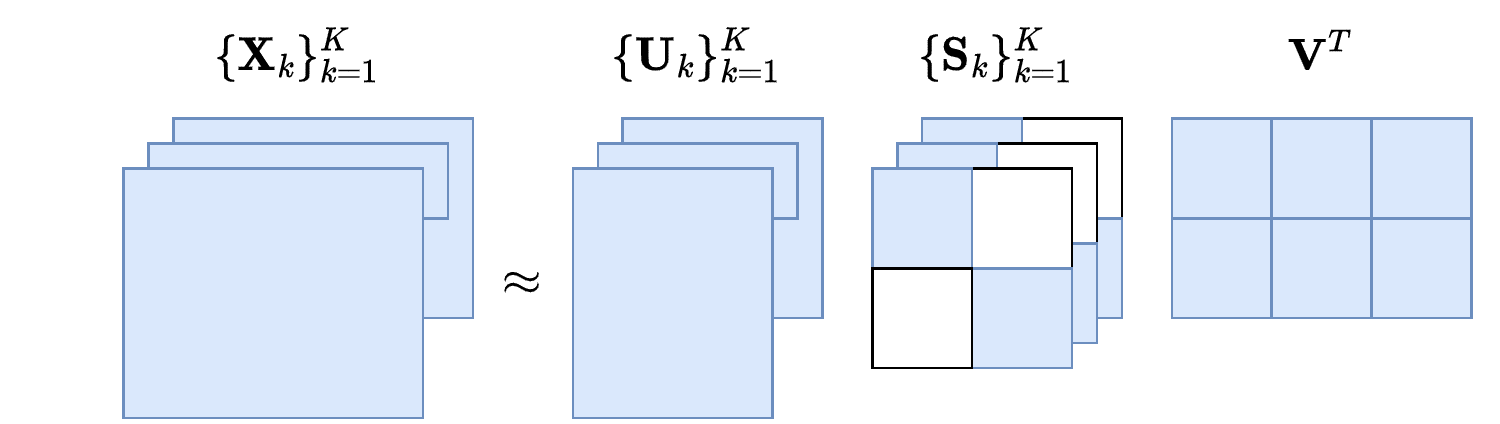}
	\caption{An example of PARAFAC2 decomposition.
	PARAFAC2 decomposes $\mat{X}_k$ into $\mat{U}_k$, $\mat{S}_k$, and $\mat{V}$ for all $k$.
	$\mat{V}$ is the shared factor matrix across all the slice matrices.
	}
	\label{fig:parafac2}
\end{figure}

\subsection{Problem Formulation}

We formulate a problem of analyzing an irregular tensor in a dual-way streaming setting where not only new slice matrices but also new rows of existing slice matrices arrive.

\textbf{Time-related terms.}
Before presenting the problem, we first define the time-related terms that we use throughout this paper. 
Figure~\ref{fig:time_term} provides a visualization of how these terms are used.

\begin{itemize}
	\item \textbf{Time step.} The smallest unit of temporal mode in which new information is collected,
		which is represented as one row of a slice matrix.
		We assume that the row axis of slice matrices is the temporal mode.
	\item \textbf{Update.} To find the new factor matrices with the new information given after certain time steps.
	\item \textbf{Update cycle.} The number of time steps between each update.
	\item \textbf{Updated range of $k$th slice matrix.} The number of new rows in $k$th slice matrix after one update cycle.
		Note that it can be smaller than the update cycle, since not all matrices necessarily collect new information at every time step.
	\item \textbf{Overall duration.} The total number of time steps in data.
\end{itemize}

\textbf{Problem Definition.}
The formal definition is as follows:
\begin{problem}[dual-way streaming Irregular Tensor Problem]

\textbf{Given}
\begin{enumerate}
	\item New slice matrices $\{\mat{X}_{k, new}\}_{k=K+1}^{K+L}$
	\item New rows $\{\mat{X}_{k,new}\}_{k=1}^{K}$ of existing slice matrices
	\item Pre-existing factor matrices $\{\mat{U}_{k,old}\}_{k=1}^{K}$, $\{\mat{S}_{k,old}\}_{k=1}^{K}$, and  $\mat{V}_{old}$ of an accumulated irregular tensor $\{\mat{X}_{k,old}\}_{k=1}^{K}$.
\end{enumerate}

\noindent\textbf{Find} factor matrices $\{\mat{U}_{k}\}_{k=1}^{K+L}$, $\{\mat{S}_{k}\}_{k=1}^{K+L}$, and $\mat{V}$ for the entire irregular tensor $\{\mat{X}_{k}\}_{k=1}^{K+L}$:
\begin{align}
\small
    \mat{X}_k =  \begin{cases}
     \begin{bmatrix}\mat{X}_{k,old} ; \mat{X}_{k,new} \end{bmatrix}       & \text{if } 1 \leq k \leq K \\
  \mat{X}_{k,new}        & otherwise
  \end{cases}
\end{align}
\begin{align}
\small
    \mat{U}_k =  \begin{cases}
     \begin{bmatrix}\mat{U}_{k,old} ; \mat{U}_{k,new} \end{bmatrix}       & \text{if } 1 \leq k \leq K \\
  \mat{U}_{k,new}        & otherwise
  \end{cases}
\end{align}
where $;$ denotes the vertical concatenation of matrices.

\end{problem}

\begin{figure}
	\centering	
	\includegraphics[width=0.49\textwidth]{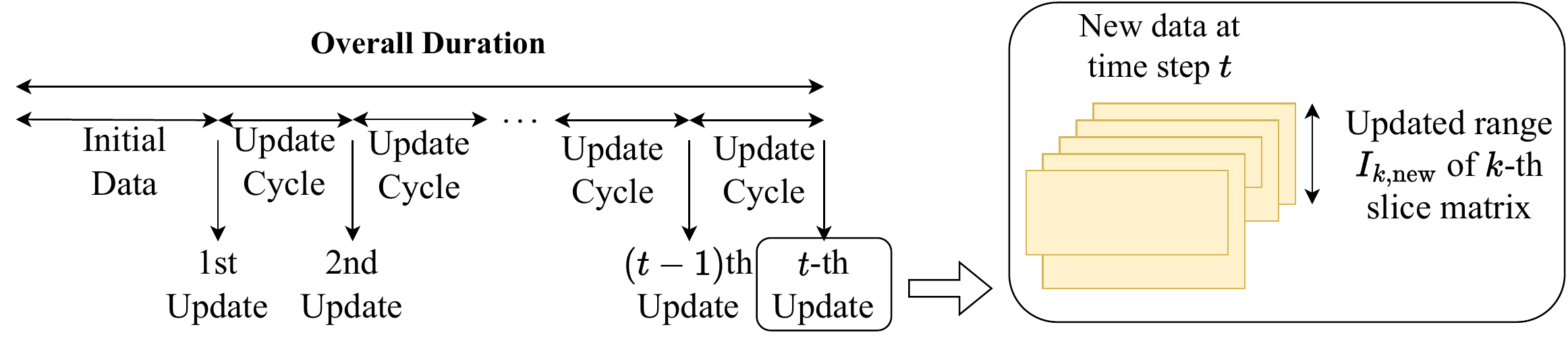}
	\caption{Time-related terms. For example, assume that one day corresponds to one time step, and the overall duration length is $1000$ days. Also assume that the initial tensor consists of 200 days' worth of data and each update cycle is $20$ days. Then, there are $40=(1000-200)/20$ updates. $I_{k,{new}}$ of $k$th slice matrix at each update is less than or equal to $20$.
	}
	\label{fig:time_term}
\end{figure}

%

Many previous works~\cite{PerrosPWVSTS17,cheng2019efficient,JangK22,gujral2020spade} efficiently perform PARAFAC2 decomposition for analyzing irregular tensors.
However, there is no method tailored for the dual-way streaming setting.
The static PARAFAC2 decomposition method is impractical since they use an accumulated tensor instead of pre-existing factor matrices.
Therefore, to achieve high efficiency, a dual-way streaming PARAFAC2 decomposition method needs to fully employ pre-existing factor matrices obtained at the previous update instead of the accumulated data.
Although SPADE~\cite{gujral2020spade} handles newly arrived slice matrices, it fails to handle new rows of existing slice matrices.
This limitation is fatal to the efficiency of updating factor matrices when new rows of existing slice matrices continuously arrive.
This is because this method needs to update factor matrices using the accumulated tensor with static PARAFAC2 decomposition methods.
Therefore, we need to develop an efficient method working in a dual-way setting where new rows of existing slice matrices and new slice matrices simultaneously arrive.

\section{Proposed Method}
\label{sec:proposed}
In this paper, we propose \method (\underline{D}ual-w\underline{A}y \underline{S}treaming PARA-FAC2 decomposition met\underline{H}od for irregular tensors) which efficiently performs PARAFAC2 decomposition in a dual-way streaming setting.
There are main challenges to be addressed:

\begin{itemize*}
	\item[C1.] \textbf{Maximizing efficiency in a dual-way streaming setting.} Previous methods fail to work efficiently in a dual-way streaming setting since there are computations with old data. How can we efficiently update factor matrices in a dual-way streaming setting?
	\item[C2.] \textbf{Focusing on recent movements.} Since data are continuously accumulated over time, we need to follow recent behaviors and discard old data gradually. How can we compute factor matrices of PARAFAC2 decomposition with more importance to newly coming data?  
	\item[C3.] \textbf{Detecting anomalies in a dual-way streaming setting.} 
What types of anomalies are there in a dual-way streaming setting? How can we detect anomalies?
\end{itemize*}

We propose the following main ideas to address the challenges.
\begin{itemize*}
	\item[I1.] \textbf{Divide and compute.} \method achieves high efficiency by 1) dividing the terms related to old and new data, 2) computing only the term of new data, 3) loading the term related to old data without computation, and 4) summing up the loaded term and the computed term.
	\item[I2.] \textbf{Exploit a forgetting factor.} A forgetting factor makes factor matrices fit more to new incoming data while gradually excluding old data in updating factor matrices.
	\item[I3.] \textbf{Detect Slice-level and Tensor-level anomalies.} We detect two types of anomalies by measuring reconstruction errors with factor matrices updated by \method whenever new data arrive.
\end{itemize*}

  \begin{algorithm} [t]
	\caption{\method}\label{alg:odin_algorithm}
	\begin{algorithmic} [1]
		\small
		\algsetup{linenosize=\small}
		\SetNoFillComment
		\renewcommand{\algorithmicrequire}{\textbf{Input:}}
		\renewcommand{\algorithmicensure}{\textbf{Output:}}
		    \REQUIRE a new incoming tensor $\{\mat{X}_{k,new}\}_{k=1}^{K+L}$, pre-existing factor matrices $\{\mat{U}_{k,old}\}_{k=1}^{K}$, $\{\mat{S}_{k,old}\}_{k=1}^{K}$, and $\mat{V}_{old}$, and helper matrices $\{\mat{c}_{k,old}\}_{k=1}^{K}$, $\{\mat{D}_{k,old}\}_{k=1}^{K}$, $\mat{F}_{old}$, and $\mat{G}_{old}$
		    \ENSURE updated factor matrices $\{\mat{U}_{k,new}\}_{k=1}^{K+L}$, $\{\mat{S}_{k}\}_{k=1}^{K+L}$, and $\mat{V}$, and helper matrices $\{\mat{c}_{k,new}\}_{k=1}^{K+L}$, $\{\mat{D}_{k,new}\}_{k=1}^{K+L}$, $\mat{F}_{new}$, and $\mat{G}_{new}$
		\FOR {$k=1,...,K+L$}
			\STATE update $\mat{U}_{k,new}$ using Eq.~\eqref{eq:updateU_lemma} \\
		\ENDFOR \\
		\FOR {$k=1,...,K+L$}
			\STATE obtain $\mat{c}_{k,new}$ and $\mat{D}_{k,new}$ as in Eq.~\eqref{eq:obtain_CandC} and ~\eqref{eq:obtain_CandD} \\
			\STATE update $\mat{S}_{k} \leftarrow \mat{W}(k,:)$ using $\mat{c}_{k,new}$ and $\mat{D}_{k,new}$ as in Eq.~\eqref{eq:updateS_lemma} \\
	\tcc{$\mat{S}_{k}$, $\mat{c}_{k,new}$, and $\mat{D}_{k,new}$ are used as \newline
	$\mat{S}_{k,old}$, $\mat{c}_{k,old}$, and $\mat{D}_{k,old}$ at the next update.} 					
		\ENDFOR			\label{alg:line:update_s} \\							
		\STATE obtain $\mat{F}_{new}$ and $\mat{G}_{new}$ as in Eq.~\eqref{eq:obtain_FandF} and~\eqref{eq:obtain_FandG}  \\
		\STATE update $\mat{V}$ using $\mat{F}_{new}$ and $\mat{G}_{new}$ as in Eq.~\eqref{eq:updateV_lemma} \\		
	\tcc{$\mat{V}$, $\mat{F}_{new}$, and $\mat{G}_{new}$ are used as \newline
	$\mat{V}_{old}$, $\mat{F}_{old}$, and $\mat{G}_{old}$ at the next update.} 			
	\end{algorithmic}
\end{algorithm}

We first find the factor matrices of an initial tensor described in Figure~\ref{fig:time_term}.
Then,
we update factor matrices $\{\mat{U}_{k,new}\}_{k=1}^{K+L}$, $\{\mat{S}_{k}\}_{k=1}^{K+L}$, and $\mat{V}$ in order whenever a new incoming tensor arrives.
Algorithm~\ref{alg:odin_algorithm} presents the update procedure of \method for a new incoming tensor.
Our update rule, which avoids the computations involved with an accumulated irregular tensor,
allows us to efficiently update them for a new incoming tensor.

\subsection{Objective Function with Forgetting Factor}
\label{subsec:objective}
We start from designing a loss function for a dual-way streaming setting.
The loss function in Eq.~\eqref{eq:obj_base} is designed for static PARAFAC2 decomposition. 
It is not appropriate for efficient updates of factor matrices in a dual-way streaming setting since it leads to the computations involved with an accumulated tensor.
Therefore, we need to reformulate the loss function tailored for a dual-way streaming setting.
In addition, a reformulated loss function needs to capture recent behaviors while discarding old data gradually.
Our main ideas are 1) to distinguish old data and newly arrived data within the loss function, and 2) to add a forgetting factor to a loss function.

We reformulate the loss function tailored for a dual-way streaming setting.
When a new tensor is given, the loss function is expressed as follows:
\begin{align}
\label{eq:division_loss}
\begin{split}
	\T{L} & = \sum_{k=1}^{K}{\| \begin{bmatrix} \mat{X}_{k,old} \\ \mat{X}_{k,new} \end{bmatrix} - \begin{bmatrix} \mat{U}_{k,old} \\ \mat{U}_{k,new} \end{bmatrix} \mat{S}_k\mat{V}^T \|_F^2} \\
	& + \sum_{k=K+1}^{K+L} {\| \mat{X}_{k,new} - \mat{U}_{k,new}\mat{S}_k\mat{V}^T \|_F^2}
\end{split}
\end{align}
We divide $\mat{X}_{k}$ into $\mat{X}_{k,old}$ and $\mat{X}_{k,new}$, and divide $\mat{U}_{k}$ into $\mat{U}_{k,old}$ and $\mat{U}_{k,new}$.
By adding a forgetting factor to Eq.~\eqref{eq:division_loss}, our reformulated loss function is as follows:
\begin{align}
\small
\label{eq:new_loss}
\begin{split}
	\T{L} & = \lambda\sum_{k=1}^{K}{\| \mat{X}_{k,old} - \mat{U}_{k,old}\mat{S}_k\mat{V}^T \|_F^2}
	 + \sum_{k=1}^{K+L} {\| \mat{X}_{k,new} - \mat{U}_{k,new}\mat{S}_k\mat{V}^T \|_F^2	}
\end{split}
\end{align}
where $\lambda$ $(0 < \lambda \leq 1)$ is the hyperparameter of a forgetting factor.
The forgetting factor used in research~\cite{vahidi2005recursive,PapadimitriouSF05} makes \method fit factor matrices to newly arrived data while discarding old data.
The smaller $\lambda$ is, the more \method fits factor matrices to recently arrived data.
Our loss function clearly distinguishes old and new data.

\subsection{Dual-way Streaming Update}
The objective is to efficiently update factor matrices in a dual-way streaming setting.
Existing methods~\cite{PerrosPWVSTS17,cheng2019efficient,JangK22,gujral2020spade} update factor matrices using the entire tensor, so they are very inefficient for new incoming data.
Although SPADE~\cite{gujral2020spade} can handle new incoming slice matrices, it fails to efficiently update factor matrices with respect to new additions to existing slice matrices.
Therefore, we need to develop an update rule that efficiently updates factor matrices for both new slice matrices and the additions to existing slice matrices.

For the goal, we formulate a divide-and-compute strategy for efficiency.
This strategy excludes computations involved with an existing tensor.
Specifically, we 1) divide terms related to old and new data, 2) compute only the term for new data, 3) load the term related to old data without computation, and 4) sum up the loaded term and the computed term.
Dividing the terms is addressed by the reformulated loss function (Eq.~\eqref{eq:new_loss}).
Thanks to this strategy, we avoid exponential growth in computation over time.

\subsubsection{Initialization}
Before starting an update procedure, we find the factor matrices $\mat{U}_{k,old}$, $\mat{S}_k$, and $\mat{V}$ of an initial tensor $\{\mat{X}_{k, initial}\}_{k=1}^{K}$ and helper matrices $\mat{c}_{k,old} \in \mathbb{R}^{R}$, $\mat{D}_{k,old}\in \mathbb{R}^{R\times R}$, $\mat{F}_{old} \in \mathbb{R}^{J\times R}$, and $\mat{G}_{old}\in \mathbb{R}^{R\times R}$ for supporting efficient updates.
First, we perform PARAFAC2 decomposition for the initial tensor; we use SPARTan~\cite{PerrosPWVSTS17} for the initialization in this paper.
Then, we obtain helper matrices $\mat{c}_{k,old}$, $\mat{D}_{k,old}$, $\mat{F}_{old}$, and $\mat{G}_{old}$ as follows:
\begin{align}
	& \mat{c}_{k,old}^T \leftarrow vec(\mat{X}_{k,initial})^T(\mat{V} \odot \mat{U}_{k,old}) \label{eq:updateC_init} \\
	& \mat{D}_{k,old} \leftarrow \mat{U}_{k,old}^T\mat{U}_{k,old} \label{eq:updateD_init} \\
	& \mat{F}_{old} \leftarrow  \sum_{k} {\mat{X}_{k,initial}^T\mat{U}_{k,old}\mat{S}_k} \label{eq:updateF_init} \\	
	& \mat{G}_{old} \leftarrow  \sum_{k}{\mat{S}_k\mat{U}_{k,old}^T\mat{U}_{k,old}\mat{S}_k} \label{eq:updateG_init}
\end{align}
Note that $\mat{c}_{k,old}$ and $\mat{D}_{k,old}$ support an efficient update for the factor matrix $\mat{S}_k$
while $\mat{F}_{old}$ and $\mat{G}_{old}$ help update the factor matrix $\mat{V}$ efficiently.

\subsubsection{Updating a factor matrix $\mat{U}_k$}
\method finds new row factors $\mat{U}_{k,new}$ of the factor matrix $\mat{U}_k$ when a new slice matrix or new rows of an existing slice matrix is given.
To efficiently find $\mat{U}_{k,new}$, we focus on the following term from Eq.~\eqref{eq:new_loss}.
\begin{align}
\label{eq:new_lossU}
	\T{L}_{\mat{U}_{k,new}} = \| \mat{X}_{k,new} - \mat{U}_{k,new}\mat{S}_k\mat{V}^T \|_F^2	
\end{align}
Note that the term $\| \mat{X}_{k,old} - \mat{U}_{k,old}\mat{S}_k\mat{V}^T \|_F^2$ from Eq.~\eqref{eq:new_loss}
is unnecessary in updating $\mat{U}_{k,new}$.
Then, we set $\frac{\partial \T{L}_{\mat{U}_{k,new}}}{\partial \mat{U}_{k,new}} = 0$ and derive the update rule for $\mat{U}_{k,new}$ as follows:
\begin{lemma}
\label{lemma:update_u}
New row factors $\mat{U}_{k,new}$ of the factor matrix $\mat{U}_k$ are obtained with the following update rule:
\vspace{-1mm}
\begin{align}
\label{eq:updateU_lemma}
\begin{split}
		&\mat{U}_{k,new} \leftarrow \mat{X}_{k,new}\mat{V}\mat{S}_k\left(\mat{S}_k\mat{V}^T\mat{V}\mat{S}_k\right)^{-1}
\end{split}
\end{align}
where $\mat{X}_{k,new}$ corresponds to newly arrived rows of the $k$th slice matrix $\mat{X}_k$.
\QEDB
\end{lemma}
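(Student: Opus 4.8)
The plan is to treat Eq.~\eqref{eq:new_lossU} as an ordinary linear least-squares problem in the single unknown block $\mat{U}_{k,new}$. Since the residual $\mat{X}_{k,new} - \mat{U}_{k,new}\mat{S}_k\mat{V}^T$ depends affinely on $\mat{U}_{k,new}$, the objective $\T{L}_{\mat{U}_{k,new}}$ is a convex quadratic, so its unique stationary point is the global minimizer and setting $\partial \T{L}_{\mat{U}_{k,new}} / \partial \mat{U}_{k,new} = 0$ suffices. Note also that because neither $\mat{X}_{k,old}$ nor $\mat{U}_{k,old}$ appears in Eq.~\eqref{eq:new_lossU}, the problem decouples entirely from the old data, which is precisely what the divide-and-compute strategy needs.

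First I would write the squared Frobenius norm as a trace, $\|\mat{A}\|_F^2 = \mathrm{tr}(\mat{A}^T\mat{A})$, and expand the residual. Writing $\mat{M} = \mat{S}_k\mat{V}^T$ for brevity, the two cross terms are scalars that are transposes of one another and hence equal, so
\begin{align*}
	\T{L}_{\mat{U}_{k,new}} = \mathrm{tr}(\mat{X}_{k,new}^T\mat{X}_{k,new}) - 2\,\mathrm{tr}(\mat{U}_{k,new}^T\mat{X}_{k,new}\mat{M}^T) + \mathrm{tr}(\mat{U}_{k,new}\mat{M}\mat{M}^T\mat{U}_{k,new}^T).
\end{align*}
Next I would differentiate term by term using the standard matrix identities $\partial\,\mathrm{tr}(\mat{U}^T\mat{B})/\partial\mat{U} = \mat{B}$ and $\partial\,\mathrm{tr}(\mat{U}\mat{A}\mat{U}^T)/\partial\mat{U} = \mat{U}(\mat{A}+\mat{A}^T)$, with $\mat{A} = \mat{M}\mat{M}^T$ symmetric. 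This yields the gradient $-2\mat{X}_{k,new}\mat{M}^T + 2\mat{U}_{k,new}\mat{M}\mat{M}^T$, and equating it to zero gives the normal equations $\mat{U}_{k,new}\mat{M}\mat{M}^T = \mat{X}_{k,new}\mat{M}^T$.

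Finally I would substitute $\mat{M} = \mat{S}_k\mat{V}^T$ back in. Using that $\mat{S}_k$ is diagonal (hence $\mat{S}_k^T = \mat{S}_k$), we have $\mat{M}^T = \mat{V}\mat{S}_k$ and $\mat{M}\mat{M}^T = \mat{S}_k\mat{V}^T\mat{V}\mat{S}_k$, and solving the normal equations produces exactly Eq.~\eqref{eq:updateU_lemma}. The one point requiring care --- and the main obstacle --- is the invertibility of $\mat{S}_k\mat{V}^T\mat{V}\mat{S}_k$: this Gram matrix is nonsingular precisely when $\mat{S}_k\mat{V}^T$ has full row rank $R$, i.e., when $\mat{V}$ has full column rank and every diagonal entry of $\mat{S}_k$ is nonzero. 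I would state this as the standing assumption (it holds generically under the usual PARAFAC2 rank conditions), which guarantees the closed-form solution is well defined and unique.
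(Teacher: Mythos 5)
Your proposal is correct and follows essentially the same route as the paper's own proof: both set $\partial \T{L}_{\mat{U}_{k,new}}/\partial \mat{U}_{k,new}=0$, arrive at the normal equations $\mat{U}_{k,new}\mat{S}_k\mat{V}^T\mat{V}\mat{S}_k = \mat{X}_{k,new}\mat{V}\mat{S}_k$, and right-multiply by $(\mat{S}_k\mat{V}^T\mat{V}\mat{S}_k)^{-1}$. You merely spell out the gradient derivation via trace identities and add the (reasonable, unstated-in-the-paper) convexity and invertibility remarks.
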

\begin{proof}
\vspace{-1mm}
	The proof of Lemma~\ref{lemma:update_u} is described in Appendix~\ref{subsec:proof_lemmaU}.
\end{proof}
Since we fully exclude the computation involved with $\mat{X}_{k,old}$ and $\mat{U}_{k,old}$,
\method efficiently obtains $\mat{U}_{k,new}$ with the computational time proportional to the size of newly arrived data. 
Note that $\mat{U}_{k,old}$ is not updated.

\subsubsection{Updating a factor matrix $\mat{S}_k$}
We update factor matrices $\mat{S}_k$ when new data including new slice matrices and new rows of existing slice matrices are given.
We first re-express $\mat{S}_{k}$ for $k=1,...,K+L$ as $\mat{W}\in \mathbb{R}^{(K+L)\times R}$ where the $k$th row of $\mat{W}$ corresponds to the diagonal elements of $\mat{S}_{k}$.
Then, we set $\frac{\partial \T{L}}{\partial \mat{W}(k,:)} = 0$ and update the factor matrix $\mat{W}(k,:)$ for $k=1,...,K+L$ as follows:
\begin{lemma}
\label{lemma:update_s}
The factor matrix $\mat{W}(k,:)$ is obtained with the following update rule:
\vspace{-1mm}
\begin{align}
\label{eq:updateS_lemma}
		&\mat{W}(k,:) \leftarrow \mat{c}_{k,new}^T \times \left( \mat{V}^T\mat{V} * \mat{D}_{k,new}\right)^{-1}
\end{align}
where
\begin{align}
\small
\begin{split}
\label{eq:updateC}
		& \mat{c}_{k,new}^T
		= \lambda \cdot vec(\mat{X}_{k,old})^T(\mat{V} \odot \mat{U}_{k,old}) + vec(\mat{X}_{k,new})^T(\mat{V} \odot \mat{U}_{k,new})
		\end{split} \\
		\begin{split}		
\label{eq:updateD}		
		& \mat{D}_{k,new} = \lambda \cdot \mat{U}_{k,old}^T\mat{U}_{k,old} + \mat{U}_{k,new}^T\mat{U}_{k,new}
		\end{split}
		\small
\end{align}
$\mat{X}_{k,new}$ corresponds to newly arrived rows of the $k$th slice matrix.
\QEDB
\end{lemma}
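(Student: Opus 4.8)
The plan is to isolate the terms of the loss~\eqref{eq:new_loss} that depend on $\mat{S}_k$, recast them as a weighted linear least-squares problem in the vector of diagonal entries $\mat{W}(k,:)$, and solve the resulting normal equations. The structural fact I will exploit is that $\mat{S}_k$ is diagonal, so its entries enter the reconstruction \emph{linearly} through a Khatri-Rao product. Concretely, for a fixed $k$ with $1 \le k \le K$ only two summands of~\eqref{eq:new_loss} involve $\mat{S}_k$: the $\lambda$-weighted old term $\|\mat{X}_{k,old} - \mat{U}_{k,old}\mat{S}_k\matt{V}\|_F^2$ and the new term $\|\mat{X}_{k,new} - \mat{U}_{k,new}\mat{S}_k\matt{V}\|_F^2$. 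Writing $\mat{s}_k = \mat{W}(k,:)^T$ for the column of diagonal entries and applying the identity $vec(\mat{U}\mat{S}_k\matt{V}) = (\mat{V}\odot\mat{U})\mat{s}_k$, which holds precisely because $\mat{S}_k$ is diagonal, each Frobenius norm turns into an ordinary $2$-norm, so the two summands become $\lambda\| vec(\mat{X}_{k,old}) - (\mat{V}\odot\mat{U}_{k,old})\mat{s}_k\|_2^2 + \| vec(\mat{X}_{k,new}) - (\mat{V}\odot\mat{U}_{k,new})\mat{s}_k\|_2^2$, a convex quadratic in $\mat{s}_k$.

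Next I would set the gradient of this quadratic with respect to $\mat{s}_k$ to zero, giving the normal equations $(\lambda\mat{P}^T\mat{P} + \mat{Q}^T\mat{Q})\mat{s}_k = \lambda\mat{P}^T vec(\mat{X}_{k,old}) + \mat{Q}^T vec(\mat{X}_{k,new})$ with $\mat{P} = \mat{V}\odot\mat{U}_{k,old}$ and $\mat{Q} = \mat{V}\odot\mat{U}_{k,new}$. The crucial step is the Khatri-Rao Gram identity $(\mat{V}\odot\mat{U})^T(\mat{V}\odot\mat{U}) = \matt{V}\mat{V} * \mat{U}^T\mat{U}$: applying it to $\mat{P}^T\mat{P}$ and $\mat{Q}^T\mat{Q}$ factors the common $\matt{V}\mat{V}$ out of the element-wise product and collapses the coefficient matrix to $\matt{V}\mat{V} * (\lambda\mat{U}_{k,old}^T\mat{U}_{k,old} + \mat{U}_{k,new}^T\mat{U}_{k,new}) = \matt{V}\mat{V} * \mat{D}_{k,new}$, which is exactly~\eqref{eq:updateD}. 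The right-hand side equals $\mat{c}_{k,new}$ after transposing~\eqref{eq:updateC}. Solving and transposing then yields $\mat{W}(k,:) = \mat{c}_{k,new}^T(\matt{V}\mat{V} * \mat{D}_{k,new})^{-1}$ as claimed; the transpose on the inverse disappears because $\matt{V}\mat{V} * \mat{D}_{k,new}$ is a Hadamard product of two symmetric matrices and is therefore symmetric. For the new slices $K+1 \le k \le K+L$ no old term is present, so the same derivation applies verbatim with the $\lambda$-weighted contributions dropped, and the stated formulas still hold.

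The main obstacle is conceptual rather than computational: one must recognize that the diagonal constraint on $\mat{S}_k$ has to be linearized through $vec(\cdot)$ and the Khatri-Rao product \emph{before} differentiating, since treating $\mat{S}_k$ as an unconstrained matrix would produce the wrong stationarity condition. The only other delicate point is checking the Gram identity carefully, as it is exactly what lets the common $\matt{V}\mat{V}$ factor out so that the old/new split remains additive inside $\mat{D}_{k,new}$ and $\mat{c}_{k,new}$; this additivity is what makes the update compatible with the divide-and-compute strategy.
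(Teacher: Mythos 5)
Your proposal is correct and follows essentially the same route as the paper's proof: both linearize the diagonal $\mat{S}_k$ via the vectorization identity $vec(\mat{U}\mat{S}_k\mat{V}^T)=(\mat{V}\odot\mat{U})\mat{W}(k,:)^T$, set the gradient of the resulting quadratic to zero, and collapse the Gram matrices with $(\mat{A}\odot\mat{B})^T(\mat{A}\odot\mat{B})=\mat{A}^T\mat{A}*\mat{B}^T\mat{B}$ before inverting. Your explicit remarks on the symmetry of the Hadamard product (justifying the row-vector form of the update) and on the $k>K$ case are small additions the paper leaves implicit, but the argument is the same.
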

\begin{proof}
	\vspace{-1mm}
	The proof of Lemma~\ref{lemma:update_s} is described in Appendix~\ref{subsec:proof_lemmaS}.
\end{proof}
$\mat{W}(k,:)$ is assigned to the diagonal elements of $\mat{S}_{k}$.
In Eq.~\eqref{eq:updateC} and~\eqref{eq:updateD},
computing the terms involved with $\mat{X}_{k,old}$ and $\mat{U}_{k,old}$ requires heavy computational costs proportional to the size of the accumulated tensor.
To achieve high efficiency, at the $N$th update,
we avoid directly computing the terms with $\mat{X}_{k,old}$ and $\mat{U}_{k,old}$ by exploiting helper matrices $\mat{c}_{k,old}$ and $\mat{D}_{k,old}$ obtained at the $(N-1)$th update:
\begin{align}
	 & \mat{c}_{k,new}^T
	 \leftarrow \lambda \cdot \mat{c}_{k,old}^T +  vec(\mat{X}_{k,new})^T(\mat{V} \odot \mat{U}_{k,new}) 	\label{eq:obtain_CandC}
\\
	& \mat{D}_{k,new} \leftarrow
	\lambda \cdot \mat{D}_{k,old} + \mat{U}_{k,new}^T\mat{U}_{k,new} \label{eq:obtain_CandD}
\end{align}
where $\mat{c}_{k,old}$ and $\mat{D}_{k,old}$ at the $N$th update are equal to $\mat{c}_{k,new}$ and $\mat{D}_{k,new}$ obtained at the $(N-1)$th update, respectively.
For the first update, we use $\mat{c}_{k,old}$ and $\mat{D}_{k,old}$ obtained at the initialization (Eq.~\eqref{eq:updateC_init} and~\eqref{eq:updateD_init}).

To update the new factor matrix $\mat{S}_k$ at the $N$th update, we 1) compute $vec(\mat{X}_{k,new})^T$ $(\mat{V} \odot \mat{U}_{k,new})$ and $\mat{U}_{k,new}^T\mat{U}_{k,new}$, 2) load $\mat{c}_{k,old}$ and $\mat{D}_{k,old}$ computed at the $(N-1)$th update, and 3) complete the update by performing the remaining operations (i.e., summation, element-wise product, and matrix multiplication).
Note that we use $\mat{c}_{k,new}$ and $\mat{D}_{k,new}$ as $\mat{c}_{k,old}$ and $\mat{D}_{k,old}$, respectively, at the $(N+1)$th update.

\subsubsection{Updating a factor matrix $\mat{V}$}
We update the factor matrix $\mat{V}$ when new data including new slice matrices and new rows of existing slice matrices are given.
From Eq.~\eqref{eq:new_loss}, we set $\frac{\partial \T{L}}{\partial \mat{V}} = 0$ and derive the update rule for $\mat{V}$ as follows:
\begin{lemma}
\label{lemma:update_v}
The factor matrix $\mat{V}$ is updated with the following update rule:
\vspace{-1mm}
\begin{align}
\label{eq:updateV_lemma}
\begin{split}
		&
		\mat{V} \leftarrow \mat{F}_{new}\mat{G}_{new}^{-1}\end{split}
\end{align}
where
\begin{align}
\small
\begin{split}
\label{eq:updateF}
		& \mat{F}_{new}
		= \left(\lambda\sum_{k=1}^{K}{\mat{X}_{k,old}^T\mat{U}_{k,old}\mat{S}_k}+\sum_{k=1}^{K+L}{\mat{X}_{k,new}^T\mat{U}_{k,new}\mat{S}_k}\right)
		\end{split} \\
		\begin{split}		
\label{eq:updateG}		
		& \mat{G}_{new} = \left(\lambda\sum_{k=1}^{K}{\mat{S}_k\mat{U}_{k,old}^T\mat{U}_{k,old}\mat{S}_k + \sum_{k=1}^{K+L}{\mat{S}_k\mat{U}_{k,new}^T\mat{U}_{k,new}\mat{S}_k}}\right)
		\end{split}
		\small
\end{align}
$\mat{X}_{k,new}$ corresponds to newly arrived rows of the $k$th slice matrix.
\QEDB
\end{lemma}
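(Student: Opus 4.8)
The plan is to obtain Eq.~\eqref{eq:updateV_lemma} as the stationarity condition of the reformulated loss $\T{L}$ in Eq.~\eqref{eq:new_loss} with respect to $\mat{V}$, mirroring the derivations of the preceding two lemmas. First I would expand each Frobenius-norm summand $\| \mat{X} - \mat{U}\mat{S}\mat{V}^T \|_F^2$ as a trace, namely $\mathrm{tr}(\mat{X}^T\mat{X}) - 2\,\mathrm{tr}(\mat{V}^T\mat{X}^T\mat{U}\mat{S}) + \mathrm{tr}(\mat{V}\mat{S}\mat{U}^T\mat{U}\mat{S}\mat{V}^T)$, using that each $\mat{S}_k$ is diagonal and hence symmetric. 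Only the last two traces depend on $\mat{V}$, so the data-only term $\mathrm{tr}(\mat{X}^T\mat{X})$ drops out upon differentiation.

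Next I would apply the standard matrix-calculus identities $\partial_{\mat{V}}\,\mathrm{tr}(\mat{V}^T\mat{A}) = \mat{A}$ and $\partial_{\mat{V}}\,\mathrm{tr}(\mat{V}\mat{M}\mat{V}^T) = 2\mat{V}\mat{M}$, the latter valid because $\mat{M} = \mat{S}\mat{U}^T\mat{U}\mat{S}$ is symmetric. The gradient of a single summand is therefore $-2\,\mat{X}^T\mat{U}\mat{S} + 2\,\mat{V}\mat{S}\mat{U}^T\mat{U}\mat{S}$. I would then sum these gradients over all $K+L$ terms, attaching the forgetting factor $\lambda$ to the $K$ old-data terms exactly as dictated by Eq.~\eqref{eq:new_loss}. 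Grouping the linear contributions into $\mat{F}_{new}$ (Eq.~\eqref{eq:updateF}) and the quadratic contributions into $\mat{G}_{new}$ (Eq.~\eqref{eq:updateG}), the condition $\partial \T{L}/\partial \mat{V} = 0$ collapses to the linear system $\mat{V}\,\mat{G}_{new} = \mat{F}_{new}$, and right-multiplying by $\mat{G}_{new}^{-1}$ yields Eq.~\eqref{eq:updateV_lemma}.

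Since this is a routine least-squares minimization, the obstacle is modest rather than conceptual. The one point requiring care is isolating $\mat{V}$ on the left of the quadratic term so that the combined coefficient factors out cleanly as $\mat{G}_{new}$; this relies on the symmetry of each $\mat{S}_k\mat{U}_k^T\mat{U}_k\mat{S}_k$, which in turn uses $\mat{S}_k = \mat{S}_k^T$. I would also remark that $\mat{G}_{new}$, being a nonnegatively weighted sum of positive-semidefinite matrices $\mat{S}_k\mat{U}_k^T\mat{U}_k\mat{S}_k$, is positive definite whenever the factors have full column rank, so the inverse in Eq.~\eqref{eq:updateV_lemma} is well defined and the stationary point is the unique global minimizer of $\T{L}$ in $\mat{V}$.
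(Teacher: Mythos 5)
Your proposal is correct and follows essentially the same route as the paper's proof: differentiate the loss in Eq.~\eqref{eq:new_loss} with respect to $\mat{V}$, set the gradient to zero, group the linear terms into $\mat{F}_{new}$ and the quadratic terms into $\mat{G}_{new}$, and right-multiply by $\mat{G}_{new}^{-1}$. Your added observation that $\mat{G}_{new}$ is a nonnegatively weighted sum of positive-semidefinite matrices, hence invertible under full column rank, is a justification the paper leaves implicit.
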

\begin{proof}
\vspace{-1mm}
	The proof of Lemma~\ref{lemma:update_v} is described in Appendix~\ref{subsec:proof_lemmaV}.
\end{proof}
In Eq.~\eqref{eq:updateF} and~\eqref{eq:updateG}, a naive approach is to compute the terms involved with $\mat{X}_{k,old}$ and $\mat{U}_{k,old}$.
However, the computations require heavy computational costs proportional to the size of the accumulated tensor.
To achieve high efficiency, at the $N$th update, we avoid the direct computations with $\mat{X}_{k,old}$ and $\mat{U}_{k,old}$ by using $\mat{F}_{old}$ and $\mat{G}_{old}$ obtained at the $N-1$th update:
\begin{align}
\label{eq:obtain_FandF}
	 &\mat{F}_{new} \leftarrow  \lambda \cdot \mat{F}_{old} + \sum\nolimits_{k=1}^{K+L}{(\mat{X}_{k,new}^T\mat{U}_{k,new}\mat{S}_k)} \\
	 &\mat{G}_{new} \leftarrow  \lambda \cdot \mat{G}_{old} + \sum\nolimits_{k=1}^{K+L}{(\mat{S}_k\mat{U}_{k,new}^T\mat{U}_{k,new}\mat{S}_k)} \label{eq:obtain_FandG}
\end{align}
where $\mat{F}_{old}$ and $\mat{G}_{old}$ at the $N$th update are equal to $\mat{F}_{new}$ and $\mat{G}_{new}$ obtained at the $N-1$th update, respectively.
For the first update, we use $\mat{F}_{old}$ and $\mat{G}_{old}$ obtained at the initialization (Eq.~\eqref{eq:updateF_init} and~\eqref{eq:updateG_init}).

At the $N$th update, we efficiently update the factor matrix $\mat{V}$ by 1) computing $\mat{X}_{k,new}^T$ $\mat{U}_{k,new}\mat{S}_k$ and $\mat{S}_k\mat{U}_{k,new}^T$ $\mat{U}_{k,new}\mat{S}_k$, 2) loading $\mat{F}_{old}$ and $\mat{G}_{old}$ computed at the $(N-1)$th update, and 3) completing the update by performing the remaining operations (i.e., summation and matrix multiplication).
Note that we use $\mat{F}_{new}$ and $\mat{G}_{new}$ as $\mat{F}_{old}$ and $\mat{G}_{old}$, respectively, at the $(N+1)$th update.



\subsection{Time Complexity}
\label{subsec:time_complexity}
We provide the time complexity of \method for newly arrived data.

\begin{theorem}
\label{theorem:time_complexity}
Given a new incoming tensor of the size $J\sum_{k=1}^{K+L}{I_{k,new}}$, the time complexity of \method is $\T{O}(JR\sum_{k=1}^{K+L}{I_{k,new}})$.
\end{theorem}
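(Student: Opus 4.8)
The plan is to bound the cost of each of the five operations in Algorithm~\ref{alg:odin_algorithm} separately and then sum them, showing that every resulting term is dominated by $\T{O}(JR\sum_{k=1}^{K+L}I_{k,new})$. The two structural facts I would exploit throughout are: (i) each $\mat{S}_k$ is diagonal, so multiplying an $R\times R$ or $J\times R$ matrix by $\mat{S}_k$ on either side costs only $\T{O}(R^2)$ or $\T{O}(JR)$ rather than a full matrix product; and (ii) every update \emph{loads} a helper matrix ($\mat{c}_{k,old}$, $\mat{D}_{k,old}$, $\mat{F}_{old}$, $\mat{G}_{old}$) in place of recomputing the old-data terms, so no cost ever scales with the accumulated row counts $I_{k,old}$ --- only with the new counts $I_{k,new}$. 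This second point is really the content of the theorem, so I would make the accounting explicitly display that $I_{k,old}$ never appears.

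For the $\mat{U}_{k,new}$ update (Eq.~\eqref{eq:updateU_lemma}) I would compute $\mat{V}^T\mat{V}$ once in $\T{O}(JR^2)$; then per slice, forming $\mat{S}_k\mat{V}^T\mat{V}\mat{S}_k$ and its inverse costs $\T{O}(R^3)$, while $\mat{X}_{k,new}\mat{V}$ costs $\T{O}(JRI_{k,new})$ and the trailing multiplications cost $\T{O}(R^2 I_{k,new})$. For the helper terms (Eqs.~\eqref{eq:obtain_CandC}--\eqref{eq:obtain_CandD}) I would evaluate $vec(\mat{X}_{k,new})^T(\mat{V}\odot\mat{U}_{k,new})$ via the equivalent column sum of $\mat{U}_{k,new}*(\mat{X}_{k,new}\mat{V})$ in $\T{O}(JRI_{k,new})$, and $\mat{U}_{k,new}^T\mat{U}_{k,new}$ in $\T{O}(R^2 I_{k,new})$, with the scalar-weighted loads adding only $\T{O}(R^2)$. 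The $\mat{S}_k$ update (Eq.~\eqref{eq:updateS_lemma}) is an element-wise product, an $R\times R$ inverse, and a vector--matrix product, i.e. $\T{O}(R^3)$ per slice. Finally, accumulating $\mat{F}_{new}$ and $\mat{G}_{new}$ (Eqs.~\eqref{eq:obtain_FandF}--\eqref{eq:obtain_FandG}) costs $\T{O}(JRI_{k,new})$ and $\T{O}(R^2 I_{k,new})$ per slice respectively (again using diagonality of $\mat{S}_k$), and the single $\mat{V}\leftarrow\mat{F}_{new}\mat{G}_{new}^{-1}$ costs $\T{O}(JR^2+R^3)$.

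Summing over $k=1,\dots,K+L$ yields a leading term $\T{O}(JR\sum_k I_{k,new})$ together with residuals $\T{O}((K+L)R^3)$, $\T{O}(R^2\sum_k I_{k,new})$, and one-time $\T{O}(JR^2)$. I would then close the argument in the standard low-rank streaming regime $R\le J$ and $R\le I_{k,new}$ for each processed slice: these give $R^2\le JI_{k,new}$, hence the per-slice $R^3\le JRI_{k,new}$ and the $(K+L)R^3$ term is absorbed into the leading term; likewise $R\le J$ absorbs $\T{O}(R^2\sum_k I_{k,new})$, and $R\le\sum_k I_{k,new}$ absorbs the one-time $\T{O}(JR^2)$, leaving $\T{O}(JR\sum_{k=1}^{K+L}I_{k,new})$.

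The main obstacle is precisely this last absorption step: the $R\times R$ inversions in the $\mat{U}_k$ and $\mat{S}_k$ updates contribute a genuine $\T{O}((K+L)R^3)$ whose dominance by $\T{O}(JR\sum_k I_{k,new})$ is not unconditional but relies on the rank being small relative to both the feature dimension $J$ and the per-slice update size. I would therefore state these mild conditions explicitly and, as the conceptually important part, verify that no quantity in the $\mat{U}$, $\mat{S}$, or $\mat{V}$ updates depends on $I_{k,old}$, which is the property that actually separates \method from recomputing on the accumulated tensor.
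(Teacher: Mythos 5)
Your proposal is correct and follows essentially the same route as the paper's own proof: both bound each of the three update rules by identifying the data-dependent multiplication ($\mat{X}_{k,new}\mat{V}$ for $\mat{U}_{k,new}$, $vec(\mat{X}_{k,new})^T(\mat{V}\odot\mat{U}_{k,new})$ for $\mat{S}_k$, and $\mat{X}_{k,new}^T\mat{U}_{k,new}$ for $\mat{V}$) as the $\T{O}(JRI_{k,new})$-per-slice dominant term, and sum over $k$. Your explicit tracking of the residual $\T{O}((K+L)R^3)$ inversion cost and of the low-rank conditions ($R\le J$, $R\le I_{k,new}$) needed to absorb it is in fact more careful than the paper, which simply declares the $\mat{X}_{k,new}$-dependent terms dominant without stating those assumptions.
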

\begin{proof}
\vspace{-1mm}
	The proof is described in Appendix~\ref{subsec:proof_time_complexity}.
\end{proof}

\begin{table*}[t]
	\caption{Description of real-world tensor datasets.
	}
	\centering
	\label{tab:Description}
		\resizebox{0.9\textwidth}{!}{
	\begin{tabular}{lrrrrrrrc}
		\toprule
		& \multicolumn{3}{c}{\textbf{Total Tensor Size}} & \multicolumn{3}{c}{\textbf{Initial Tensor Size}} & \\
		\textbf{Dataset} & \textbf{Max Dim. $I_k$} & \textbf{Dim. $J$} & \textbf{Dim. $K$} & \textbf{Max Dim. $I_k$} & \textbf{Dim. $J$} & \textbf{Dim. $K$} & \textbf{Update Cycle} & \textbf{Newly Arrived Data} \\
		\midrule
		US Stock$^{\ref{footnote:us}}$~\cite{JangK22} &  $7,397$ & $85$ & $4,742$ &  $1,458$ & $85$ & $994$ & 60 days & \multirow{2}{*}{\begin{tabular}[c]{@{}c@{}}New rows of existing slice matrices \\ and new slice matrices\end{tabular}} \\
		KR Stock$^{\ref{footnote:kr}}$~\cite{JangK21} &  $5,268$ & $85$ & $3,620$ &  $995$ & $85$ & $2,092$ & 60 days  \\
		\midrule
		JPN Stock$^{\ref{footnote:jpn}}$ &  $2,204$ & $85$ & $215$ &  $424$ & $85$ & $215$  & 20 days & \multirow{4}{*}{\begin{tabular}[c]{@{}c@{}}New rows of \\ existing slice matrices\end{tabular}} \\
		CHN Stock$^{\ref{footnote:jpn}}$ &  $2,431$ & $85$ & $219$ &  $471$ & $85$ & $219$  & 20 days \\						
		VicRoads$^{\ref{footnote:vicroads}}$~\cite{schimbinschi2015traffic} &  $1,084$ & $96$ & $2,033$ &  $184$ & $96$ & $2,033$  & 20 days \\					
		PEMS$^{\ref{footnote:pems}}$ &  $440$ & $144$ & $963$ &  $80$ & $144$ & $963$  & 20 days \\				
		\bottomrule
	\end{tabular}}
\end{table*}

\textbf{Comparison between \method and existing methods.}
There are four settings in a dual-way streaming.
\begin{enumerate}
	\item No data is available.
	\item Only new rows of existing slice matrices arrive.
	\item Only new slice matrices arrive.
	\item Both new slice matrices and new rows of existing slice 	matrices arrive.
\end{enumerate}
We do not have to consider the first setting since there is no update.
Second, if only new rows of existing matrices arrive over time while new slice matrices do not,
\method has the update time equal to $\T{O}(JR\sum_{k=1}^{K}{I_{k,new}})$.
If the update cycle is constant, the update time of \method does not increase since the size of newly arrived data is not changed for each update.
In contrast, existing static methods and \spade require running time proportional to the size of the accumulated tensor.
In the third setting, \method updates factor matrices with the running time equal to $\T{O}(JR\sum_{k=K+1}^{K+L}{I_{k,new}})$.
\spade also requires the running time proportional to the size of new slice matrices while existing static methods still require the running time proportional to the total size of the accumulated tensor.
In the last setting where both new rows of existing matrices and new slice matrices arrive over time, \method takes $\T{O}(JR\sum_{k=1}^{K+L}{I_{k,new}})$ time which is linearly proportional to the size of new data.
\spade efficiently updates factor matrices for new slice matrices, but it requires a heavy computational cost proportional to the size of accumulated existing slice matrices.
Existing static methods still require the total size of the accumulated tensor.

\subsection{Anomaly Detection with \method}
\label{subsec:application}

What anomalies exist in a dual-way streaming setting?
How can we find them using \method?
As new rows of slice matrices and new slice matrices arrive over time, various anomalies are hidden in the new ones.
Although factor matrices are updated using the new incoming data, anomalies are still very different from predictions by the updated factor matrices.
\method efficiently discovers two types of anomalies, slice-level anomaly and tensor-level anomaly, by measuring reconstruction errors with updated factor matrices.


%

\textbf{Slice-level anomaly detection.}
We discover slice-level anomalies by measuring reconstruction errors of slice matrices and
finding an update that provokes a large error.
We compute reconstruction errors of slice matrices based on the following definition.
\begin{definition}[Slice-level reconstruction error]
\label{def:slice_level}
	When new rows $\mat{X}_{k,new}$ of a $k$th existing slice matrix ($k=1,..,K$) or a new $k$th slice matrix $\mat{X}_{k,new}$ ($k=K+1,..,K+L$) is given, a slice-level reconstruction error is as follows:
\begin{align}
\small
	se\left(\mat{X}_{k,new}\right) = \frac{1}{I_{k,new} J}{\sum_{i,j}{{|\mat{X}_{k,new}(i,j)-\hat{\mat{X}}_{k,new}(i,j)|}}}
\end{align}
where $\hat{\mat{X}}_{k,new}$ is the reconstructed matrix by $\mat{U}_{k,new}\mat{S}_{k}\mat{V}^T$.
\end{definition}
We identify an anomaly for each slice matrix in each update,
by spotting an error higher than a threshold.

\textbf{Tensor-level anomaly detection.}
We detect tensor-level anomalies by measuring a reconstruction error for newly arrived data.
There are newly arrived data that fail to be reconstructed well.
When new data arrive, we obtain a reconstruction error by comparing newly arrived data and the reconstruction of updated factor matrices.
Then, we compare it with a threshold to identify an anomaly.
We define the tensor-level reconstruction error as follows.
\begin{definition}[Tensor-level reconstruction error]
\label{def:tensor_level}
	When new rows $\mat{X}_{k,new}$ of existing slice matrices for $k=1,...,K$ and new slice matrices $\mat{X}_{k,new}$ $(k=K+1,...,K+L)$ are given, tensor-level reconstruction error is as follows:
\begin{align}
\small
\begin{split}
	  te\left(\{\mat{X}_{k,new}\}_{k=1}^{K+L}\right) =
	  \frac{1}{K+L}\sum_{k=1}^{K+L}\frac{1}{I_{k,new}J}{\sum_{i,j}{|\mat{X}_{k,new}(i,j)-\hat{\mat{X}}_{k,new}(i,j)|}}
\end{split}
\end{align}
where $\hat{\mat{X}}_{k,new}$ is reconstructed by $\mat{U}_{k,new}\mat{S}_{k}\mat{V}^T$.
\end{definition}
At each update, we identify an anomaly where a reconstruction error is higher than a threshold.



\section{Experiments}
\label{sec:experim}

In this section, we provide experimental results to answer the following questions:
\begin{itemize*}
	\item[Q1.] 	\textbf{Performance (Section~\ref{subsec:performance}).} How quickly and accurately does \method update factor matrices when new slice matrices and new data of existing slice matrices arrive?
	\item[Q2.] \textbf{Scalability (Section~\ref{subsec:scalability}).} How does the size of newly arrived data affect the running time of \method?
	\item[Q3.] \textbf{Forgetting factor sensitivity (Section~\ref{subsec:forgetting_sensitivity}).} How does a forgetting factor affect the fitting to newly arrived data?
	\item[Q4.] \textbf{Anomaly detection (Section~\ref{subsec:experim_anomaly}).} What types of anomalies are there in a dual-way streaming setting?
\end{itemize*}


\begin{figure*}
	 \subfloat{\includegraphics[width=0.65\textwidth]{FIG/PERF/LEGEND.pdf}} \\
	\vspace{-3mm}	
	 \setcounter{subfigure}{0}
	\centering	
	 \subfloat[Running time on JPN Stock]{\includegraphics[width=0.18\textwidth]{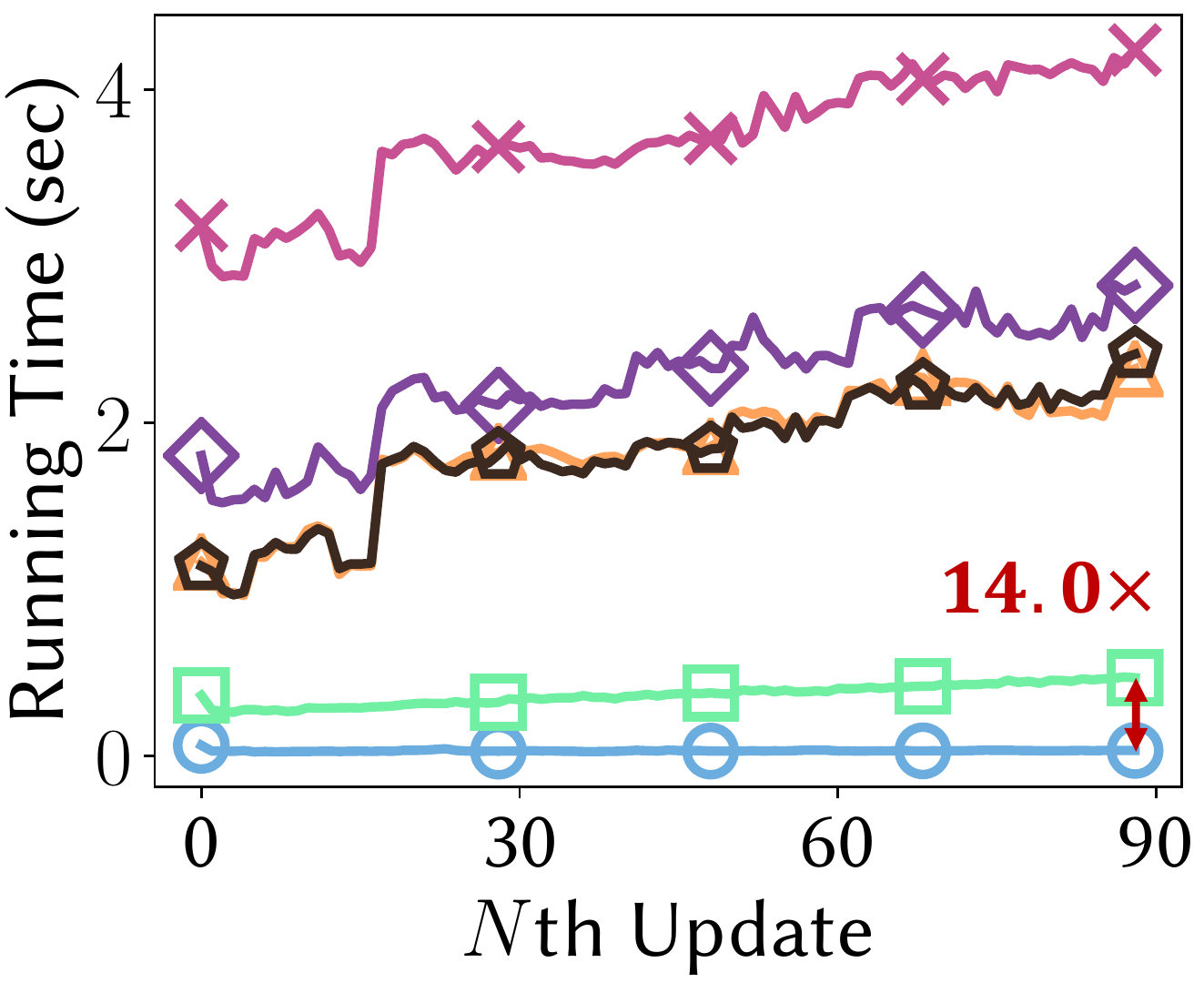}\label{fig:perf_jpn}} \quad\quad
	 \subfloat[Running time on CHN Stock]{\includegraphics[width=0.18\textwidth]{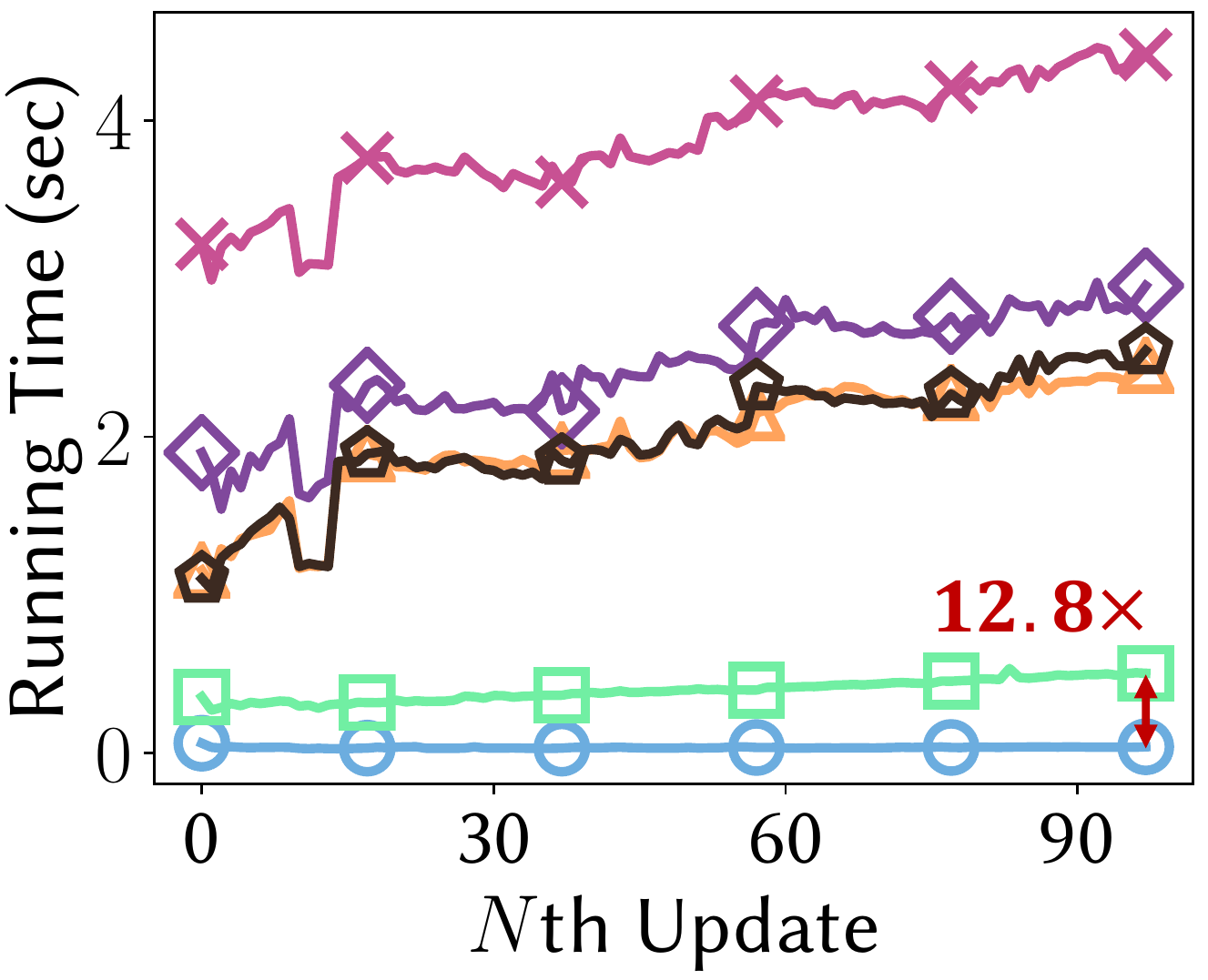}\label{fig:perf_chn}}	\quad \quad	 	
	 \subfloat[Running time on VicRoads]{\includegraphics[width=0.18\textwidth]{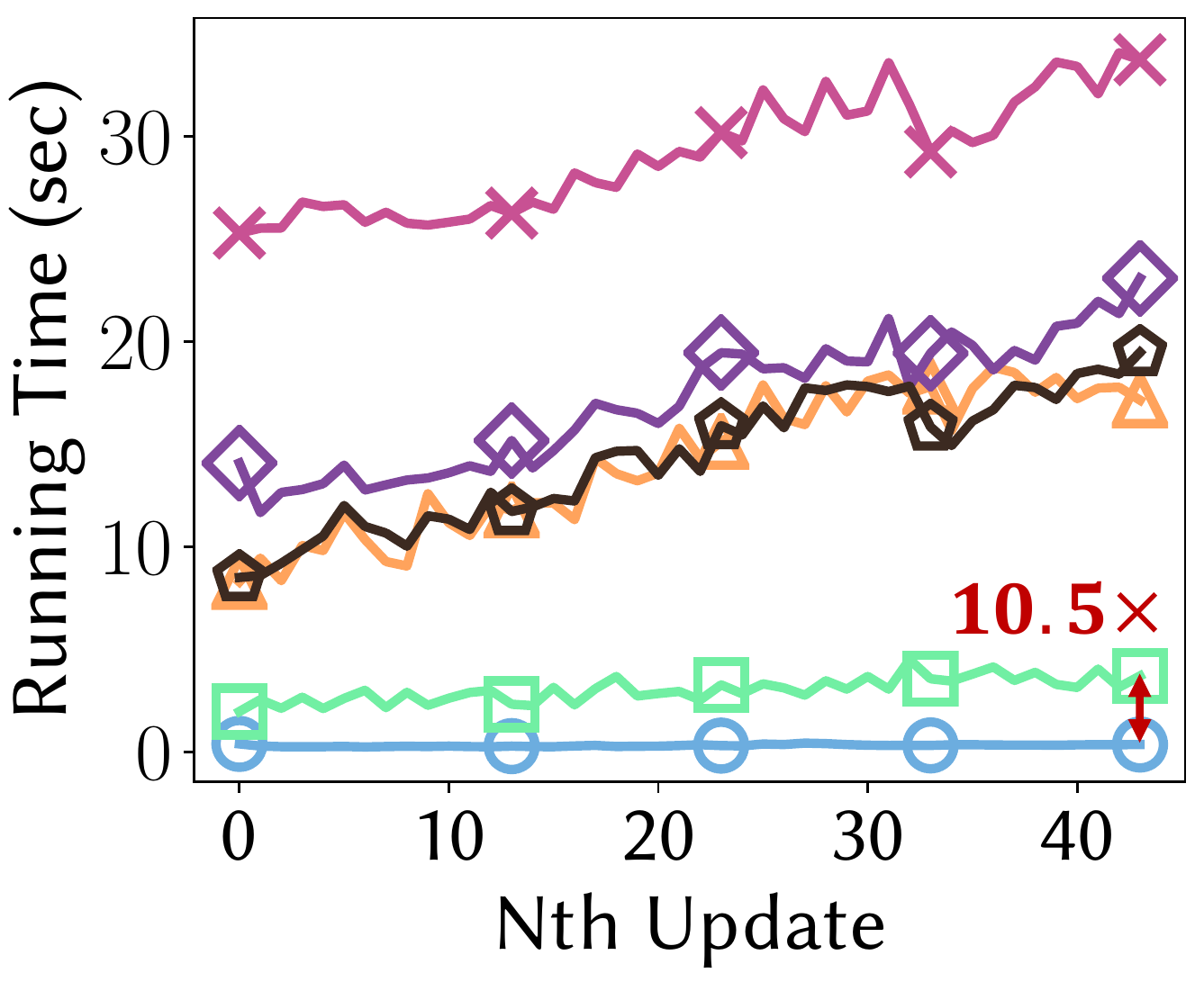}\label{fig:perf_traffic}}\quad\quad
	 \subfloat[Running time on PEMS]{\includegraphics[width=0.18\textwidth]{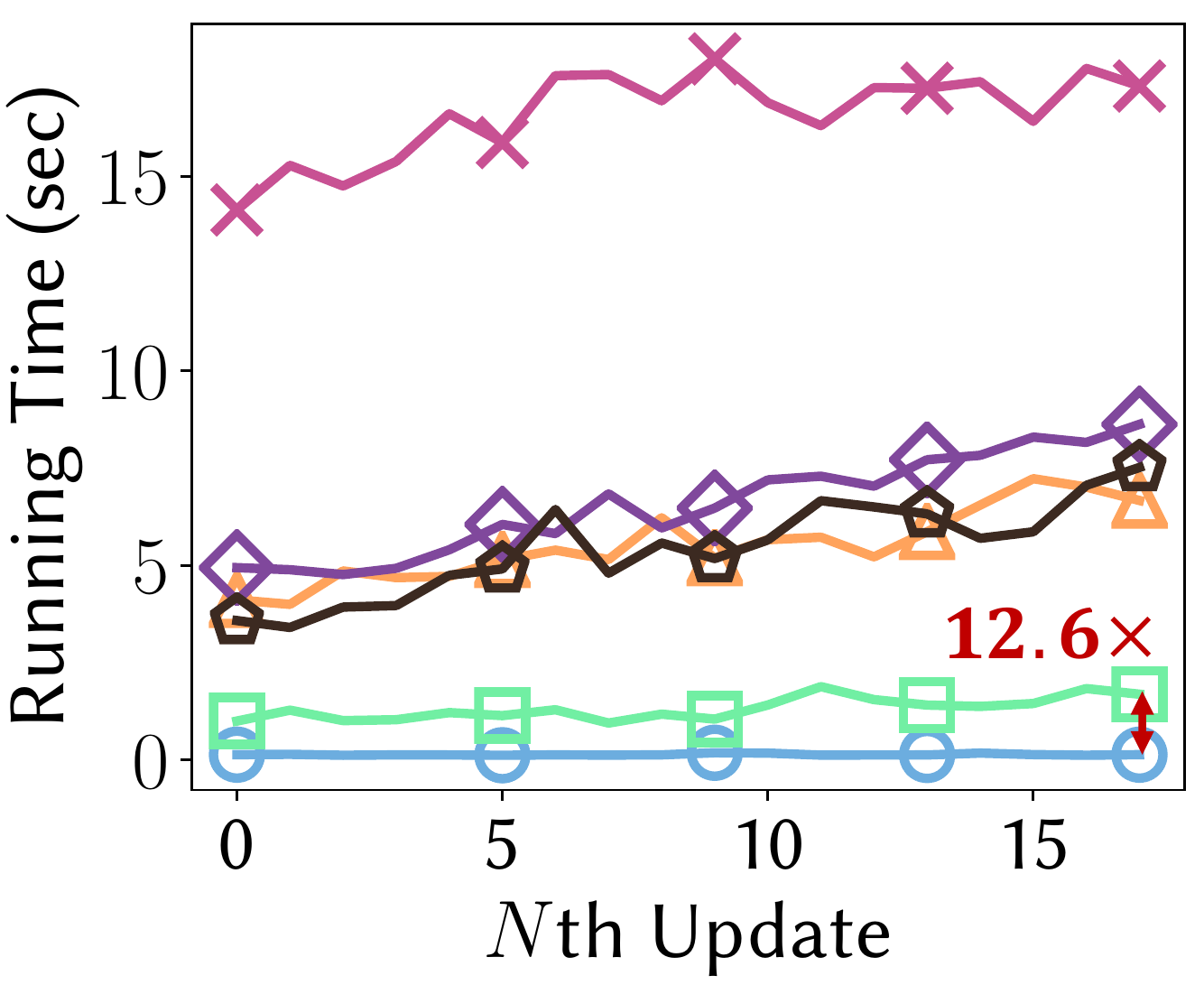}\label{fig:perf_pems}}	
	 	 \\
	\caption{Running time of \method and competitors for a new tensor on
real-world datasets.
	For each method, we draw a line connecting points of all updates, and five or six representative points with large markers.
	\method outperforms competitors with up to $14.0\times$ faster speed in the setting where only new rows of exiting slice matrices arrive over time.
	}
	\label{fig:perf}
\end{figure*}

\subsection{Experiment Setting}
\label{subsec:exp_setup}

\textbf{Machine.}
We conduct experiments on a workstation with 2 CPUs (Intel Xeon E5-2630 v4 @ 2.2GHz) and 512GB memory.

\textbf{Data.}
We use real-world datasets described in Table~\ref{tab:Description}.
The first four datasets are stock datasets.
Each stock dataset is a collection of stocks in United States\footnote{\url{https://datalab.snu.ac.kr/dpar2}\label{footnote:us}}, South Korea\footnote{\url{https://github.com/jungijang/KoreaStockData}\label{footnote:kr}}, Japan\footnote{\url{https://datalab.snu.ac.kr/atom/}\label{footnote:jpn}}, and China$^{\ref{footnote:jpn}}$ stock markets, respectively.
For stock datasets, each stock is represented as a slice matrix of an irregular tensor where rows represent the dates and columns represent the features.
There are 85 features: five basic features (opening price, closing price, highest price, lowest price, and trading volume) collected daily, and 80 technical indicators calculated from the basic features.
The next two datasets VicRoads\footnote{\url{https://github.com/florinsch/BigTrafficData}\label{footnote:vicroads}} and PEMS\footnote{\url{http://www.timeseriesclassification.com/}\label{footnote:pems}} contain traffic information.
For traffic datasets, each slice matrix corresponds to a location while its rows and columns represent the dates and timeframes, respectively.
Timeframes are fixed time periods throughout the day, which are 15 minutes long each for VicRoads and 10 minutes for PEMS.
As described in Table~\ref{tab:Description}, each dataset is included in one of the following two settings: 1) on US and KR Stock datasets, both new rows of existing slice matrices and new slice matrices arrive, and 2) on JPN Stock, CHN Stock, VicRoads, and PEMS datasets, only new rows of existing slice matrices arrive.
Appendix~\ref{appendix:tensor_size} describes the size of newly arrived data per update on US and KR Stock datasets.
For the other datasets, the size of newly arrived data is almost the same at each update since the number of slice matrices is constant.

\textbf{Initialization and update cycle.}
Before starting an update cycle, we initialize factor matrices of \method.
We use 20\% of the entire period as an initialization step, and the size of an initial tensor is described in Table~\ref{tab:Description}.
In addition, we choose update cycles depending on the total size of data as described in Table~\ref{tab:Description}.

\textbf{Competitors.}
We compare \method with existing PARAFAC2 decomposition methods which improve the efficiency of PARAFAC2 decomposition in other settings (e.g., static setting or a limited streaming setting) as there is no PARAFAC2 decomposition method designed for working in a dual-way streaming setting.
\begin{itemize*}
	\item \textbf{\als}: a base PARAFAC2 decomposition method for an irregular tensor.
	\item \textbf{\spartan}~\cite{PerrosPWVSTS17}: an efficient PARAFAC2 decomposition for irregular sparse tensors.
	\item \textbf{\rd}~\cite{cheng2019efficient}: a PARAFAC2 decomposition method with a direct fitting-based scheme.
	\item \textbf{\dpar}~\cite{JangK22}: an efficient PARAFAC2 decomposition method for irregular dense tensors.
	\item \textbf{\spade}~\cite{gujral2020spade}: an efficient PARAFAC2 decomposition method for newly arrived slice matrices. The implementation detail is described in Appendix~\ref{appendix:implement_spade}.
\end{itemize*}
Note that all the methods are implemented in MATLAB, and we use Tensor Toolbox~\cite{bader2019matlab}.

\begin{table*}[!t]
\centering
        \caption{
        Comparison for local errors.
        We measure mean and standard deviation for all updates.
        Bold and underlined values denote the lowest and the second-lowest local errors, respectively.
        \method achieves the lowest local errors than competitors. 
        }
        \label{tab:local_error}
                \resizebox{0.9\textwidth}{!}{
\begin{tabular}{lrrrrrrrr}
\toprule
\multicolumn{7}{c}{Local errors for newly arrived data}  \\
\midrule
\multicolumn{1}{l|}{Method}          & \multicolumn{1}{c}{US Stock} & \multicolumn{1}{c}{KR Stock} & \multicolumn{1}{c}{JPN Stock} & \multicolumn{1}{c}{CHN Stock} & \multicolumn{1}{c}{VicRoads} & \multicolumn{1}{c}{PEMS} \\
\midrule
\multicolumn{1}{l|}{\als}        & {$\underline{0.1130 \pm 0.0074}$}        & ${0.0809 \pm 0.0124}$            & ${0.1312 \pm 0.0080}$            & ${0.1333 \pm 0.0098}$ & $ {0.0519 \pm 0.0128}$ & $ {0.1137 \pm 0.0033}$  \\
\multicolumn{1}{l|}{\spartan}        & {${0.1131 \pm 0.0074}$}        & $\underline{0.0808 \pm 0.0122}$            & ${0.1312 \pm 0.0078}$            & ${0.1334 \pm 0.0101}$ & $ {0.0520 \pm 0.0128}$ & $ {0.1135 \pm 0.0033}$  \\
\multicolumn{1}{l|}{\rd}        & {${0.1141 \pm 0.0079}$}        & ${0.0822 \pm 0.0123}$            & ${0.1328 \pm 0.0087}$            & ${0.1352 \pm 0.0088}$ & $ \underline{0.0482 \pm 0.0126}$ & $ {0.1129 \pm 0.0035}$  \\
\multicolumn{1}{l|}{\dpar}        & {${0.1133 \pm 0.0072}$}        & ${0.0823 \pm 0.0122}$            & ${0.1331 \pm 0.0091}$            & ${0.1361 \pm 0.0093}$ & ${0.0491 \pm 0.0001}$ & $ \underline{0.1103 \pm 0.0030}$  \\
\multicolumn{1}{l|}{\spade}        & {${0.2225 \pm 0.0085}$}        & ${0.1554 \pm 0.0187}$            & $\underline{0.1311 \pm 0.0081}$            & $\underline{0.1331 \pm 0.0095}$ & $ {0.0520 \pm 0.0128}$ & $ {0.1138 \pm 0.0033}$  \\
 \midrule
\multicolumn{1}{l|}{\method (proposed)}        & {$\mathbf{0.1010 \pm 0.0047}$}        & $\mathbf{0.0724 \pm 0.0096}$            & $\mathbf{0.1185 \pm 0.0080}$            & $\mathbf{0.1196 \pm 0.0080}$ & $ \mathbf{0.0435 \pm 0.0121}$ & $ \mathbf{0.0912 \pm 0.0030}$  \\
\bottomrule
\end{tabular}}
\end{table*}

\begin{figure*}
	\centering	
	 \subfloat[Scalability on JPN Stock]{\includegraphics[width=0.2\textwidth]{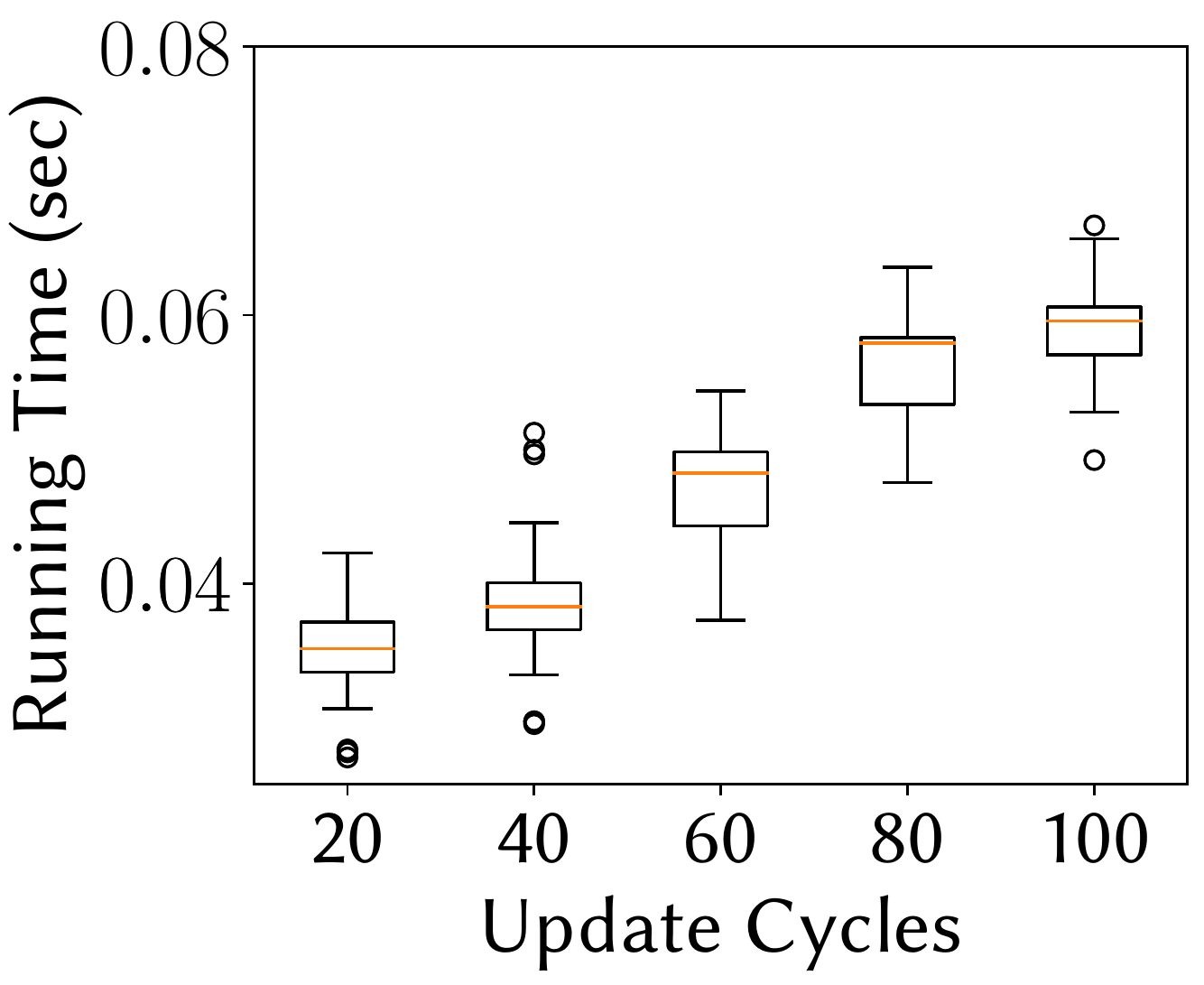}\label{fig:scala_jpn}} \quad\quad
	 \subfloat[Scalability on CHN Stock]{\includegraphics[width=0.2\textwidth]{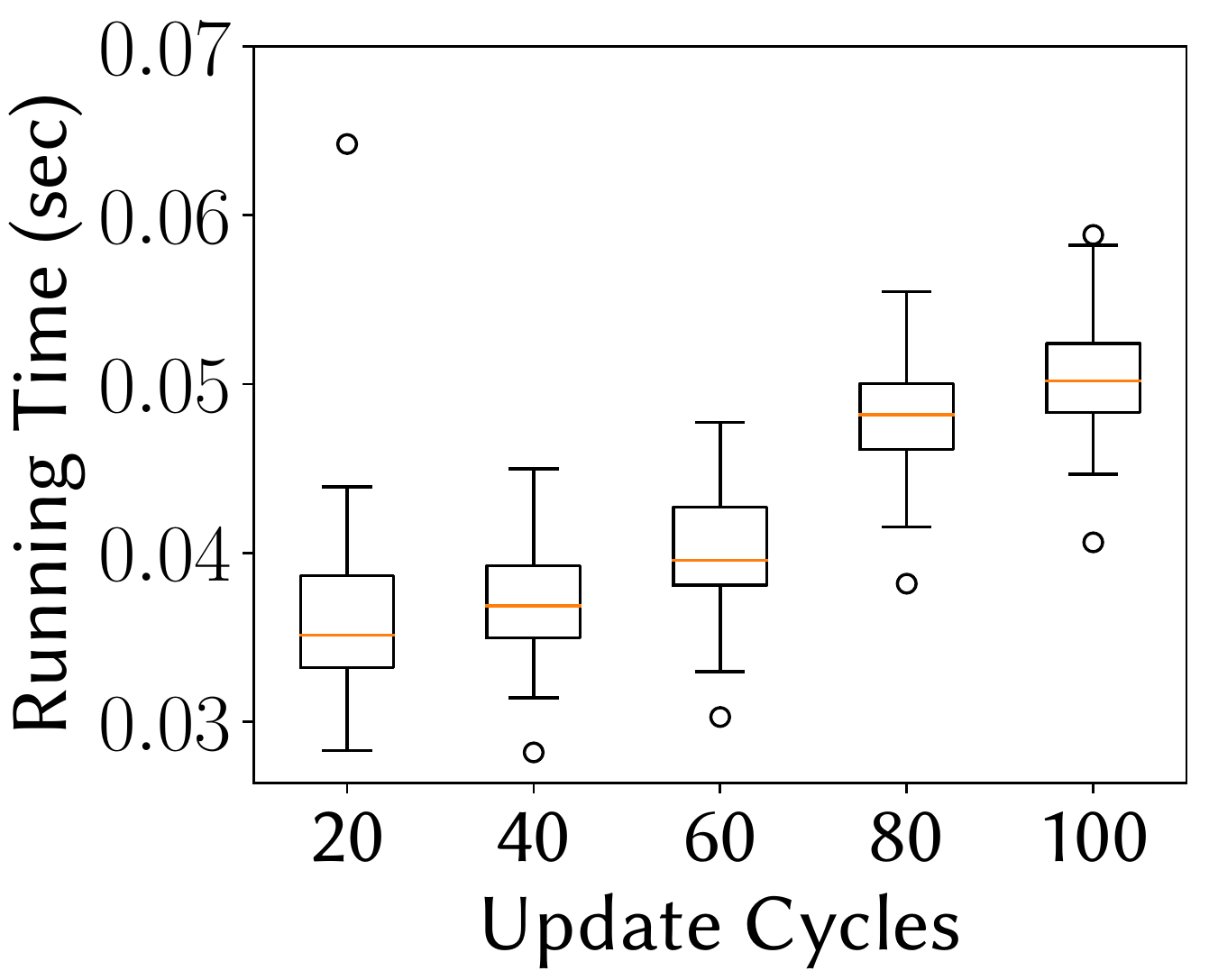}\label{fig:scala_chn}}	\quad\quad	
	 \subfloat[Scalability on VicRoads]{\includegraphics[width=0.2\textwidth]{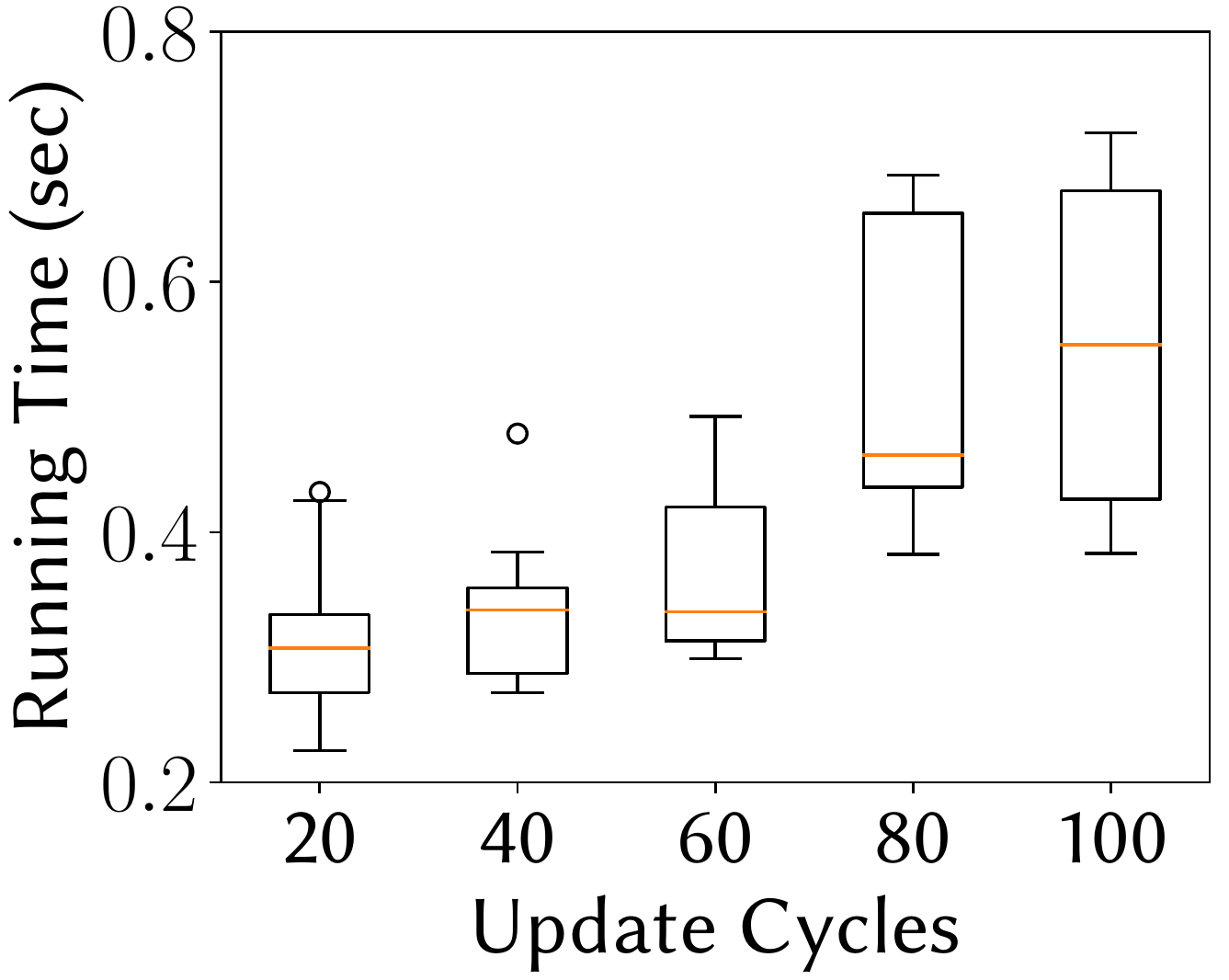}\label{fig:scala_traffic}} \quad\quad
	 \subfloat[Scalability on PEMS]{\includegraphics[width=0.195\textwidth]{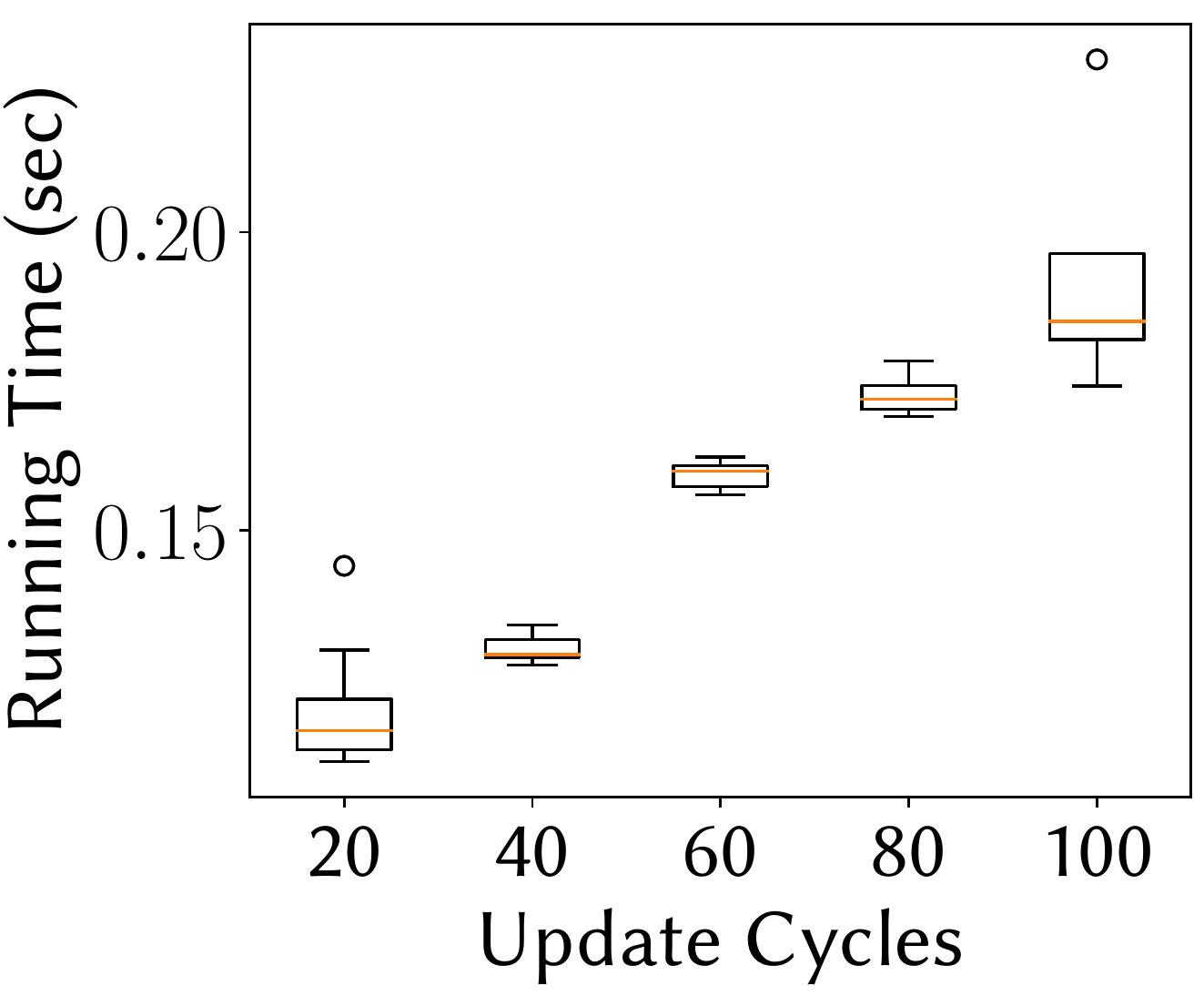}\label{fig:scala_pems}}	
	 	 \\
	\caption{Scalability with respect to five update cycles: $[20, 40, 60, 80, 100]$.
	The size of new rows of existing slice matrices is linearly proportional to an update cycle.
	Note that the running time of \method increases linearly with the update cycles.
	}
	\label{fig:scalability}
\end{figure*}

\textbf{Hyperparamter setting.}
There are several hyperparameters used in experiments: target rank $R$, maximum iteration $N$, and a forgetting factor $\lambda$.
We set the target rank $R$ and maximum iteration to $10$.
Except in Section~\ref{subsec:forgetting_sensitivity}, we set the forgetting factor $\lambda$ to $0.7$.

\textbf{Normalization.}
We normalize an initial tensor and newly arrived data using a min-max scaler for each column of slice matrices.
As an example of a new slice matrix $\mat{X}_{k,new}$, we normalize $\mat{X}_{k,new}(:,j)$ to $\frac{\mat{X}_{k,new}(:,j) - minimum_{k,j}}{maximum_{k,j} - minimum_{k,j}}$ where $maximum_{k,j}$ and  $minimum_{k,j}$ are maximum and minimum values of the $j$th column of the $k$th slice matrix $\mat{X}_{k,new}$, respectively.

\textbf{Local and global errors.}
We use two reconstruction errors for newly arrived data and an accumulated tensor, respectively.
A local error is computed by Definition~\ref{def:tensor_level} while a global error is computed as follows:
%
\vspace{-1mm}
\begin{align*}
\small
\begin{split}
	&\text{Global Error} = \frac{1}{K}\sum_{k=1}^{K}{\frac{1}{I_{k,old}\times J}\sum_{i,j}{|\mat{X}_{k,old}(i,j) - \hat{\mat{X}}_{k,old}(i,j)|}}  \\
	&+ \frac{1}{K+L}\sum_{k=1}^{K+L} {\frac{1}{I_{k,new}\times J} \sum_{i,j}{| \mat{X}_{k,new}(i,j) - \hat{\mat{X}}_{k,new}(i,j) |}}
	\end{split}
\end{align*}
where $I_{k,old}$ is the row length of the $k$th slice matrix of a tensor accumulated until the previous update.
$\hat{\mat{X}}_{k,old}$ is reconstructed by $\mat{U}_{k,old}\mat{S}_{k}$ $\mat{V}^T$.

\subsection{Performance for Newly Arrived Data (Q1)}
\label{subsec:performance}

We compare the performance of \method with that of competitors in terms of efficiency and local errors.

\textbf{Efficiency.}
We compare the performance of \method with that of competitors in a dual-way streaming setting.
In Figures, the running time for the $N$th update indicates an update time for the $N$th newly arrived data, not accumulated time.
For US and KR Stock datasets, Figures~\ref{fig:perf_us} and~\ref{fig:perf_kr} show the results in the first setting where both new rows of existing slice matrices and new slice matrices arrive.
\method outperforms the existing static PARAFAC2 decomposition methods and the streaming PARAFAC2 decomposition method, by up to $12.9\times$ faster speed than competitors.
For JPN Stock, CHN Stock, VicRoads, and PEMS datasets, Figure~\ref{fig:perf} shows that \method achieves up to $14.0\times$ faster speed than competitors in the second setting where new rows of slice matrices arrive.

\textbf{Local Error.}
We evaluate the performance of \method in terms of local errors,
by measuring their mean and standard deviation for all updates.
Table~\ref{tab:local_error} shows that \method has much lower local errors than competitors for all the datasets.
This is because \method fits factor matrices to newly arrived data using a forgetting factor $\lambda$.
In a dual-way streaming setting, reducing local errors is more crucial than reducing global errors since new data arrive infinitely.
%
%
The comparison for global errors is described in Appendix~\ref{appendix:errors}.

\subsection{Scalability for Newly Arrived Data Size (Q2)}
\label{subsec:scalability}
To evaluate the scalability with respect to the size of a new incoming tensor, we measure the running time for five update cycles: $[20, 40, 60,$ $ 80, 100]$.
The length of new rows of existing slice matrices is linearly proportional to an update cycle, and the number of new slice matrices at each update is different depending on an update cycle.
We report results using box plots where an orange line denotes the median, a box is constructed by two quartiles $Q3$ and $Q1$, two horizontal lines indicate $2.5*Q3 - 1.5*Q1$ and $2.5*Q1 - 1.5*Q3$, and points indicate outliers.
Figures~\ref{fig:scalability_us},~\ref{fig:scalability_kr}, and~\ref{fig:scalability} show that the running time of \method is linearly proportional to update cycles.
Note that Figures~\ref{fig:scalability_us},~\ref{fig:scalability_kr} show the experimental results for the setting where new matrices and new rows arrive, while Figure~\ref{fig:scalability} shows the ones for the one where only new rows arrive.

\subsection{Forgetting Factor Sensitivity (Q3)}
\label{subsec:forgetting_sensitivity}
We evaluate the sensitivity of a forgetting factor by varying its hyperparameter $\lambda$ in $[0.1, 0.3, 0.5, 0.7, 0.9]$.
For each forgetting factor, we measure global and local errors for an accumulated tensor and newly incoming data, respectively, and then average each of them.
Figure~\ref{fig:sensitivity} shows that the forgetting factor provides trade-offs between local and global errors.
As the value of the forgetting factor increases, the local errors increase while the global errors decrease.
The lowest and the highest forgetting factors (i.e., $0.1$ and $0.9$) provoke high global errors and low local errors, respectively, so that choosing them should be avoided.
If we focus more on reducing local errors, we consider using a forgetting factor as $0.7$.

\begin{figure*}
	 \subfloat{\includegraphics[width=0.4\textwidth]{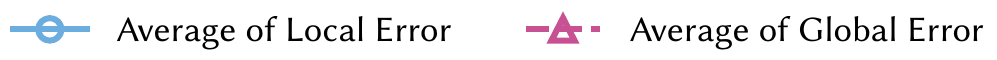}} \\
	\vspace{-3mm}	
	 \setcounter{subfigure}{0}
	\centering	
	 \subfloat[Sensitivity on US Stock]{\includegraphics[width=0.15\textwidth]{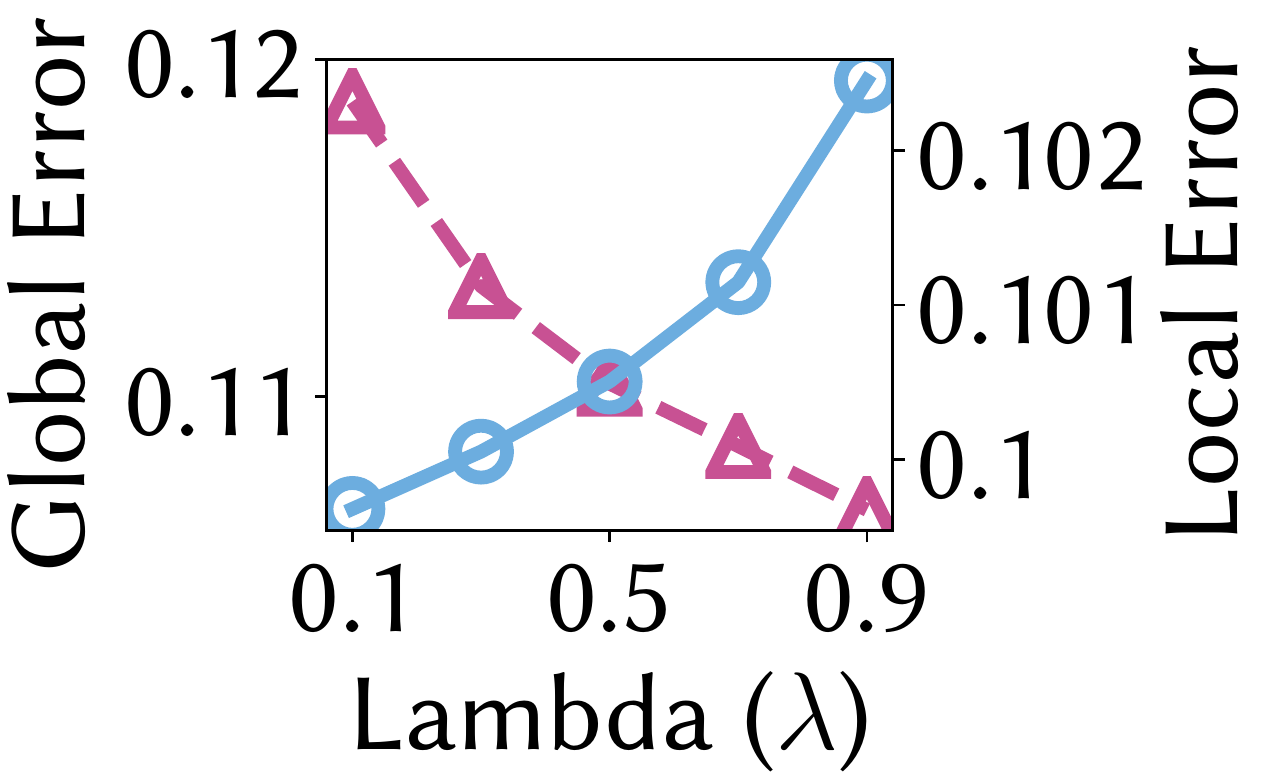}\label{fig:sensitivity_us}}
	 \subfloat[Sensitivity on KR Stock]{\includegraphics[width=0.155\textwidth]{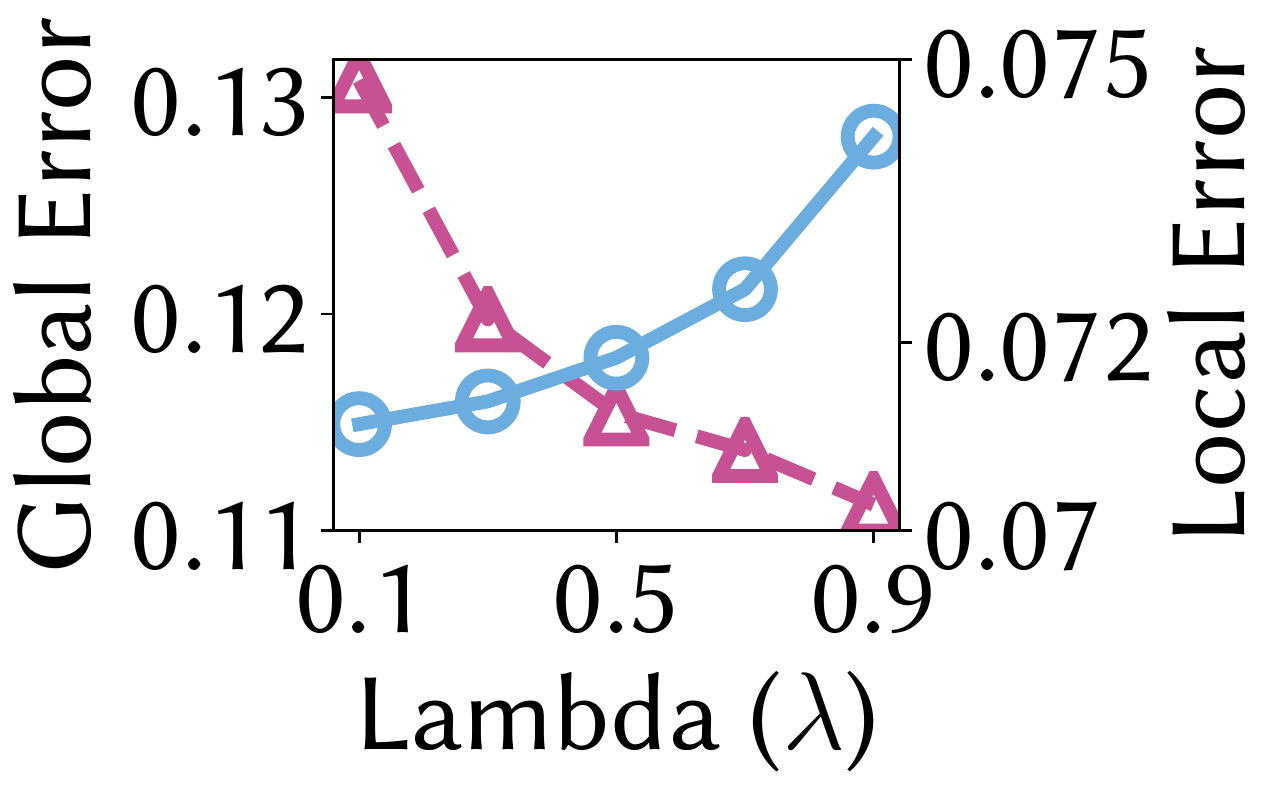}\label{fig:sensitivity_kr}}
	 \subfloat[Sensitivity on JPN Stock]{\includegraphics[width=0.15\textwidth]{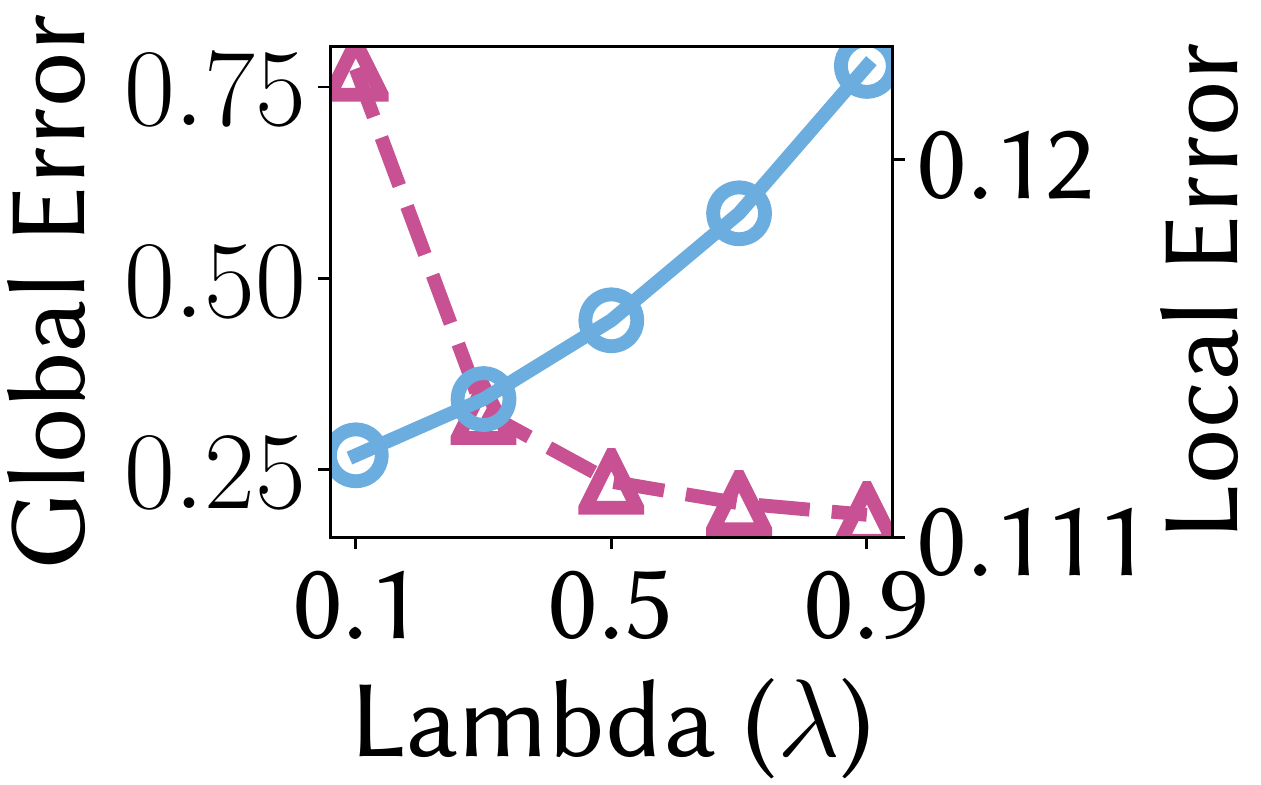}\label{fig:sensitivity_jpn}}
	 \subfloat[Sensitivity on CHN Stock]{\includegraphics[width=0.155\textwidth]{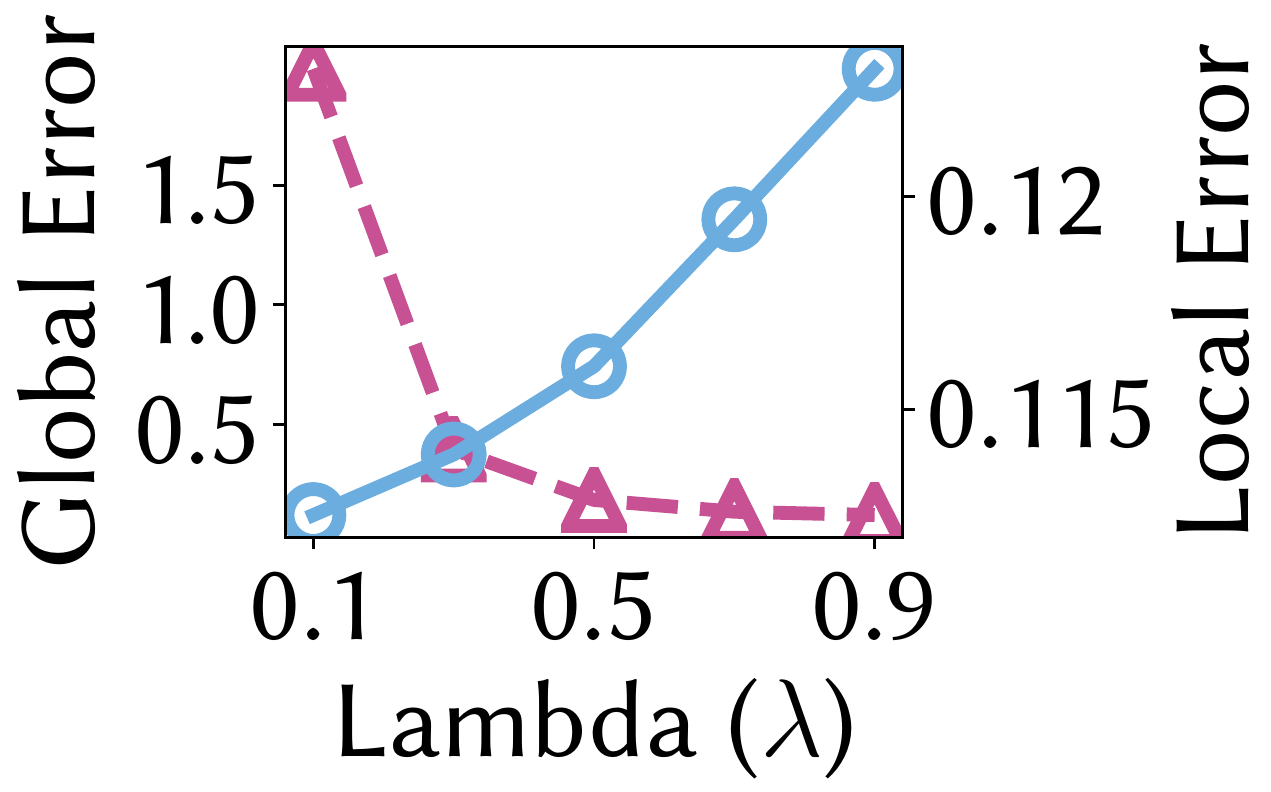}\label{fig:sensitivity_chn}}	
	 \subfloat[Sensitivity on VicRoads]{\includegraphics[width=0.165\textwidth]{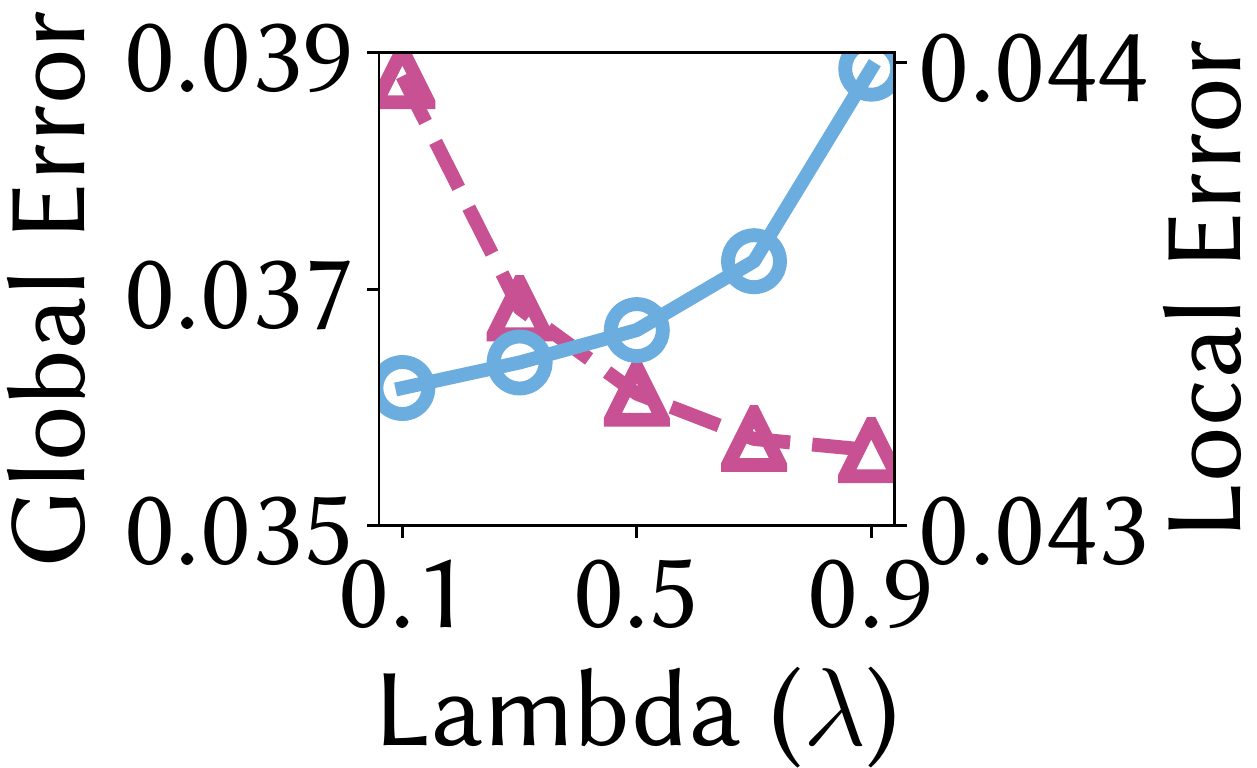}\label{fig:sensitivity_traffic}}
	 \subfloat[Sensitivity on PEMS]{\includegraphics[width=0.16\textwidth]{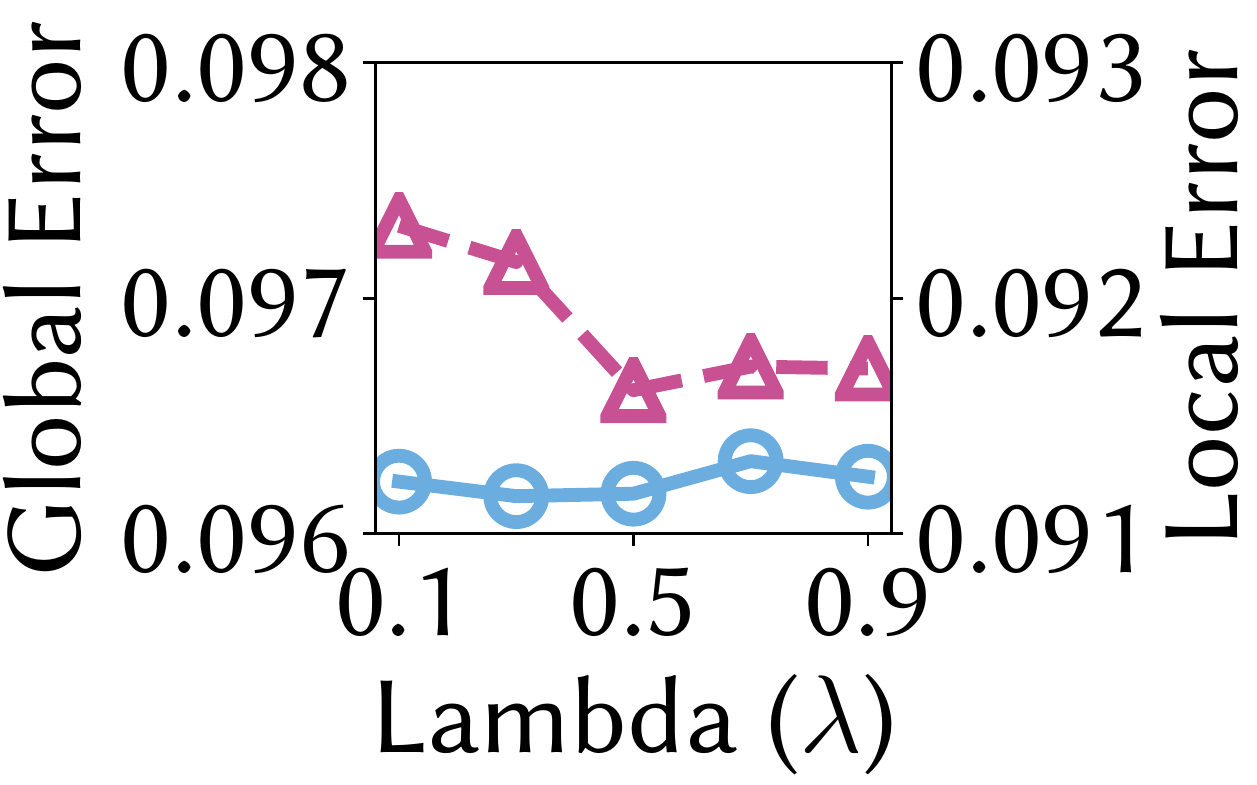}\label{fig:sensitivity_pems}}		  	 	
	 	 \\
	\caption{Error measurement with respect to forgetting factors $\lambda$: $[0.1, 0.3, 0.5, 0.7, 0.9]$.
	Global and local errors are computed for an accumulated tensor and a new incoming tensor, respectively.
	The forgetting factor provides trade-offs between global and local errors.
The values of the forgetting factor around $0.1$ and $0.9$ provoke high global error and high local error, respectively.
	We set it to $0.7$ since local errors highly increase between $0.7$ and $0.9$.
	}
	\label{fig:sensitivity}
\end{figure*}

\begin{figure}
	\centering	
	\includegraphics[width=0.24\textwidth]{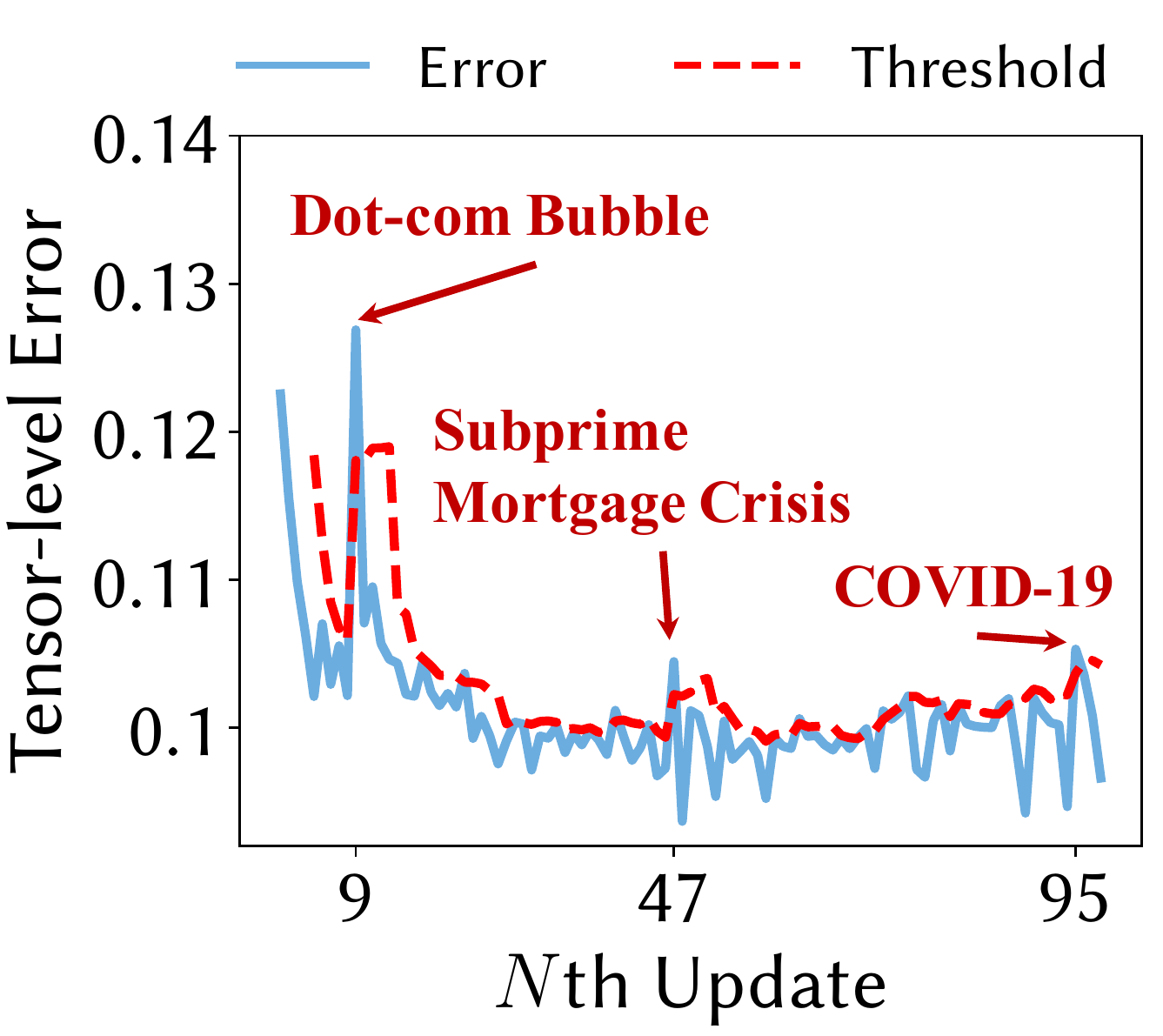}
	 	 \\
	\caption{Tensor-level anomaly detection.
	We find three tensor-level anomalies which are linked to big events in the stock market: the Dot-com Bubble, Subprime Mortgage Crisis, and COVID-19.
	}
	\label{fig:tensorlevel_anomaly}
\end{figure}

\subsection{Anomaly Detection (Q4)}
\label{subsec:experim_anomaly}

We present case studies of detecting anomalies using \method.
We analyze US Stock dataset, and detect two types of anomalies: tensor-level and slice-level anomalies.
At each update, we set the sum of the moving average and the moving standard deviation of the window size $5$ as a threshold.
Note that an update cycle is $60$ days.

\textbf{Tensor-level anomaly detection.}
We discover tensor-level anomalies by measuring local errors defined in Definition~\ref{def:tensor_level} for all updates.
Figure~\ref{fig:tensorlevel_anomaly} shows the top-$3$ anomalies out of $12$ tensor-level anomalies which have higher errors than the threshold.
The first anomaly is discovered when \method updates factor matrices for new data collected between Oct. 29, 1999 to Jan 25, 2000.
After this period associated with the Dot-com Bubble, the S\&P 500 index hit its peak in March 2000 and plummeted since then.
The second anomaly is discovered when \method updates factor matrices for new data collected between Nov. 21, 2008 to Feb. 19, 2009.
After this period related to the Subprime Mortgage Crisis, the S\&P 500 index hit its lowest level in March 2009 and soared since then.
The third anomaly is discovered 
between May 4, 2020 to July 28, 2020.
The S\&P 500 index recovered in this period after it plummeted until March 2020.
All three anomalies occur around a large turning point.

\textbf{Slice-level anomaly detection.}
We discover slice-level anomalies for Microsoft stock whose data correspond to a slice matrix.
At each update, we measure a slice-level error defined in Definition~\ref{def:slice_level}, and identify an anomaly based on a threshold.
Figure~\ref{fig:slicelevel_anomaly} shows that the slice-level error is much higher than the threshold when \method updates factor matrices for new data collected in Jan. 13, 2006 - April 10, 2006.
We further analyze the abnormal period.
As shown in Figure~\ref{fig:msft_price}, Microsoft stock has a smaller difference between the highest and the lowest prices in this period than in the previous period (July 25, 2005 to Jan. 12, 2006): the difference is $2.16\$$ in this period while the difference is $3.7\$$ in the previous period.
Interestingly, after the abnormal period, the price plunged more than 10\% due to the quarterly earnings report and then recovers.

\section{Related Works}
\label{sec:related}

\textbf{Irregular tensor decomposition in a static setting.}
Many previous works have proposed effective PARAFAC2 decomposition methods working in a static setting.
SPARTan~\cite{PerrosPWVSTS17} is an efficient PARAFAC2 decomposition method for analyzing EHR (Electronic Health Record) data.
COPA~\cite{AfsharPPSHS18} is an effective method which utilizes useful constraints in the objective function of PARAFAC2 decomposition.
REPAIR~\cite{ren2020robust} is a robust method for irregular tensors having missing values and erroneous values.
LogPar~\cite{yin2020logpar} handles a binary irregular tensor using PARAFAC2 decomposition.
TedPar~\cite{yin2021tedpar} develops an effective PARAFAC2 decomposition method by exploiting temporal dependency inherent in irregular tensors.
RD~\cite{cheng2019efficient} and DPar2~\cite{JangK22} are efficient PARAFAC2 decomposition methods for irregular tensors.
Jang et al.~\cite{Jang2022accurate} propose an accurate PARAFAC2 decomposition method for a temporal irregular tensor with missing values.
APTERA~\cite{gujralaptera} finds the target rank of PARAFAC2 decomposition automatically.
Although the above methods work effectively in a static setting,
they fail to deal with a dual-way streaming setting where existing slice matrices grow over time and new slice matrices continuously arrive.
The running time of the above methods explosively increases as tensor data are accumulated over time.
In contrast to the aforementioned methods, \method efficiently updates factor matrices for a newly arrived irregular tensor.

\textbf{Tensor decomposition in a streaming setting.}
Many previous tensor decomposition methods efficiently update factor matrices in a streaming setting where data are collected over time.
%
Online streaming tensor decomposition methods~\cite{NionS09,zhou2016accelerating,zhou2018online,jang2022static} have attracted much attention to efficiently analyze a tensor when the size of the time dimension of a tensor increases over time.
%
Ahn et al.~\cite{AhnKK21} and Lee et al.~\cite{LeeS21} utilize the temporal patterns of streaming tensors.
Pasricha et al.~\cite{PasrichaGP18} and Son et al.~\cite{son2022dao} incorporate the ideas of concept drift and change point detection in streaming tensor decomposition, respectively.
Soh et al.~\cite{SohFLSCPC21} propose an efficient CP decomposition method for a streaming tensor on parallel systems.
In addition to the above methods which are applicable when only one mode increases,
many recent works~\cite{song2017multi,najafi2019outlier,NimishakaviMGT18,XiaoWMG18,YangY20,YangGSZC21} extend the streaming setting even further to handle incoming data in more than one direction.
However, none of the methods mentioned above considers handling irregular tensors.
In contrast to the above methods, \method deals with irregular tensors in a streaming setting.
SPADE~\cite{gujral2020spade} efficiently updates factor matrices for new slice matrices of an irregular tensor, but fails to efficiently handle newly arrived data of existing slice matrices.
\method is the only tensor decomposition method to efficiently deal with both new rows of existing slice matrices and new slice matrices of an irregular tensor.

\begin{figure}
	\centering	
	 \subfloat[Slice-level anomaly detection]{\includegraphics[width=0.23\textwidth]{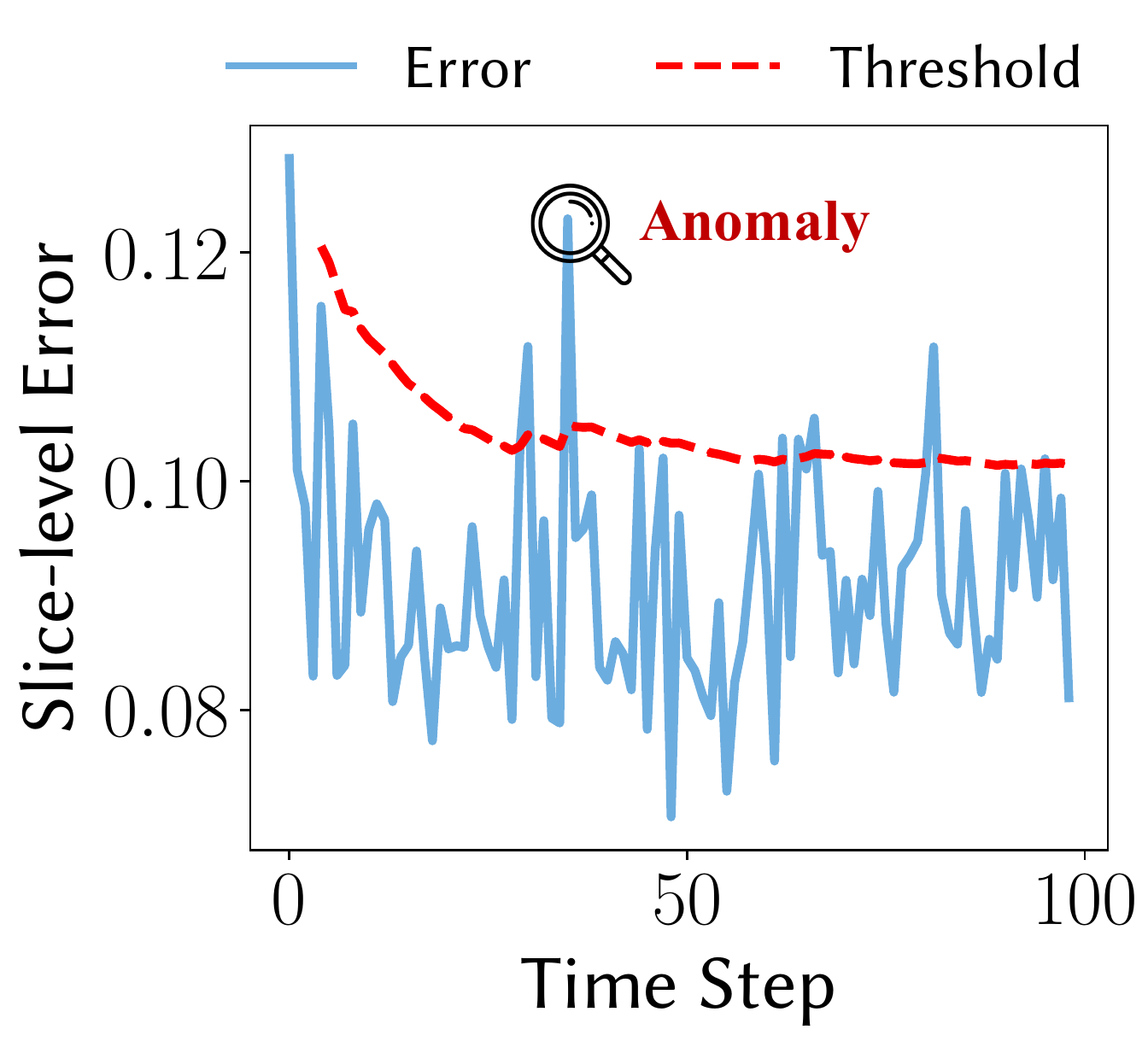}\label{fig:slicelevel_anomaly}}	
	 \subfloat[The price of Microsoft stock]{\includegraphics[width=0.23\textwidth]{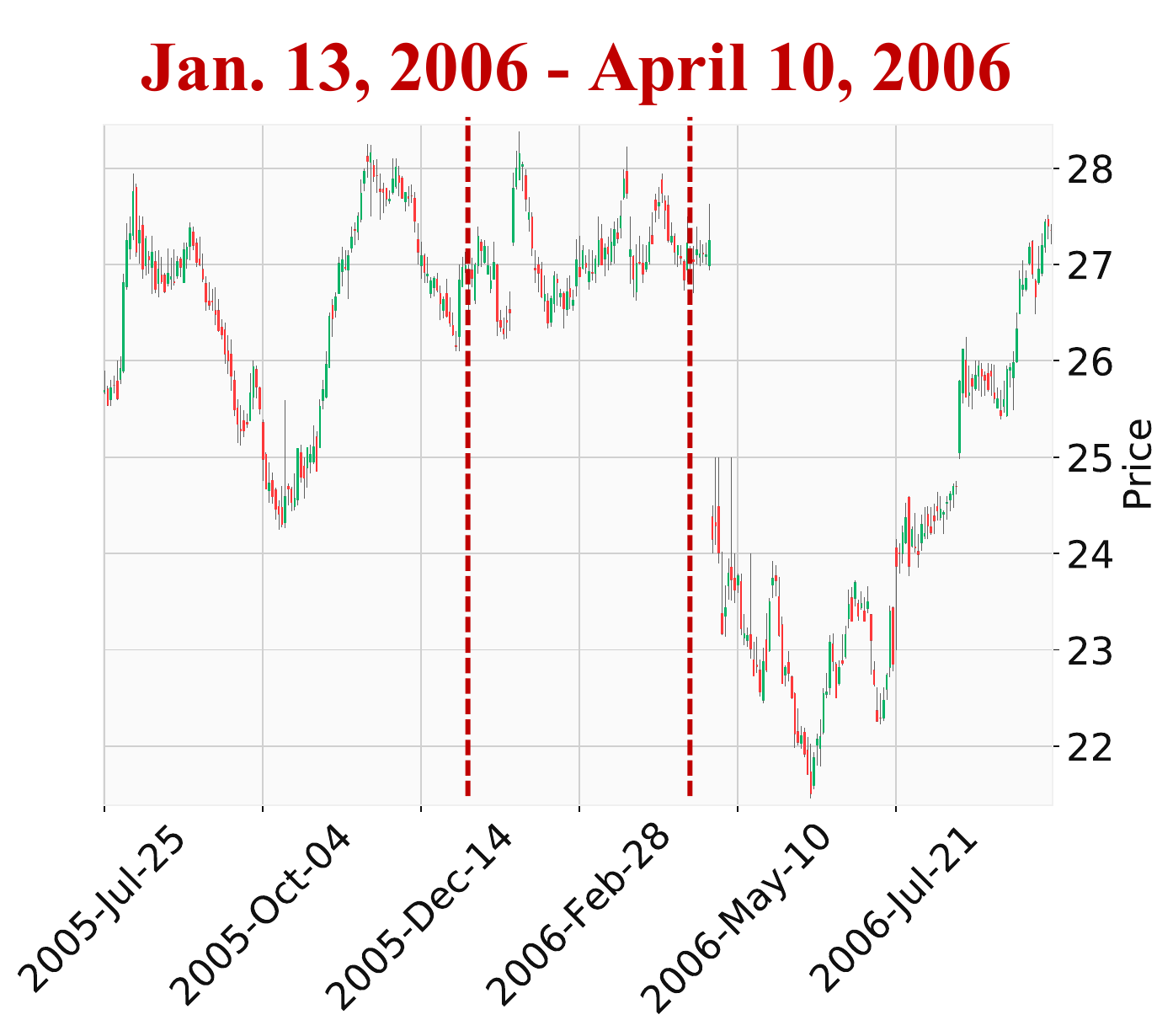}\label{fig:msft_price}}	
	\caption{Slice-level anomaly detection.
	(a) Top-$1$ anomaly of the Microsoft stock, detected by \method.	
	(b) The prices of Microsoft stock between July 25, 2005 and Sep. 29, 2006.
	}
	\label{fig:msft}
\end{figure}

\section{Conclusion}
\label{sec:conclusion}
We propose \method, a fast and accurate PARAFAC2 decomposition method in a dual-way streaming setting where new rows of matrices and new matrices simultaneously arrive over time.
We divide the terms related to old and new data and then propose an efficient update rule that allows us to avoid computing the terms involved with old data.
Furthermore, we add a forgetting factor so that \method fits factor matrices more to a newly arrived tensor while discarding old information gradually.
We experimentally show that \method is up to $14.0\times$ faster than existing PARAFAC2 decomposition methods in the dual-way streaming setting.
Thanks to \method, we discover various types of anomalies in a real-world dataset,
including Subprime Mortgage Crisis and COVID-19.
Our future direction is to adjust a forgetting factor adaptively considering the characteristics of newly arrived data.


\begin{acks}
This work was supported by the National Research Foundation of Korea(NRF) funded by MSIT(2022R1A2C3007921).
This work was also supported by Institute of Information \& communications Technology Planning \& Evaluation(IITP) grant funded by the Korea government(MSIT) [No.2021-0-01343, Artificial Intelligence Graduate School Program (Seoul National University)], and [No.2021-0-02068, Artificial Intelligence Innovation Hub (Artificial Intelligence Institute, Seoul National University)].
The Institute of Engineering Research at Seoul National University provided research facilities for this work.
The ICT at Seoul National University provides research facilities for this study.
U Kang is the corresponding author.
\end{acks}

\bibliographystyle{ACM-Reference-Format}
\bibliography{mybib}

\appendix

\label{sec:appendix}

\begin{figure*}[t!]
	 \subfloat{\includegraphics[width=0.4\textwidth]{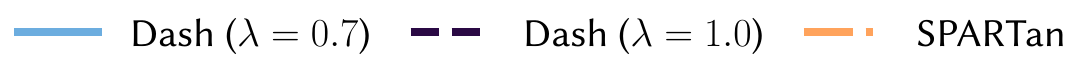}} \\
	\vspace{-3mm}	
	 \setcounter{subfigure}{0}
	\centering	
	 \subfloat[Global Error on US Stock]{\includegraphics[width=0.16\textwidth]{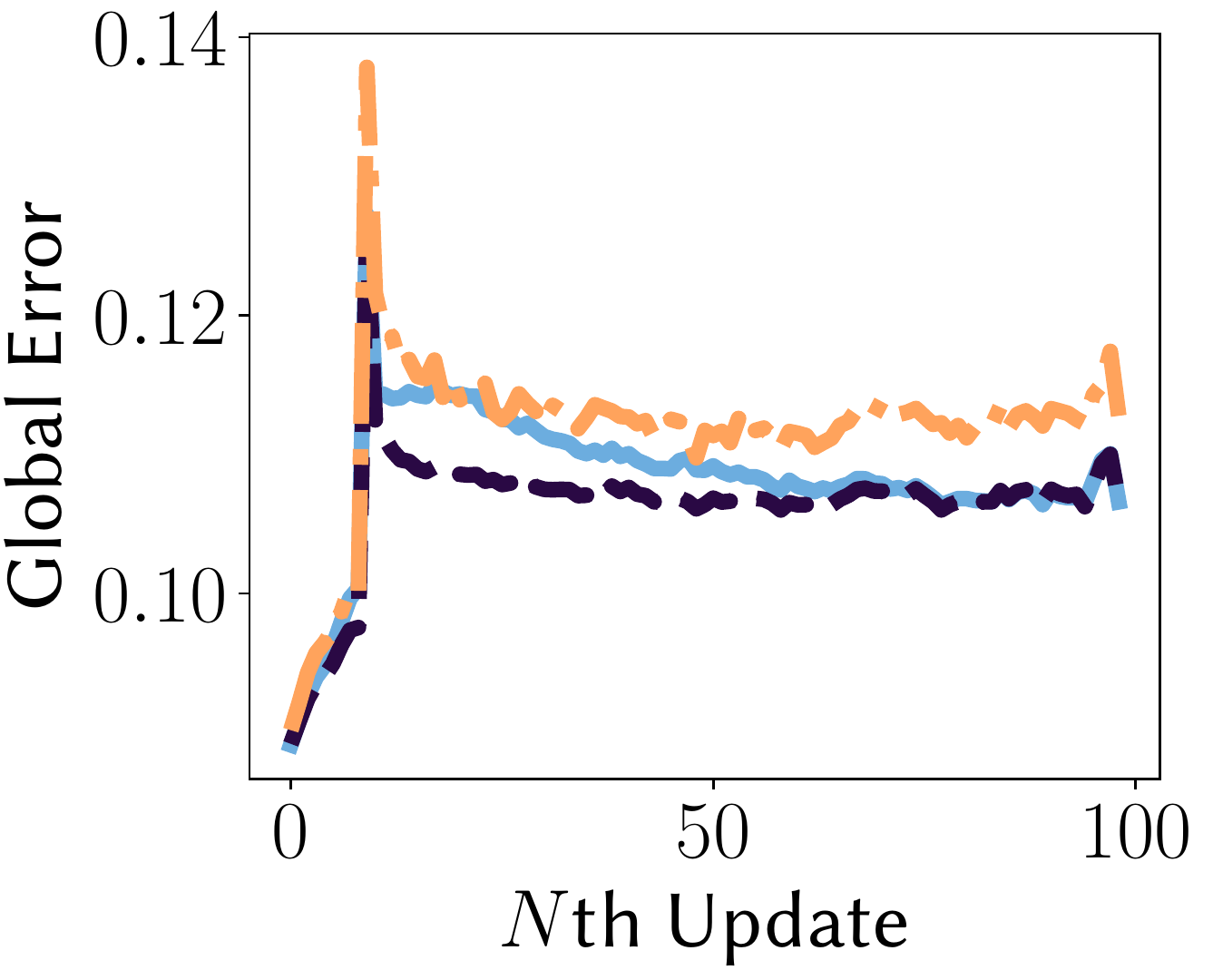}\label{fig:globalerror_us}}
	 \subfloat[Global Error on KR Stock]{\includegraphics[width=0.16\textwidth]{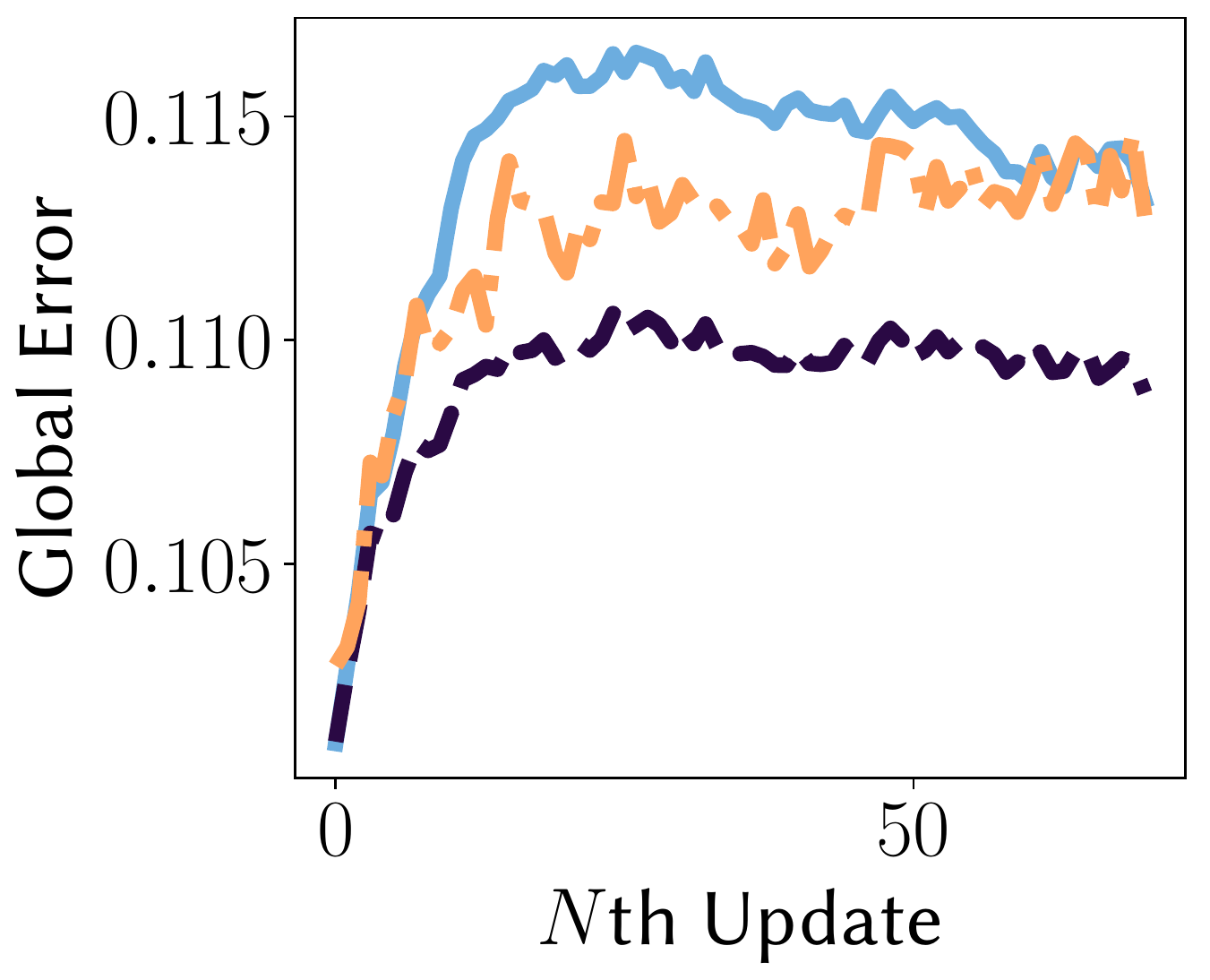}\label{fig:globalerror_kr}}
	 \subfloat[Global Error on JPN Stock]{\includegraphics[width=0.16\textwidth]{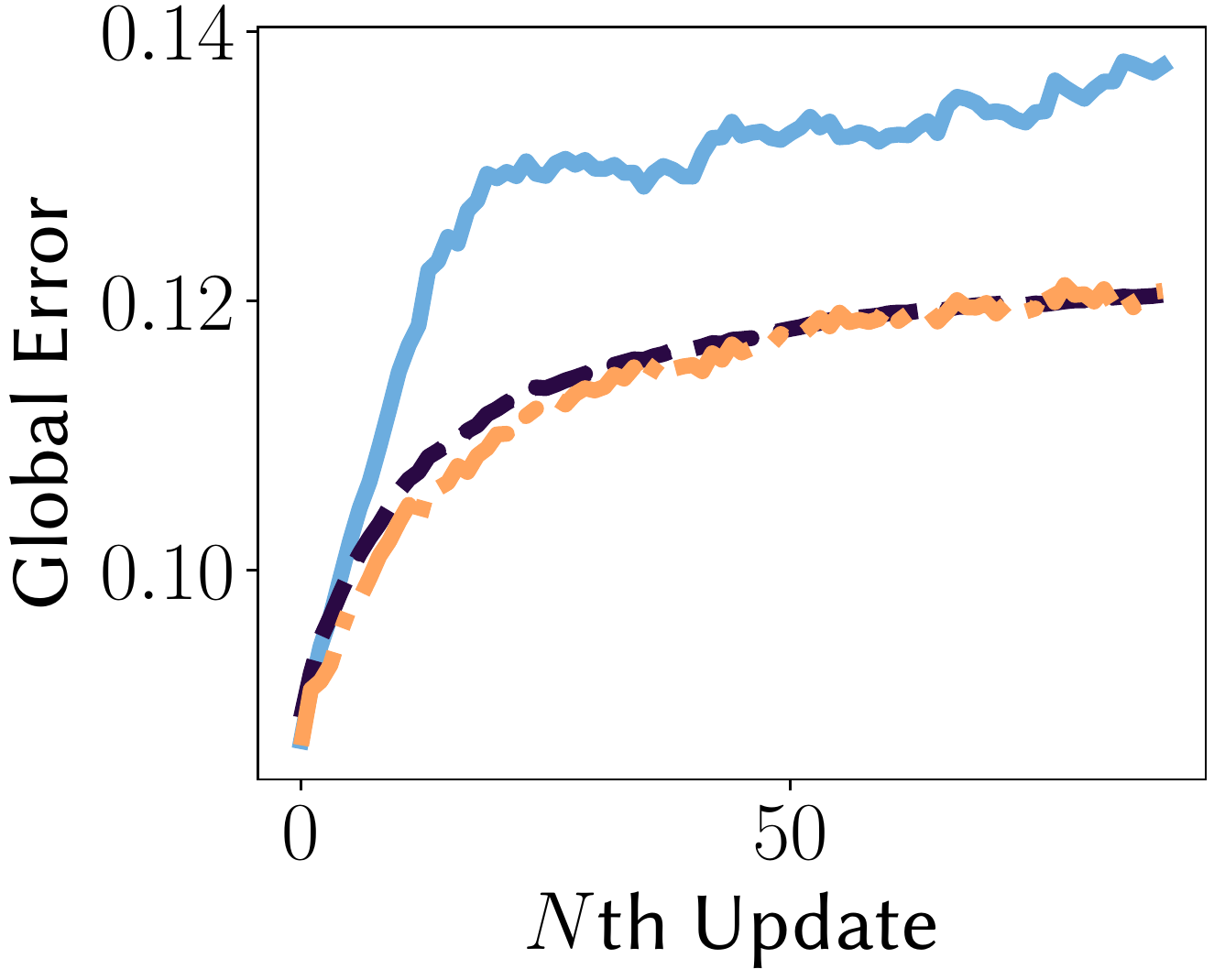}\label{fig:globalerror_jpn}}
	 \subfloat[Global Error on CHN Stock]{\includegraphics[width=0.16\textwidth]{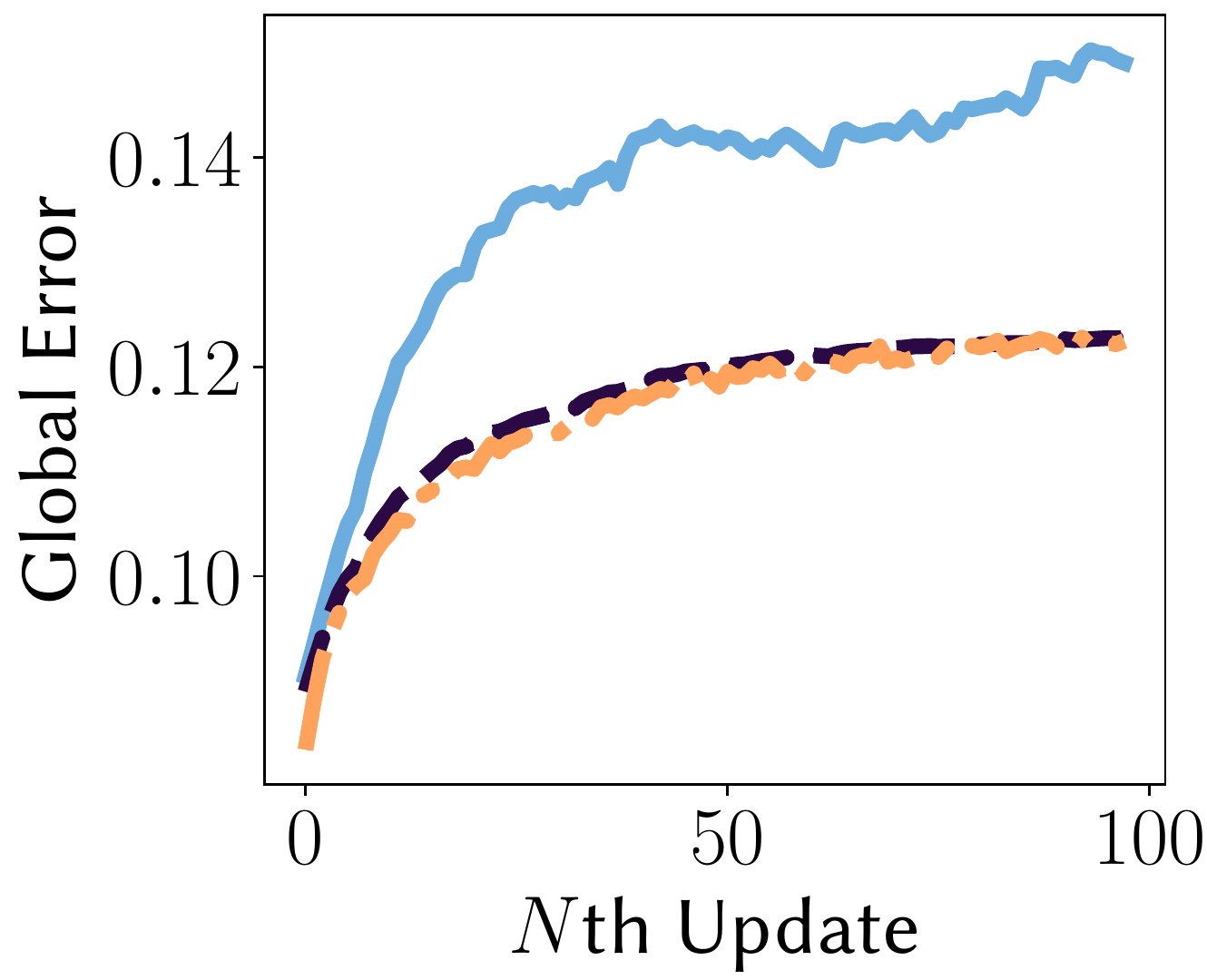}\label{fig:globalerror_chn}}	
	 \subfloat[Global Error on VicRoads]{\includegraphics[width=0.16\textwidth]{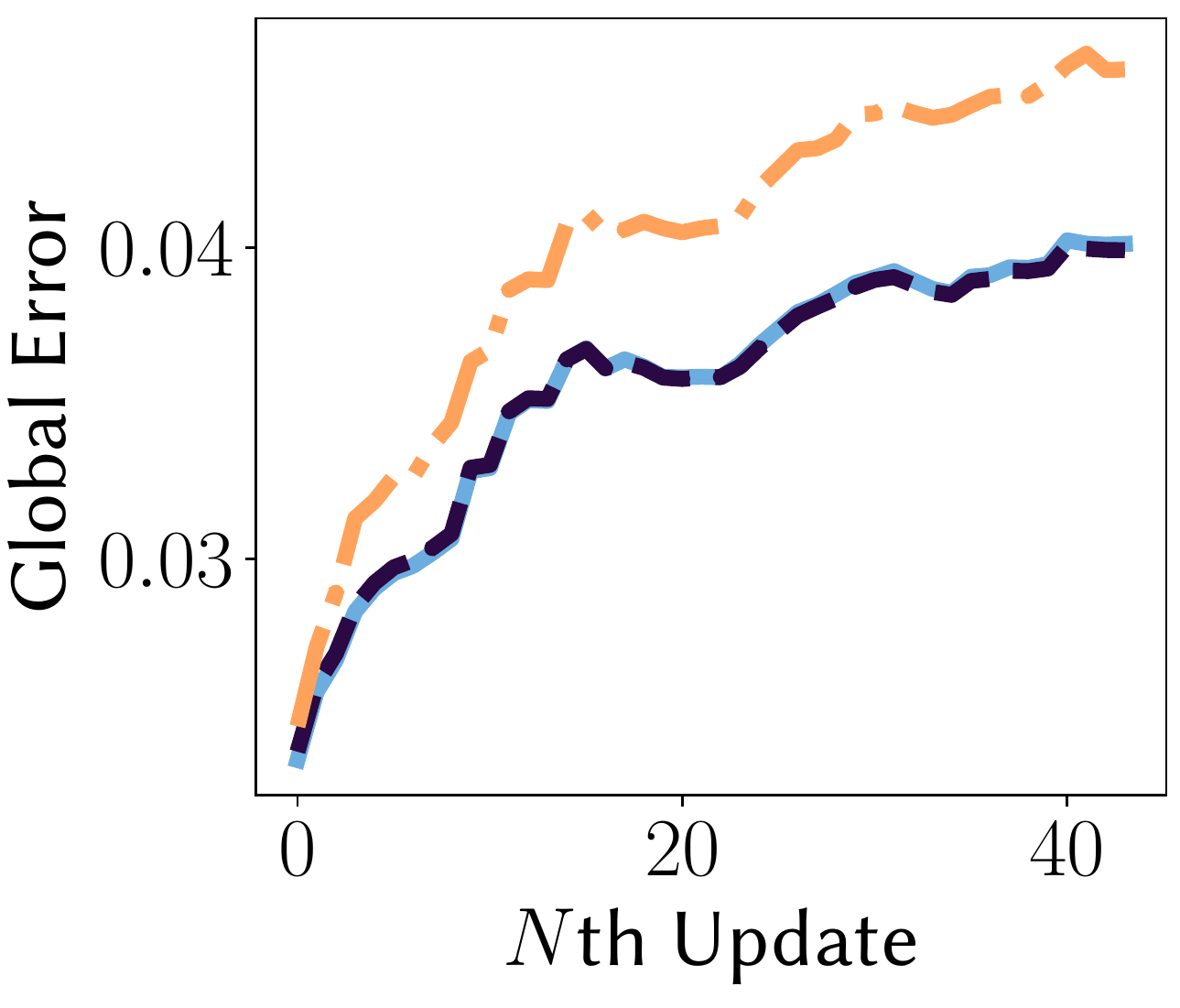}\label{fig:globalerror_traffic}}
	 \subfloat[Global Error on PEMS]{\includegraphics[width=0.16\textwidth]{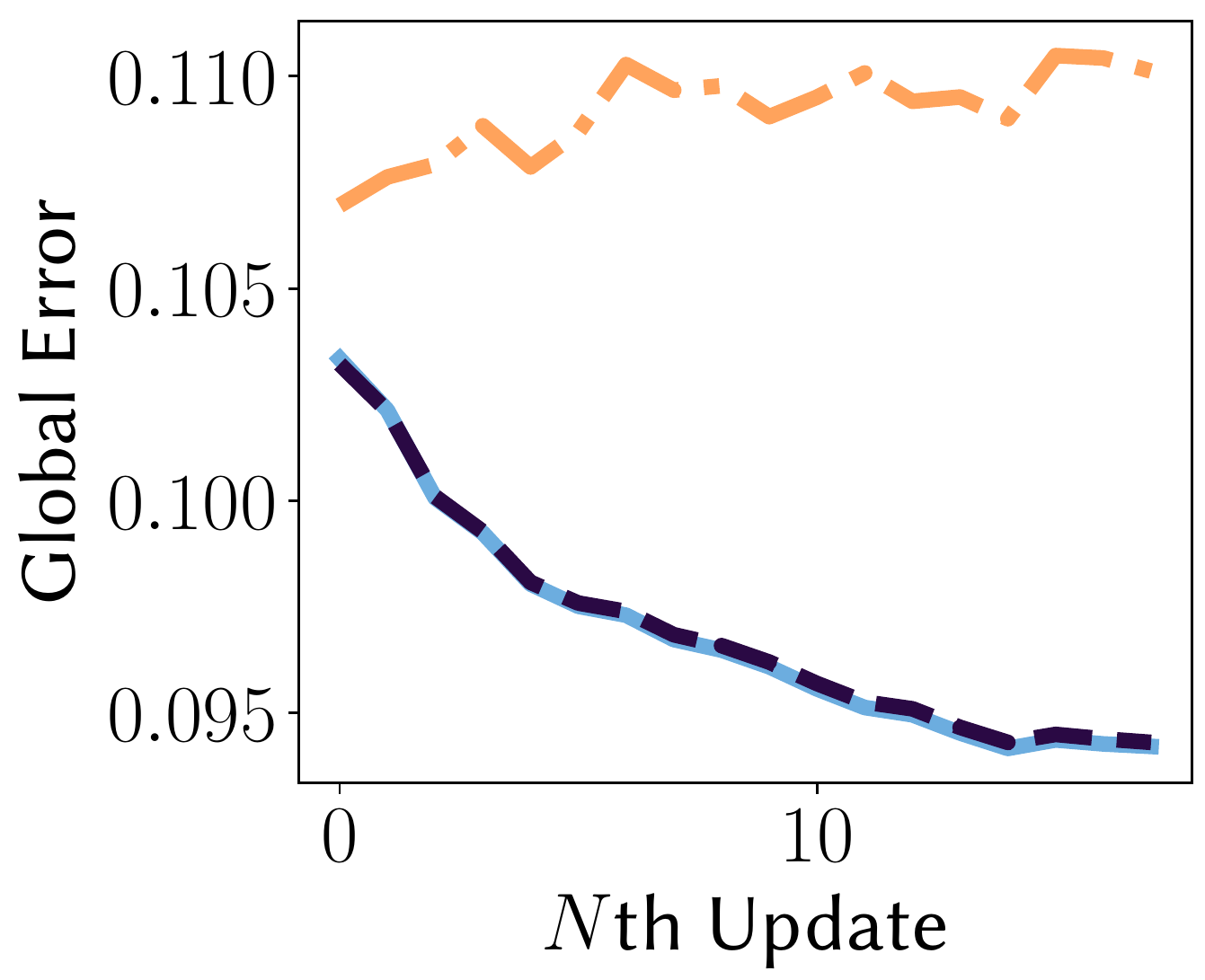}\label{fig:globalerror_pems}}	\\	 	
	\caption{Global error comparison between \method and \spartan.
	The other competitors have similar or high errors than that of \spartan.
	Although \method updates factor matrices using newly arrived data and pre-existing factor matrices, the global errors of \method with $\lambda = 1$ are lower than or equal to those of \spartan.
	}
	\label{fig:global_error}
\end{figure*}

\begin{figure}
	\centering	
	 \subfloat[Data Size in US Stock data]{\includegraphics[width=0.23\textwidth]{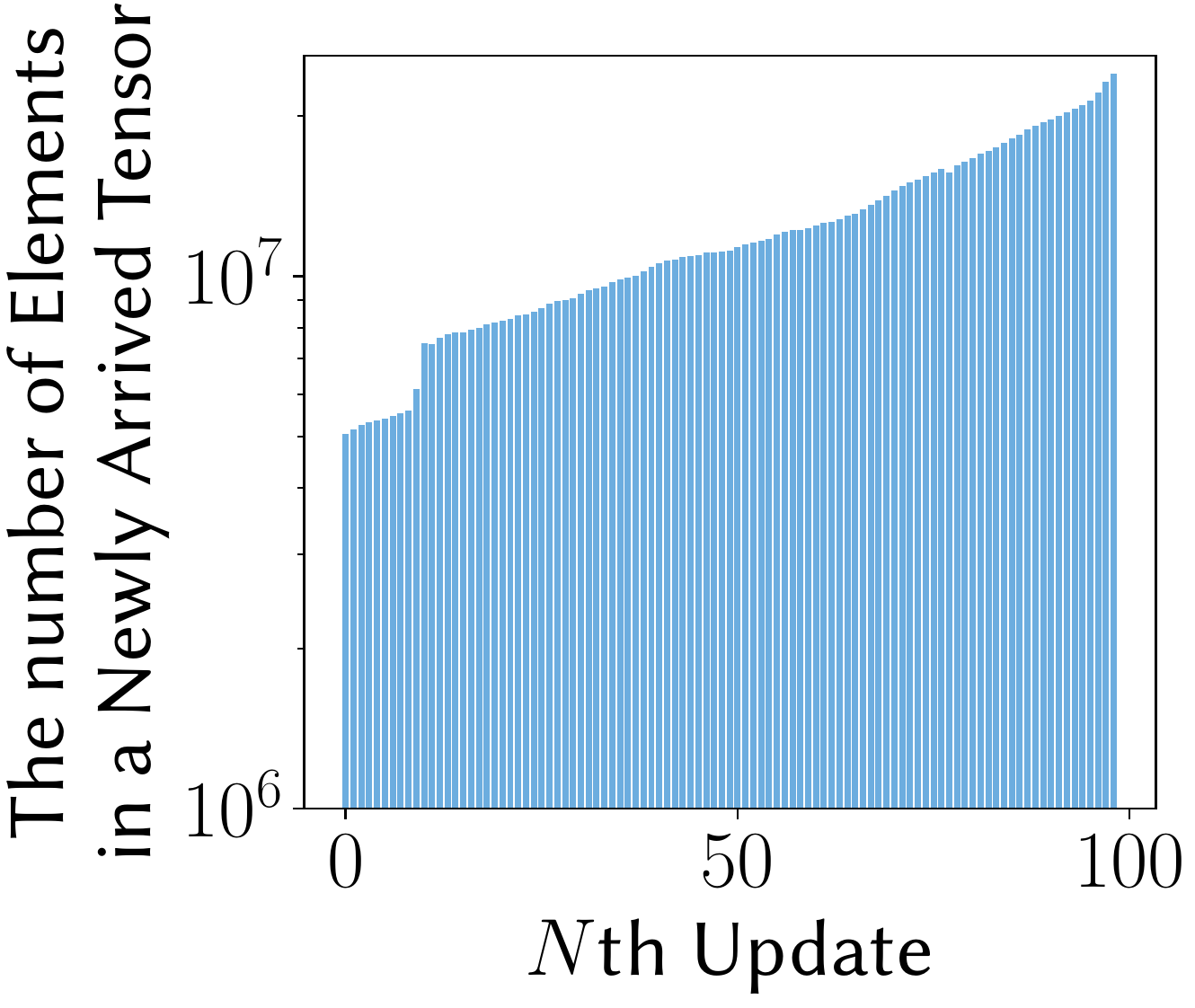}\label{fig:data_size_us}}	
	 \subfloat[Data Size in KR Stock data]{\includegraphics[width=0.23\textwidth]{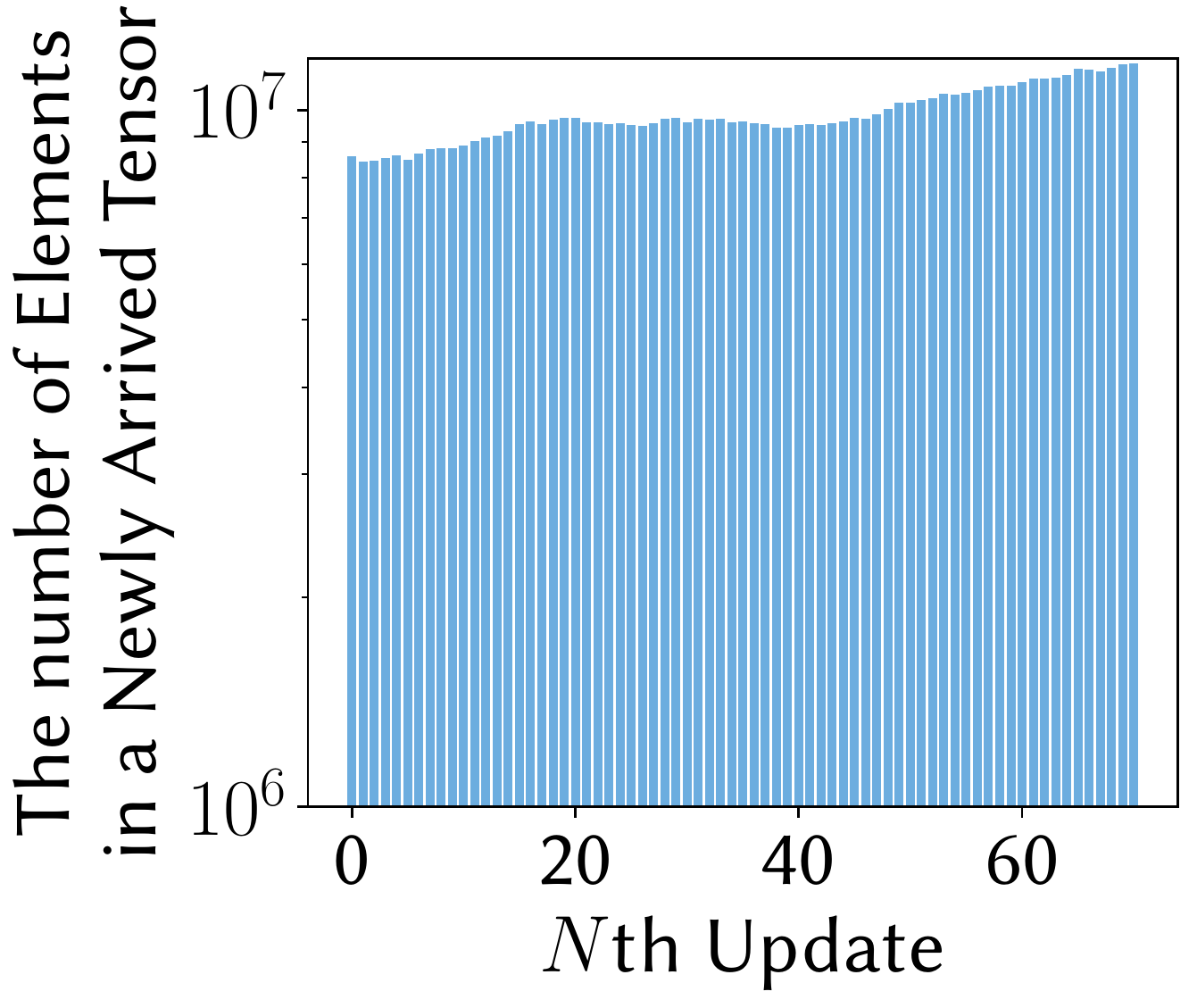}\label{fig:data_size_kr}}	
	 	 \\
	\caption{
	The number of elements in newly arrived data increases over updates on two datasets.
	}
	\label{fig:data_size}
\end{figure}

\section{Proofs}
We provide proofs of Lemmas and Theorem described in Section~\ref{sec:proposed}.

\subsection{Proof of Lemma~\ref{lemma:update_u}}
\label{subsec:proof_lemmaU}
\begin{proof}
	To obtain $\mat{U}_{k,new}$, we first derive $\frac{\partial \T{L}_{\mat{U}_{k,new}}}{\partial \mat{U}_{k,new}}$ as follows:
	\begin{align}
		\frac{\partial \T{L}_{\mat{U}_{k,new}}}{\partial \mat{U}_{k,new}} = -2\left( \mat{X}_{k,new}\mat{V}\mat{S}_k - \mat{U}_{k,new}\mat{S}_k\mat{V}^T\mat{V}\mat{S}_k \right)
	\end{align}
where $\T{L}_{\mat{U}_{k,new}}$ is described in Eq.~\eqref{eq:new_lossU}.
Then, we set $\frac{\partial \T{L}_{\mat{U}_{k,new}}}{\partial \mat{U}_{k,new}}$ to zero, and then arrange the terms by considering whether the terms includes $ \mat{U}_{k,new}$.
\begin{align}
	\mat{U}_{k,new}\mat{S}_k\mat{V}^T\mat{V}\mat{S}_k = \mat{X}_{k,new}\mat{V}\mat{S}_k
\end{align}

We finally obtain Eq.~\eqref{eq:updateU_lemma} by multiplying the inverse term $(\mat{S}_k\mat{V}^T\mat{V}\mat{S}_k)^{-1}$ to both sides.
\end{proof}

\subsection{Proof of Lemma~\ref{lemma:update_s}}
\label{subsec:proof_lemmaS}
\begin{proof}
	To obtain $\mat{W}(k,:)$, we first re-express the loss function:
	\begin{align}
		&\T{L} = \lambda \sum_{k=1}^{K} {\| vec(\mat{X}_{k,old})^T - \mat{W}(k,:)(\mat{V} \odot \mat{U}_{k,old})^T \|_F^2} \\
		& + \sum_{k=1}^{K+L} {\| vec(\mat{X}_{k,new})^T - \mat{W}(k,:)(\mat{V} \odot \mat{U}_{k,new})^T \|_F^2}
	\end{align}
	Then, we derive $\frac{\partial \T{L}_{\mat{W}(k,:)}}{\partial \mat{W}(k,:)}$ from the above function as follows:
	\begin{align}
	\begin{split}
		&\frac{\partial \T{L}_{\mat{W}(k,:)}}{\partial \mat{W}(k,:)} = -2\lambda(\mat{V} \odot \mat{U}_{k,old})^T (vec(\mat{X}_{k,old}) - (\mat{V} \odot \mat{U}_{k,old})\mat{W}(k,:)^T) \\
		& -2(\mat{V} \odot \mat{U}_{k,new})^T (vec(\mat{X}_{k,new}) - (\mat{V} \odot \mat{U}_{k,new})\mat{W}(k,:)^T) \\
		& = -2 \left(\lambda(\mat{V} \odot \mat{U}_{k,old})^T vec(\mat{X}_{k,old}) + (\mat{V} \odot \mat{U}_{k,new})^T vec(\mat{X}_{k,new})\right) \\
		& + 2 \bigg( \lambda(\mat{V} \odot \mat{U}_{k,old})^T(\mat{V} \odot \mat{U}_{k,old})\mat{W}(k,:)^T \\
		& + (\mat{V} \odot \mat{U}_{k,new})^T(\mat{V} \odot \mat{U}_{k,new})\mat{W}(k,:)^T \bigg)
		\end{split}
	\end{align}
	Then, we arrange the terms after setting $\frac{\partial \T{L}_{\mat{W}(k,:)}}{\partial \mat{W}(k,:)}=0$:
	\begin{align}
	\begin{split}
		& 2 \left( \lambda(\mat{V}^T\mat{V} * \mat{U}_{k,old}^T\mat{U}_{k,old}) + (\mat{V}^T\mat{V} * \mat{U}_{k,new}^T\mat{U}_{k,new})\right)\mat{W}(k,:)^T \\
		& = 2 \left(\lambda(\mat{V} \odot \mat{U}_{k,old})^T vec(\mat{X}_{k,old}) + (\mat{V} \odot \mat{U}_{k,new})^T vec(\mat{X}_{k,new})\right) \\
		\end{split}
	\end{align}	
	where $(\mat{A} \odot \mat{B})^T(\mat{A} \odot \mat{B})$ is equal to $(\mat{A}^T\mat{A} * \mat{B}^T\mat{B})$.
	We obtain Eq.~\eqref{eq:updateS_lemma} by multiplying the inverse term of $\lambda(\mat{V}^T\mat{V} * \mat{U}_{k,old}^T\mat{U}_{k,old}) + (\mat{V}^T\mat{V} * \mat{U}_{k,new}^T\mat{U}_{k,new})$ in both sides.
\end{proof}

\subsection{Proof of Lemma~\ref{lemma:update_v}}
\label{subsec:proof_lemmaV}
\begin{proof}
	To obtain $\mat{V}$, we derive $\frac{\partial \T{L}_{\mat{V}}}{\partial \mat{V}}$ as follows:
	\begin{align}
	\begin{split}
		\frac{\partial \T{L}_{\mat{V}}}{\partial \mat{V}}
		& = -2\lambda\sum_{k=1}^{K}{\left(\mat{X}_{k,old}^T\mat{U}_{k,old}\mat{S}_{k} - \mat{V}\mat{S}_{k}\mat{U}_{k,old}^T\mat{U}_{k,old}\mat{S}_{k}\right)} \\
		& -2\sum_{k=1}^{K+L}{\left(\mat{X}_{k,new}^T\mat{U}_{k,new}\mat{S}_{k} - \mat{V}\mat{S}_{k}\mat{U}_{k,new}^T\mat{U}_{k,new}\mat{S}_{k}\right)} \\
		& = -2 \left(\lambda\sum_{k=1}^{K}{\mat{X}_{k,old}^T\mat{U}_{k,old}\mat{S}_{k}} + \sum_{k=1}^{K+L}{\mat{X}_{k,new}^T\mat{U}_{k,new}\mat{S}_{k}} \right) \\
		& + 2\mat{V}\left(\lambda\sum_{k=1}^{K}{\mat{S}_{k}\mat{U}_{k,old}^T\mat{U}_{k,old}\mat{S}_{k}} + \sum_{k=1}^{K+L}{\mat{S}_{k}\mat{U}_{k,new}^T\mat{U}_{k,new}\mat{S}_{k}} \right)
	\end{split}		
	\end{align}
	Then, we set $\frac{\partial \T{L}_{\mat{V}}}{\partial \mat{V}} = 0$, and then arrange the terms as follows:
	\begin{align}
		\begin{split}
			& \mat{V}\left(\lambda\sum_{k=1}^{K}{\mat{S}_{k}\mat{U}_{k,old}^T\mat{U}_{k,old}\mat{S}_{k}} +\sum_{k=1}^{K+L}{\mat{S}_{k}\mat{U}_{k,new}^T\mat{U}_{k,new}\mat{S}_{k}} \right) \\
			& = \left(\lambda\sum_{k=1}^{K}{\mat{X}_{k,old}^T\mat{U}_{k,old}\mat{S}_{k}} + \sum_{k=1}^{K+L}{\mat{X}_{k,new}^T\mat{U}_{k,new}\mat{S}_{k}} \right)
		\end{split}
	\end{align}
	We obtain Eq.~\eqref{eq:updateV_lemma} by multiplying the inverse term of $\lambda$ $\sum_{k=1}^{K}\mat{S}_{k}\mat{U}_{k,old}^T$ $\mat{U}_{k,old}\mat{S}_{k} +\sum_{k=1}^{K+L}{\mat{S}_{k}\mat{U}_{k,new}^T\mat{U}_{k,new}\mat{S}_{k}}$ in both sides.
\end{proof}

\subsection{Proof of Theorem~\ref{theorem:time_complexity}}
\label{subsec:proof_time_complexity}
\begin{proof}
The overall time complexity of \method is proportional to the summation of the computational costs related to Eq.~\eqref{eq:updateU_lemma},~\eqref{eq:updateS_lemma}, and~\eqref{eq:updateV_lemma}.
Updating $\mat{U}_{k,new}$ for all $k$ takes $\T{O}(JR\sum_{k=1}^{K+L}{I_{k,new}})$ time since the dominant term is $\mat{X}_{k,new}\mat{V}$ in Eq.~\eqref{eq:updateU_lemma} where the sizes of $\mat{X}_{k,new}$ and $\mat{V}$ are $I_{k,new}\times J$ and $J \times R$, respectively.
In Eq.~\eqref{eq:updateS_lemma}, the dominant term $vec(\mat{X}_{k,new})^T(\mat{V} \odot \mat{U}_{k,new})$ requires $\T{O}(I_{k,new}JR)$, and thus updating $\mat{S}_{k}$ takes $\T{O}(JR\sum_{k=1}^{K+L}{I_{k,new}})$.
Updating $\mat{V}$ requires $\T{O}(JR\sum_{k=1}^{K+L}{I_{k,new}})$ since the dominant term in Eq.~\eqref{eq:updateV_lemma} is $\sum_{k=1}^{K+L}$ ${\mat{X}_{k,new}^T\mat{U}_{k,new}}$ where the sizes of $\mat{X}_{k,new}$ and $\mat{U}_{k,new}$ are $I_{k,new}\times J$ and $I_{k,new} \times R$, respectively.
Hence, the overall time complexity is $\T{O}(JR\sum_{k=1}^{K+L}{I_{k,new}})$.
\end{proof}

\hide{

\begin{table*}[t!]
\caption{
Additional analysis with ChatGPT.
ChatGPT answers three queries for anomalies detected in Section~\ref{subsec:experim_anomaly}.
}
	\resizebox{0.999\textwidth}{!}{%
\begin{tabular}{lll}
\toprule
\textbf{Cycle} & \textbf{Query} & \textbf{Answer} \\
\midrule
 Oct. 29, 1999 - Jan. 25, 2000     &  \begin{tabular}[c]{@{}l@{}} ``US stock market between \\ Oct. 29, 1999 and Jan 25, 2000" \end{tabular}     &
 \begin{tabular}[c]{@{}l@{}}
 ``Between October 29, 1999, and January 25, 2000, \\
 the US stock market experienced significant growth. The overall trend was positive,\\
  with the major indices such as the S\&P 500 \\
 and the NASDAQ Composite reaching new all-time highs during this period. \\
 This was due to a combination of factors,including strong economic growth, low unemployment, \\
 and high levels of consumer confidence. \\
 Additionally, the dot-com bubble was in full swing during this period, \\
 and investors were optimistic about the future prospects of technology companies. \\
 As a result, many technology stocks, such as internet and software companies,\\
  saw significant increases in their stock prices.
However, this growth was not sustainable,\\
 and the market eventually began to correct itself. \\
 The dot-com bubble burst, and many technology stocks saw their prices decline,\\
  leading to a bear market in early 2000. \\
  Nevertheless, the US stock market still saw overall growth during this period,\\
  with the S\&P 500 and the NASDAQ Composite both rising \\
  by around 20\% between October 29, 1999, and January 25, 2000." \end{tabular} \\
   \midrule
 Nov. 21, 2008 - Feb. 19, 2009      &   \begin{tabular}[c]{@{}l@{}} ``US stock market between \\ Nov. 21, 2008 and Feb. 19, 2009
" \end{tabular}    &    \begin{tabular}[c]{@{}l@{}}
``Between November 21, 2008 and February 19, 2009, the US stock market experienced significant decline.\\
 The overall trend was negative, with the major indices \\
 such as the S\&P 500 and the Dow Jones Industrial Average reaching new lows during this period. \\
 This was due to a combination of factors, \\
 including the global financial crisis, rising unemployment, and declining consumer confidence. \\
 The crisis, which began in the housing market, had a ripple effect on the financial sector and the wider economy, \\
 leading to a major sell-off in the stock market.
During this period, many stocks, including those\\
 in the financial and housing sectors, saw significant declines in their stock prices. \\
 The S\&P 500 fell by around 20\% and the Dow Jones Industrial Average \\
 declined by around 25\% between November 21, 2008 and February 19, 2009.\\
  This period was one of the most challenging times for the stock market and investors, \\
   as they were dealing with a widespread loss of confidence and uncertainty about the future. \\
    However, with the help of government intervention and monetary policy,\\
     the market eventually began to recover and the US stock market has seen strong growth since then."
 \end{tabular}    \\
 \midrule
 May 4, 2020 - July 28, 2020     &  \begin{tabular}[c]{@{}l@{}} ``US stock market between \\ May 4, 2020 and July 28, 2020" \end{tabular}       &
 \begin{tabular}[c]{@{}l@{}}
 ``Between May 4, 2020 and July 28, 2020, \\
 the US stock market, as represented by the S\&P 500 index, \\
 experienced a significant recovery from the market sell-off \\
 caused by the COVID-19 pandemic. \\
  The S\&P 500 index reached a low of 2,237.40 on March 23rd, 2020, \\
  before reaching a high of 3,386.15 on July 20th, 2020. \\
  On July 28th, 2020 the index was at 3,226.90. \\
   This represents an increase of over 43\%. \\
   The market rebound was driven by a combination of factors, \\
   including the Federal Reserve's monetary policy actions, \\
   progress in developing COVID-19 vaccines, and positive economic data. \\
   However, the market also remained volatile during this time period \\
   as it was recovering from the bear market caused by the pandemic." \end{tabular}     \\
\bottomrule
\end{tabular}}
\label{tab:chatgpt_result}
\end{table*}

}

\section{Size of Newly Arrived Data}
\label{appendix:tensor_size}

Figure~\ref{fig:data_size} shows the size of newly arrived data at each update for US and KR Stock datasets.
In the two datasets, the size of newly arrived data increases over updates.
This is because new slice matrices come in addition to new rows of existing matrices.
In the other datasets, the size of newly arrived data is almost the same since only new rows of existing slice matrices arrive.

\section{Implementation of \spade}
\label{appendix:implement_spade}
Since \spade is designed only for handling new slice matrices, it is limited to directly use the implementation code of \spade in a dual-way setting.
Therefore, we modify the code of \spade to work in the setting.
\spade updates factor matrices with two steps:
1) updates factor matrices for new rows of existing slice matrices, and
2) updates factor matrices for new slice matrices.
We use the initialization code of \spade and the update code of \spade for the first and the second steps, respectively.
Note that the initialization code performs PARAFAC2 decomposition for a tensor consisting of accumulated existing slice matrices.

\section{Global Error Comparison}
\label{appendix:errors}
We compare global errors of \method with those of \spartan.
The other competitors have similar or high errors than that of \spartan.
Figure~\ref{fig:global_error} shows that \method has competitive global errors to \spartan.
When $\lambda$ is equal to $0.7$, \method has lower global errors than \spartan on US Stock, VicRoads, and PEMS datasets while \method has higher global errors than \spartan on KR Stock, JPN Stock, and CHN Stock datasets.
When $\lambda$ is equal to $1.0$, the global errors of \method are lower than or equal to those of \spartan.
If we consider fitting factor matrices more to an accumulated tensor, we use a large forgetting factor approximately equal to $1$.


\hide{

\section{Additional Analysis with ChatGPT}
\label{appendix:chatgpt}

We further analyze the tensor-level anomalies described in Section~\ref{subsec:experim_anomaly} using
ChatGPT, a large language model for conversation with users.
Table~\ref{tab:chatgpt_result} shows the ChatGPT\footnoteref{footnote:chatgpt}'s answers to the questions about the anomalies detected by \method.
For the first anomaly, we ask a question with the following:
\blue{
``US stock market between Oct. 29, 1999 and Jan 25, 2000".	
As analyzed in Section~\ref{subsec:experim_anomaly}, ChatGPT generates sentences that include the following keywords:  ``significant growth", ``internet and software companies", and ``dot-com bubble".
The second question is about the second anomaly with the following:
``US stock market between Nov. 21, 2008 and Feb 19, 2009".
ChatGPT generates sentences that include the following keywords: ``a significant decline", ``global finance crisis", and ``government intervention and monetary policy".
The last question is ``US stock market between May 4, 2020 and July 28, 2020".
ChatGPT answers the last question with the following keywords: ``COVID-19 pandemic", ``significant recovery", and ``Federal Reserve's monetary policy actions".
}

}

%

\end{document}